\newcommand{\opnorm}[1]{\left\|#1\right\|}
\newcommand{\twonorm}[1]{\left\|#1\right\|_{2}}
\newcommand{\abs}[1]{\left|#1\right|}
\newcommand{\vct}[1]{\bm{#1}}
\newcommand{\mtx}[1]{\bm{#1}}
\newcommand{\relu}{\mathsf{ReLU}}
\newcommand{\C}{{\cal C}}
\newcommand{\R}{\mathbb{R}}
\newcommand{\Z}{\mathbb{Z}}
\newcommand{\E}{\mathbb{E}}
\newcommand{\eps}{\epsilon}
\newcommand{\pr}{\mathrm{Pr}}
\newcommand{\poly}{\mathrm{poly}}
\newcommand{\opt}{\mathsf{opt}}
\newcommand{\D}{\mathcal{D}}
\newcommand{\littlesum}{\mathop{\textstyle \sum}}
\newcommand{\bx}{\vct{x}}
\newcommand{\x}{\vct{x}}
\newcommand{\bu}{\vct{u}}
\newcommand{\bz}{\vct{z}}
\newcommand{\bw}{\vct{w}}
\newcommand{\bv}{\vct{v}}
\newcommand{\vol}{\mathsf{vol}}
\newcommand{\chow}{\mathsf{chow}}
\newcommand{\sur}{\mathsf{surr}}
\newcommand{\X}{\mathcal{X}}
\newcommand{\Y}{\mathcal{Y}}
\newtheorem{theorem}{Theorem}[section]
\newtheorem{remark}{Remark}
\newtheorem{question}{Question}
\newtheorem{lemma}[theorem]{Lemma}
\newtheorem{corollary}[theorem]{Corollary}
\newtheorem{definition}[theorem]{Definition}
\newtheorem{fact}{Fact}
\newcommand{\iprod}[1]{\langle#1\rangle}
\newcommand{\iprodtwo}[2]{\left\langle {#1}, {#2}\right\rangle}
\newcommand{\eqdef}{\stackrel{{\mathrm {\footnotesize def}}}{=}}
\providecommand*{\boxast}{%
  \mathbin{% as \boxplus and \boxtimes
    \mathpalette\@boxit{*}%
  }%
}
\newcommand*{\@boxit}[2]{%
  % #1: math style (\displaystyle, \textstyle, ...)
  % #2: symbol to be boxed that is centered around the math axis
  \sbox0{$\m@th#1\Box$}%
  % Manual correction for font bounding boxes:
  \ifx#1\displaystyle \ht0=\dimexpr\ht0+.05ex\relax \fi
  \ifx#1\textstyle \ht0=\dimexpr\ht0+.05ex\relax \fi
  \ifx#1\scriptstyle \ht0=\dimexpr\ht0+.04ex\relax \fi
  \ifx#1\scriptscriptstyle \ht0=\dimexpr\ht0+.065ex\relax \fi
  \sbox2{$#1\vcenter{}$}% \ht2 is positionn of math axis
  \rlap{%
    \hbox to \wd0{%
      \hfill
      \raisebox{%
        \dimexpr.5\dimexpr\ht0+\dp0\relax-\ht2\relax
      }{$\m@th#1#2$}%
      \hfill
    }%
  }%
  \Box
}
\def\BState{\State\hskip-\ALG@thistlm}
\title{Approximation Schemes for ReLU Regression}
\author[1]{Ilias Diakonikolas\footnote{ilias@cs.wisc.edu}}
\author[2]{Surbhi Goel\footnote{surbhi@cs.utexas.edu}}
\author[2]{Sushrut Karmalkar\footnote{susrutk@cs.utexas.edu}}
\author[2]{Adam R. Klivans\footnote{klivans@cs.utexas.edu}}
\author[3]{Mahdi Soltanolkotabi\footnote{soltanol@usc.edu}}
\affil[1]{University of Wisconsin, Madison}
\affil[2]{University of Texas at Austin}
\affil[3]{University of Southern California}
\date{}
\begin{document}

\maketitle

\begin{abstract}%
We consider the fundamental problem of ReLU regression, where the goal is to output the best fitting ReLU with respect to square loss given access to draws from some unknown distribution.  We give the first efficient, constant-factor approximation algorithm for this problem assuming the underlying distribution satisfies some weak concentration and anti-concentration conditions (and includes, for example, all log-concave distributions).  This solves the main open problem of Goel et al., who proved hardness results for any exact algorithm for ReLU regression (up to an additive $\epsilon$). Using more sophisticated techniques, we can improve our results and obtain a polynomial-time approximation scheme for any subgaussian distribution.  Given the aforementioned hardness results, these guarantees can not be substantially improved.
	
Our main insight is a new characterization of {\em surrogate losses} for nonconvex activations.  While prior work had established the existence of convex surrogates for monotone activations, we show that properties of the underlying distribution actually induce strong convexity for the loss, allowing us to relate the global minimum to the activation's {\em Chow parameters.}
\end{abstract}

\setcounter{page}{0}
\thispagestyle{empty}
\newpage

\section{Introduction} \label{sec:intro}
%\subsection{Background} \label{ssec:background}
%In this paper, we show strong recovery guarantees for the problem of finding the best fitting rectified linear unit (ReLU) to noisy data with respect to the square loss. \MS{The next sentence looks like a repetition of the last one. I am not sure of the original intention so will not touch} Referred to the problem of {\em agnostically} 
Finding the best-fitting ReLU with respect to square-loss -- also called ``ReLU Regression'' -- is a fundamental primitive in the theory of neural networks.  Many authors have recently studied the problem both in terms of finding algorithms that succeed under various assumptions and proving hardness results \citep{MR18,Mahdi17,GKK19-meurips,YS19,GKKT17, MR18}.  In this work, we consider the agnostic model of learning where {\em no assumptions} are made on the noise.

%and give a 
%we give best-possible guarantees on accuracy results under mild distributional %assumptions. 

%to give the best known guarantees for the recovered hypothesis in one of the hardest noise settings -- that of agnostic learning. In this setting there are  no assumptions on the labels and the objective is to find a hypothesis that is competent with the {\em best fitting} concept. 

Recall the ReLU function $\relu_{\vct{w}}: \R^d \rightarrow \R$
parameterized by $\vct{w}$ is defined as 
$\relu_{\vct{w}}(\vct{x}) := \relu(\iprod{ \vct{w}, \vct{x} } ) = \max \left \{ 0, \iprod{ \bw, \vct{x} }  \right \}$ 
(for simplicity, let $\|\vct{w}\|_2 \leq 1)$. Given samples $(\vct{x}, y)$ drawn from a distribution $\D$ over $\R^d \times \R$, 
the objective of the learner is to find a hypothesis $h : \R^d \rightarrow \R$ that has square loss at most $\opt + \eps$, 
where $\opt \ll 1$ is defined to be the loss of the best fitting ReLU, i.e., 
\[
\opt := \min_{ \vct{w} \in \R^d } \E_{\D} \left[ \left( \relu(\iprod{ \vct{w}, \vct{x} } ) - y\right)^2  \right] \;.
\]
There are several hardness results known for this problem.
A recent result shows that finding a hypothesis achieving a loss of $O(\opt) + \eps$ is NP-hard when there are no distributional assumptions on $\D_\X$, the marginal of $\D$ on the examples~\citep{MR18}. 
Recent work due to \cite{GKK19-meurips} gives hardness results for achieving error $\opt + \eps$, 
even if the underlying distribution is the standard Gaussian.
% In the presence of strong distributional assumptions, it is still not possible to solve the ReLU regression problem in strictly polynomial time: \cite{GKK19-meurips} show that even if the underlying distribution is {\em Gaussian}, a polynomial-time algorithm achieving error $\opt + \eps$ would imply an algorithm for learning sparse parity with noise, a notoriously hard problem from computational learning theory.
This work also provides an algorithm that achieves error $O(\opt^{2/3}) + \epsilon$ 
under the assumption that $\D_\X$ is log-concave. 
The main problem open problem posed by \cite{GKK19-meurips} is the following:
\begin{question}
For the problem of ReLU regression, is it possible to recover a hypothesis achieving error 
of $O(\opt) + \eps$ in time $\poly(d, 1/\eps)$?
\end{question}
In this paper we answer this question in the affirmative. 
Specifically, we show that there is a fully polynomial time algorithm 
which can recover a vector $\vct{w}$ such that the loss of the corresponding ReLU function, 
$\relu_{\vct{w}}$, is at most $O(\opt) + \eps$. %Note that our hypothesis in this case is proper, i.e. belongs to the concept class.
More formally, we prove the following:
\begin{theorem}\label{thm:informal_constfact}
If $\D_\X$ is isotropic log-concave, there is an algorithm that takes $\tilde{O}(d/ \epsilon^2)$ samples 
and runs in time $\tilde{O}(d^2 / \eps^2)$ and returns a vector $\vct{w}$ such that $\relu_{\vct{w}}$ 
has square loss $O(\opt) + \eps$ with high probability. 
\end{theorem}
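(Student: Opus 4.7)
The plan is to reduce ReLU regression to convex optimization via a surrogate whose stationary condition matches the Chow parameters of the regressor to those of the label, and to argue that log-concavity of $\D_\X$ forces the surrogate to be strongly convex, pinning its minimizer close to the true best-fit parameter $\vct{w}^*$. The natural surrogate is
\[
\sur(\vct{w}) \;=\; \E_{(\vct{x},y)\sim\D}\!\left[\tfrac{1}{2}\,\relu(\iprod{\vct{w},\vct{x}})^2 \;-\; y\,\iprod{\vct{w},\vct{x}}\right],
\]
which is convex since $z\mapsto\relu(z)^2/2$ is convex, and whose gradient is $\nabla\sur(\vct{w}) = \E[(\relu(\iprod{\vct{w},\vct{x}}) - y)\vct{x}]$. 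The first-order condition at a minimizer $\vct{w}^\sur$ is precisely the Chow-matching equality $\chow(\relu_{\vct{w}^\sur}) = \E[y\,\vct{x}]$.

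The core technical step is upgrading convexity to strong convexity. Since $\nabla^2\sur(\vct{w}) = \E[\vct{x}\vct{x}^\top\,\Ind[\iprod{\vct{w},\vct{x}}>0]]$, I would show
\[
\E\!\left[\iprod{\vct{u},\vct{x}}^2\,\Ind[\iprod{\vct{w},\vct{x}}>0]\right] \;\geq\; \mu
\]
for an absolute constant $\mu>0$, uniformly in unit $\vct{u}$ and all $\vct{w}$ in the unit ball. This reduces to a statement about the joint marginal of $(\iprod{\vct{u},\vct{x}},\iprod{\vct{w}/\|\vct{w}\|_2,\vct{x}})$, which is itself isotropic log-concave on $\R^2$, and would follow by combining anti-concentration of the direction defining the indicator (a log-concave marginal places constant mass on any half-space through the origin) with a concentration bound showing that $\iprod{\vct{u},\vct{x}}^2$ retains constant conditional second moment on that half-space. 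I expect this to be the main obstacle, since it must hold uniformly in $\vct{w}$ and the two projections are correlated in general, so one has to rule out configurations in which $\iprod{\vct{u},\vct{x}}$ is atypically small precisely where $\iprod{\vct{w},\vct{x}}>0$.

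Given $\mu$-strong convexity, set $\xi = y - \relu(\iprod{\vct{w}^*,\vct{x}})$ so that $\E[\xi^2]=\opt$ and $\nabla\sur(\vct{w}^*) = -\E[\xi\,\vct{x}]$. Cauchy-Schwarz together with isotropicity yields
\[
\|\nabla\sur(\vct{w}^*)\|_2 \;=\; \sup_{\|\vct{u}\|_2=1}\!|\E[\xi\,\iprod{\vct{u},\vct{x}}]| \;\leq\; \sqrt{\opt},
\]
so strong convexity gives $\|\vct{w}^\sur - \vct{w}^*\|_2 \leq \sqrt{\opt}/\mu$. Using $1$-Lipschitzness of $\relu$ and isotropicity once more, $\E[(\relu(\iprod{\vct{w}^\sur,\vct{x}}) - \relu(\iprod{\vct{w}^*,\vct{x}}))^2] \leq \|\vct{w}^\sur - \vct{w}^*\|_2^2 \leq \opt/\mu^2$, and a triangle-inequality expansion of the square loss then gives $L(\vct{w}^\sur) \leq 2\,\opt/\mu^2 + 2\,\opt = O(\opt)$.

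Finally, to make the guarantee efficient, I would run projected SGD on the empirical surrogate over the unit ball, with unbiased stochastic gradient $(\relu(\iprod{\vct{w},\vct{x}})-y)\vct{x}$ of second moment $O(d)$ under isotropicity. Standard bounds for $\mu$-strongly convex SGD combined with uniform convergence for the Lipschitz surrogate class produce $\hat{\vct{w}}$ with $\|\hat{\vct{w}}-\vct{w}^\sur\|_2^2 = O(\eps)$ within the stated $\tilde{O}(d/\eps^2)$ sample and $\tilde{O}(d^2/\eps^2)$ time budget. Rechaining through the same triangle inequality as above then yields ReLU square loss at most $O(\opt) + \eps$, as claimed.
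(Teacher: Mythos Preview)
Your proposal is correct and follows essentially the same approach as the paper: the same surrogate loss, its gradient as the Chow-parameter mismatch, strong convexity under isotropic log-concavity via a 2D anti-concentration argument, and the chain $\|\nabla\sur(\vct{w}^*)\|_2 \le \sqrt{\opt} \Rightarrow \|\vct{w}^\sur-\vct{w}^*\|_2 \le O(\sqrt{\opt}) \Rightarrow L_\D(\relu_{\vct{w}^\sur}) = O(\opt)$ all appear there. The only differences are cosmetic: the paper proves strong convexity through the first-order monotone inequality $\langle \chi_\D^{\relu_{\vct{v}}}-\chi_\D^{\relu_{\vct{u}}},\vct{v}-\vct{u}\rangle \ge \E[(\relu_{\vct{v}}(\vct{x})-\relu_{\vct{u}}(\vct{x}))^2]$ rather than the Hessian (this sidesteps the non-differentiability at $\vct{w}=0$ and is what you would end up doing anyway once you unroll the 2D argument), and it runs full-batch projected GD with a one-sided empirical-process concentration analysis rather than your SGD, which is what drives the $\tilde O(d/\eps^2)$ sample bound in their statement.
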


% \MS{Reminder re the question marks above}

The sample complexity of our algorithm is nearly linear in the problem dimension 
and hence information-theoretically optimal up to logarithmic factors. 
To establish this near-optimal sample complexity, we leverage intricate tools 
involving uniform {\em one-sided} concentration of empirical processes of log-concave distributions.
% For our result to hold with this sample complexity we prove a novel linear time generalization result for the case when the samples are drawn from a log-concave distribution.

Additionally, we show that under stronger distributional assumptions and if the algorithm is allowed to be improper, 
i.e., if the hypothesis need not be the ReLU of a linear function, then it is possible to return a hypothesis 
that achieves a loss of $(1+\eta) \cdot \opt + \eps$ in polynomial time for any constant $\eta > 0$ as long as $\opt < 1$. 
\begin{theorem}\label{thm:informal_ptas}
If $\D_\X$ is $\nu$-subgaussian for $\nu \leq O(1)$,
%\footnote{Our proof in fact works for any distribution with tails upper bounded by $\exp(-t^{1+\delta})$ for $\delta > 0$.}
then for any constant $\eta > 0$, there is an algorithm with sample complexity and running time 
$O \left( \frac{1}{\eps^2} \cdot \left( \frac{d}{\eta^3 \nu^2} \right)^{1/\eta^3} \right)$ that outputs a hypothesis 
$h : \R^d \rightarrow \R$ whose square loss is at most $(1+\eta) \cdot \opt + \eps$ with high probability. 
\end{theorem}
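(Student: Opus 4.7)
The plan is to apply multivariate $L_2$ polynomial regression of degree $k = \Theta(1/\eta^3)$. Let $\phi_k$ be the feature map sending $\vct{x} \in \R^d$ to its vector of monomials of degree at most $k$, an element of $\R^D$ with $D = \binom{d+k}{k}$. The algorithm draws $m$ samples, runs ordinary least squares over the norm-bounded class $\{\vct{x} \mapsto \iprod{\vct{\beta}, \phi_k(\vct{x})} : \|\vct{\beta}\|_2 \leq B\}$, and outputs the resulting polynomial as $h$. The sample and time complexity $\tilde{O}(D/\eps^2)$ matches the stated bound once $D$ is identified with $(d/(\eta^3\nu^2))^{1/\eta^3}$ (up to rescaling features by $\nu$ to account for the subgaussian norm).

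The heart of the argument is a polynomial approximation claim: there exists a polynomial $\tilde{p}$ of degree at most $k$ satisfying
\[
\E_{\vct{x}\sim\D_\X}\!\left[\left(\tilde{p}(\vct{x}) - \relu(\iprod{\vct{w}^*, \vct{x}})\right)^2\right] \leq \delta,
\]
with $\delta = O(\eps\eta)$. I would construct $\tilde{p}(\vct{x}) = q(\iprod{\vct{w}^*, \vct{x}})$, where $q$ is a univariate polynomial approximator of $\relu$ in $L_2$ under the $\nu$-subgaussian $1$-D marginal of $\iprod{\vct{w}^*, \vct{x}}$. To build $q$, first truncate outside an interval $[-R, R]$ with $R = \Theta(\nu\sqrt{\log(1/\delta)})$, so the subgaussian tail contributes at most $\delta/2$ in $L_2^2$, and then apply a Jackson/Chebyshev polynomial approximation of $\relu$ on $[-R, R]$, exploiting that $\relu$ is $1$-Lipschitz. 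Balancing parameters should yield the target degree $k = \Theta(1/\eta^3)$; the specific structure of $\relu$ (e.g., its Hermite expansion) likely needs to be exploited to improve upon a naive Jackson bound.

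With $\tilde{p}$ in hand, the target population bound follows from the AM-GM expansion: for any $\alpha > 0$,
\[
\E\big[(\tilde{p}(\vct{x}) - y)^2\big] \leq (1+\alpha)\,\E\big[(\relu_{\vct{w}^*}(\vct{x}) - y)^2\big] + (1 + 1/\alpha)\,\E\big[(\tilde{p}(\vct{x}) - \relu_{\vct{w}^*}(\vct{x}))^2\big].
\]
Taking $\alpha = \eta$ yields $(1+\eta)\opt + (1 + 1/\eta)\delta \leq (1+\eta)\opt + \eps$ whenever $\delta \leq \eps\eta/(1+\eta)$. Since $\tilde{p}$ belongs to our hypothesis class, uniform convergence (via standard covering-number or Rademacher-complexity bounds for norm-bounded linear predictors on subgaussian features) transfers this bound to the empirical risk minimizer with $m = \tilde{O}(D/\eps^2)$ samples.

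The main obstacle is the polynomial approximation step: showing that degree $\Theta(1/\eta^3)$ \emph{independent of} $\eps$ suffices to achieve $L_2^2$ error $O(\eps\eta)$. A naive Jackson analysis gives only $k = \tilde{O}(\nu/\sqrt{\eps\eta})$, so closing the gap requires either leveraging the Hermite/Chebyshev structure of $\relu$ under the subgaussian measure, or a sharper two-regime analysis of the cross term based on whether $\opt \lesssim \eps$ or $\opt \gtrsim \eps$ (so that either the additive $\eps$ or the multiplicative $\eta\opt$ slack dominates). A secondary issue is uniform convergence over unbounded responses $y$, which is handled by truncating labels at an appropriate subgaussian threshold before invoking concentration.
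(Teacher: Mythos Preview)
Your proposal has a genuine and fatal gap, which you yourself flag as the ``main obstacle'': for any fixed degree $k$ depending only on $\eta$, the best $L_2^2$ approximation of $\relu$ by degree-$k$ polynomials under a (sub)gaussian marginal is a fixed constant $\tau_k$ depending only on $k$ and the distribution. Concretely, under the standard Gaussian the Hermite tail of $\relu$ satisfies $\sum_{j>k}\widehat{\relu}(j)^2 = \Theta(k^{-3/2})$, so $\tau_k$ is polynomially small in $1/k$ but cannot be driven to $O(\eps\eta)$ for arbitrarily small $\eps$ without letting $k$ depend on $\eps$. Your AM--GM decomposition then yields $(1+\eta)\opt + (1+1/\eta)\tau_k$, and the additive term $(1+1/\eta)\tau_k$ is a fixed function of $\eta$ that does not vanish with $\eps$. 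Neither the Hermite structure nor a two-regime case split on $\opt$ versus $\eps$ can repair this: in the regime $\opt \ll \eps$ you still need the absolute error of $h$ to be $O(\eps)$, and a global degree-$1/\eta^3$ polynomial simply does not achieve that.

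The paper's proof avoids this by \emph{localization}, and this is the missing idea. It first runs the constant-factor algorithm to obtain $\vct{w}$ with $\|\vct{w}-\vct{w}^*\|_2 \leq O(\sqrt{\opt}/\nu)$, then partitions $\R^d$ into three slabs according to $\iprod{\vct{w},\vct{x}}$: a thin band $T=\{|\iprod{\vct{w},\vct{x}}|\le \gamma\sqrt{\opt}\}$ and two half-spaces $T_\pm$. On $T_+$ the best linear function competes with $\relu_{\vct{w}^*}$ up to $O(\eta\,\opt)$, on $T_-$ the zero function does, and only on the band $T$ is degree-$1/\eta^3$ polynomial regression used. The crucial point is that the polynomial approximation error in $T$ is bounded by $O(\eta^2\opt)$, because both the width of the band and the relevant scale of $\iprod{\vct{w}^*,\vct{x}}$ inside it are $O(\sqrt{\opt})$; thus the approximation error automatically scales with $\opt$ rather than being a fixed constant, and the Cauchy--Schwarz cross term contributes $O(\eta\,\opt)$. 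Without this localization step---which requires the prior constant-factor approximation---a degree bound independent of $\eps$ is unattainable.
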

Given the hardness results of \cite{GKK19-meurips}, the aforementioned accuracy guarantees are essentially best-possible.  

% \MS{Reminder re the question marks above}
\subsection{Our Approach} \label{ssec:approach}

A major barrier to minimizing the square loss for the ReLU regression problem is that it is nonconvex.  In such settings, gradient descent-based algorithms can potentially fail due to the presence of poor local minima.  In the case of ReLU regression, the number of these bad local minima for the square loss can be as large as exponential in the dimension \citep{AHW96}. 

Despite this fact, for well-structured noise models, it is possible to learn a ReLU with respect to square loss by applying results on isotonic regression \citep{KS09,KKSK11,KM17}. These results show that if the noise is bounded and has zero mean, it is possible to learn conditional mean functions of the form $\sigma_{\vct{w}}: \vct{x} \mapsto \sigma(\iprod{\vct{w}, \vct{x}})$ where $\sigma$ is a monotone and Lipschitz activation. This is proven via an analysis similar to that of the perceptron algorithm. 
It is not clear, however, how to extend these results to harder noise models. 

In retrospect, one way to interpret the algorithms from \cite{KS09} and \cite{KKSK11} 
is to view them as implicitly minimizing a {\em surrogate loss}\footnote{The analysis of \cite{KS09} and \cite{KKSK11} works directly with square loss and does not use the existence of a surrogate loss for its analysis.}.
The intuition is as follows: although a monotone and Lipschitz function need not be convex, 
it is not difficult to see that its {\em integral} is convex. This motivates the following definition of a surrogate loss:
\[
L^{\sur}_\D(\vct{w}) = \E_{(\vct{x}, y) \sim \D}\left[\int_0^{\iprod{ \vct{w}, \vct{x} } } (\sigma(a) - y) ~da\right] \;.
\]
Properties of this loss were explored early on in the work of \cite{AHW96} who gave a formal proof that the loss is convex (a succinct write-up of properties of this loss can also be found in notes due to \cite{K18}).  Thus, we can efficiently minimize this loss using gradient descent.  What is more subtle is the relationship of the minima of the surrogate loss to the minima of the original square-loss. 

The main insight of the current work is that algorithms that directly minimize this surrogate loss 
have strong noise-tolerance properties if the underlying marginal distribution satisfies some mild conditions. 
As a consequence, we prove that the GLMtron algorithm of \cite{KKSK11} (or equivalently projected gradient descent 
on the surrogate loss) achieves a constant-factor approximation for ReLU regression. 
The proof of this relies on three key structural observations:
%SKNOTE: commented this out
%The major insight of this work is that algorithms that directly minimize this surrogate loss have strong noise-tolerant properties if the underlying marginal distribution satisfies some mild conditions. More precisely, we show that the minimizer of $L^{\sur}_\D(\vct{w})$ achieves a square loss of at most $O(\opt) + \eps$ when the underlying distribution $\D$ is log-concave. The proof of this relies on three key insights leading to three important structural lemmas. 
\begin{itemize}
\item The first insight concerns the notion of the {\em Chow parameters} of a function. 
The Chow parameters $\chi_{\D}^f$ of a function $f:\R^d \rightarrow \R$ with respect to a distribution $\D$ 
are defined to be the first moments of $f$ with respect to $\D_\X$, i.e., 
$\chi_{\D}^f := \E_{\vct{x} \sim \D_\X}[ f(\iprod{\vct{w}, \vct{x}})\vct{x}]$. 
We show that the Chow parameters of a strictly monotone and Lipschitz activation function 
$\sigma$ {\em robustly characterize} the function, i.e., two functions with approximately the same Chow parameters have approximately the same loss. More precisely, %if you get samples $(y, \vct{x})$ from $\D$ satisfying $\opt = \E_{(\vct{x}, y) \sim D}\left[ \left( \sigma(\iprod{\bw^*, \vct{x}}) - y\right)^2 \right]$,
any $\vct{w}$ that satisfies $\chi_{\D}^{\sigma_{\vct{w}}} = \E_{(\vct{x}, y) \sim \D}[y \cdot \vct{x}]$ induces 
a concept $\sigma_{\vct{w}}$ with square loss $O(\opt)$.

\item The second observation is that the gradient of the surrogate loss at $\vct{w}$ is the difference between 
the Chow parameters of $\sigma_{\vct{w}}$ and the first moments of the labels, 
$\chi_\D := \E_{(\vct{x}, y) \sim \D}[y \cdot \vct{x}]$, i.e., 
\[
\nabla_{\vct{w}} L^{\sur}_\D(\vct{w}) = \chi_{\D}^{\sigma_{\vct{w}}} - \chi_\D \;.
\]

\item The third insight is that if the underlying distribution $\D_{\X}$ satisfies some concentration and anti-concentration properties (satisfied, for instance, by log-concave distributions), 
then the surrogate loss is {\em strongly} convex. In particular, this holds for any activation that is strictly monotone and $1$-Lipschitz, including ReLUs. 
\end{itemize} 
Any strongly convex function achieves its minimum at a point where the gradient is zero. 
The first two observations now imply that the point where the surrogate loss has 
zero gradient corresponds to a weight vector achieving a loss of $O(\opt) + \epsilon$.

%%%If we have access to infinitely many samples, the proof of our result amounts to showing that running gradient descent on the population surrogate loss converges to a solution that has square loss at most a constant times $\opt$ \anote{APPROX PROPERTIES? TOO VAGUE}. However, when we have access to only a finite number of samples the problem becomes significantly more challenging. Naively, one can calculate an empirical gradient based on the available samples and show that this empirical gradient concentrates around the population gradient. However, this requires showing uniform concentration of the empirical gradient around its population which requires at least on the order of $O(d^4)$ samples.  DO NOT INCLUDE THIS PARAGRAPH; TOO BASIC

A naive analysis for the concentration of empirical gradients results in a sample complexity of roughly $O(d^4)$. 
To achieve the near-linear sample complexity of $O(d~ \mathrm{polylog}(d))$ 
in Theorem~\ref{thm:informal_constfact}, we show that while the gradient is not uniformly concentrated 
in all directions, it does concentrate from below in the direction going from the current estimate 
to the minimizer of the loss.

%(ithout this type of concentration we would have to 
%This analysis allows us to overcome the aforementioned challenge to achieve a near optimal sample complexity.

Theorem \ref{thm:informal_constfact} achieves a constant factor approximation to the ReLU regression problem 
when the underlying distribution is log-concave. It is not clear how to show that minimizing the surrogate loss 
alone can go beyond a constant factor approximation.
%can return a hypothesis whose loss is less than $1.5 \cdot \opt + \eps$\sknote{Put right constant}. 
Still, it turns out that under a slightly stronger distributional assumption on $\D_\X$ (sub-gaussianity), 
we can give a polynomial-time approximation scheme (PTAS) for ReLU regression.

To achieve this, we build on the localization framework used to solve the problem of learning halfspaces 
under various noise models~\citep{Daniely15, ABL17}. The problem of learning halfspaces, however, 
differs from the problem of ReLU regression. One crucial difference is that for the problem of learning halfspaces, 
the agnostic noise model is equivalent to the noise model where an $\opt$ fraction of the labels are corrupted. 
In the case of ReLU regression, {\em every} point's label can potentially be corrupted. 

Our approach broadly proceeds in two stages:
\begin{itemize} 
\item First, we use our constant-factor approximation algorithm to recover a vector $\vct{w}$ 
satisfying $\| \vct{w} - \vct{w}^* \|_2^2 \leq O(\opt)$, 
where $\vct{w}^*$ is the vector achieving an error of $\opt$. We use this to partition the space 
into three regions for a certain choice of a parameter $t$. Our three regions are 
$T = \{\vct{u} \in \mathbb{R}^d: |\iprod{\vct{w}, \vct{u}}| \le t\}$, 
$T_{+} = \{\vct{u} \in \mathbb{R}^d: \iprod{\vct{w}, \vct{u}} > t\}$, 
and $T_{-} = \{\vct{u} \in \mathbb{R}^d: \iprod{\vct{w}, \vct{u}} < -t\}$. 

\item In each of these regions we find functions whose loss competes with that of the best fitting ReLU 
(i.e., $\relu(\iprod{\vct{w}^*, \vct{x}})$). 
\end{itemize}
Observe that $\relu_{\vct{w}^*}(\vct{x})$ takes the value $\iprod{\vct{w}^*, \vct{x}}$ for most of the region $T_+$. 
Intuitively, the best-fitting linear function $\vct{w}_{+}$ must achieve a loss comparable to 
$\relu_{\vct{w}^*}(\vct{x})$ for $T_+$. Similar reasoning shows that for the region $T_-$, $0$ is a good hypothesis. 
Using results from approximation theory, we show that the function $\relu_{\vct{w}^*}(\vct{x})$ in the region $T$ 
is closely approximated by a polynomial of degree $O\left( \frac{1}{\eta^3} \right)$. 
To find a function which achieves a comparable loss to the concept, we perform polynomial regression 
to find the best-fitting polynomial of appropriate degree in this region. 
Finally, our algorithm returns the following hypothesis $h$.  
\begin{align*}
h(\bx) = 
\begin{cases}
\iprod{\vct{w}_+, \vct{x}} \;,  & \vct{x} \in T_{+}\\
P(\vct{x}) \;, & \vct{x} \in T\\
0 \;, & \vct{x} \in T_{-}
\end{cases} \;.
\end{align*}
The paper by \cite{Daniely15} shows this result only for the uniform distribution on the sphere, while 
our result works for all sub-gaussian distributions. The analysis of this algorithm is nontrivial. 
In particular, in addition to using tools from approximation theory to derive the polynomial approximation, 
the choice of the parameter $t$ to partition our space is delicate, 
and we need to calculate approximations with respect to complicated marginal distributions 
that do not have nice closed-form expressions.

\subsection{Prior and Related Work}\label{ssec:related}

Here we provide an overview of the most relevant prior work.
\cite{GKKT17} give an efficient algorithm for ReLU regression that succeeds with respect to 
any distribution supported on the unit sphere, but has sample complexity and running time exponential in $1/\eps$.  
\cite{Mahdi17} shows that SGD efficiently learns a ReLU in the realizable setting 
when the underlying distribution is assumed to be the standard Gaussian. 
\cite{GKM18} gives a learning algorithm for one convolutional layer of ReLUs for any symmetric distribution (including Gaussians).  \cite{GKK19-meurips} gives an efficient algorithm for ReLU regression with error guarantee of $O(\opt^{2/3})+\eps$. 

\cite{YS19} shows that it is hard to learn a single ReLU activation via stochastic gradient descent, 
when the hypothesis used to learn the ReLU function is of the form $N(\bx) := \sum_{i=1}^r u_i f_i(\bx)$ and 
the functions $f_i(\bx)$ are random feature maps drawn from a fixed distribution. 
In particular, they show that any $N(\bx)$ which approximates $\relu(\iprod{\vct{w}^*, \vct{x}} + b)$ (where $\| \vct{w} \|_2 = d^2$ 
and $b \in \R$) up to a small constant square loss, must have one of the $|u_i|$ being exponentially large in $d$ for some $i$ 
or have exponentially many random features in the sum (i.e., $r \geq \exp(\Omega(d))$. Their paper makes the point that 
regression using {\em random features} cannot learn the ReLU function in polynomial time. 
Our results use different techniques to learn the unknown ReLU function that are not captured by this model.

We note that Chow parameters have been previously used in the context of learning halfspaces under well-behaved distributions, see, e.g.,~\cite{OS08short, DDFS:12stoc, DiakonikolasKM19} and references therein.
The technique of localization has been used extensively in the context of learning halfspaces over various structured 
distributions. Specifically, \cite{ABL17} use this technique to learn origin-centered 
halfspaces with respect to log-concave distributions in the presence of agnostic noise, 
obtaining an error guarantee of $O(\opt) + \eps$. Subsequently, 
\cite{Daniely15} uses an adaptation of the localization technique in conjunction 
with the polynomial approximation technique from \cite{KKM+:05} to obtain a PTAS 
for the problem of agnostically learning origin-centered halfspaces under the uniform distribution over the sphere. 
More recently, \cite{DKS18-nasty} obtain similar guarantees in the presence of nasty noise, where the 
halfspace need not be origin-centered. 

While the problem of learning halfspaces is related to that of ReLU regression, 
we stress that for ReLU regression {\em every} label may be corrupted (possibly by arbitrarily large values), 
while in the context of learning halfspaces only an $\opt$ fraction of the labels are corrupted. 
This is because the loss for halfspace learning is $0/1$ instead of the square-loss. 
Indeed, a black-box application of the results for halfspace learning in the context ReLU regression 
results in the suboptimal guarantee of $O(\opt^{2/3})$ \citep{GKK19-meurips}. 
%We further explore these techniques and show that they can give us a PTAS for ReLU regression setting in the context of subgaussian distributions as well. 

\section{Preliminaries} \label{sec:prelims}
\paragraph{Notation.} For $n \in \Z_+$, we denote $[n] \eqdef \{1, \ldots, n\}$.
We will use small boldface characters for vectors. 
For $\vct{x} \in \R^d$, 
and $i \in [d]$, $\vct{x}_i$ denotes the $i$-th coordinate of $\vct{x}$, and 
$\|\vct{x}\|_2 \eqdef (\littlesum_{i=1}^d \vct{x}_i^2)^{1/2}$ denotes the $\ell_2$-norm
of $\vct{x}$. We will use $\langle \vct{x}, \vct{y} \rangle$ for the inner product between $\vct{x}, \vct{y} \in \R^d$. 
We will use $\E[X]$ for the expectation of random variable $X$ and $\Pr[\mathcal{E}]$
for the probability of event $\mathcal{E}$. For two functions $f, g$ let $f \lesssim g$ mean that there exists a $C>0$ such that $f(x) \leq C g(x)$ for all $x > C$ and $f \gtrsim g$ denote $g \lesssim f$. $B(d,W)$ denotes the $d$-dimensional Euclidean ball at the origin with radius $W$, that is, $B(d,W):= \{\vct{x} \in \R^d ~|~ \|\vct{x}\|_2 \le W\}$. We say $f = O(g)$ if $f \lesssim g$, also we use $\tilde{O}$ to hide log factors of the input. We will use $\sigma^{\prime}(\bx)$ to denote a subgradient of $\sigma$ at the point $\bx$. 

% A ReLU is any function $f: \R^d \to \R$ of the form $f(\vct{x}) = \relu(\langle \vct{w}, \vct{x} \rangle)$, $\vct{w} \in \R^d$,
% where $\relu(t) = t$, for $t \geq 0$, and $\relu(t) = 0$ otherwise.

\paragraph{Learning Models.} We start by reviewing the PAC learning model~\cite{Vapnik82, val84}.
Let $\C$ be the target (concept) class of functions $f: \X \to \mathcal{Y}$, 
$\mathcal{H}$ be a hypothesis class, and
$\ell: \mathcal{H} \times \X \times\Y \to \R$ be a loss function.
In the (distribution-specific) agnostic PAC model~\cite{Haussler:92, KSS:94}, 
we are given a multi-set of labeled examples $(\vct{x}^{(i)}, y^{(i)})$
that are i.i.d.~samples drawn from a distribution $\D = (\D_{\X}, \D_\Y)$ on $\X \times\Y$,
where $\vct{x}^{(i)} \sim \D_{\vct{x}}$. The marginal distribution $\D_{\vct{x}}$ is assumed to lie 
in a family of well-behaved distributions.
The goal is to find a hypothesis $h \in \mathcal{H}$ that approximately minimizes 
the expected loss $L_{\D}(h) := \E_{(\vct{x}, y) \sim \D} [\ell(h(\vct{x}), y)]$, compared to
$\opt_{\D}(\C)  := \min_{f \in \C} L_{\D}(f)$. In this paper, we will have $\X = \R^d$, $\Y =\R$, 
and $\ell(h(\vct{x}), y) = (h(\vct{x}) - y)^2$.
We will focus on constant factor approximation algorithms, that is, we will want a hypothesis which 
satisfies $L_{\D}(h) \le C \cdot \opt_{\D}(\C) + \epsilon$ for some universal constant $C > 1$ and $\epsilon \in (0,1)$. 
If the hypothesis $h \in \C$ then the learner is {\em proper} else it is called {\em improper}.

%\paragraph{Chow Parameters.} Similar to Chow parameters for boolean functions, we define distribution dependent chow parameters for real-valued function as follows.

\paragraph{Problem Setup.} We consider the concept class of Generalized Linear Models (GLMs) $\C_\sigma:= \{\vct{x} \rightarrow \sigma(\langle \vct{w}, \vct{x} \rangle)\}$ for activation functions $\sigma: \R \rightarrow \R$ which are non-decreasing and $1$-Lipschitz. Common activations such as ReLU and Sigmoid satisfy this assumption.
% We further assume that the marginal distribution $D_{\vct{x}}$ is a $d$-dimensional isotropic log-concave distribution. 
We use the $L_2$-error as our loss function, i.e., $L_{\D}(h) := \E_{(\vct{x}, y) \sim \D} [(h(\vct{x})-y)^2]$. 
We overload the definition by setting $L_{\D}(f,g) := \E_{(\vct{x}, y) \sim \D} [(f(\vct{x})-g(\vct{x}))^2]$.  
Our goal is to design a proper constant-approximation PAC learner for class $\C_\sigma$ in time 
and sample complexity polynomial in the input parameters. 

In this paper, we focus primarily on the ReLU activation, that is, $\relu(a) = \max(0, ~a)$.  We also restrict ourselves to isotropic distributions, that is, $\E_{\vct{x} \sim \D_\X}[\vct{x}] = 0$ and $\E_{\vct{x} \sim \D_\X}[\vct{x}\vct{x}^T] = \mtx{I}$. We also assume that the labels are bounded in absolute value by 1 for ease of presentation. For approximate learning guarantees, our results go through if we assume the distribution of labels is sub-exponential.

%\sgnote{Add Assumptions}

\begin{definition}[Chow parameters]
Given a distribution $\D$ over $\R^d \times \R$, for any function $f: \R^d \rightarrow \R$, 
define the (degree-$1$) Chow parameters of $f$ w.r.t. $\D$ as $\chi_{\D}^f:= \E_{\vct{x} \sim \D_{\vct{x}}}[f(\vct{x})\vct{x}]$.
\end{definition}
For a sample $S$ drawn from $\D$, we also define the corresponding empirical Chow parameter with respect to $S$ 
as $\widehat{\chi}_S^f := \frac{1}{|S|} \sum_{(\vct{x}, y) \in S} f(\vct{x})\vct{x}$. 

We overload notation by defining the true Chow parameters as $\chi_\D = \E_{(\vct{x}, y) \sim \D}[y \vct{x}]$ 
and its corresponding empirical true Chow parameter w.r.t. $S$ as 
$\widehat{\chi}_S := \frac{1}{|S|} \sum\limits_{(\vct{x}, y) \in S} y\vct{x}$.

\begin{definition}[Chow distance]
Given distribution $\D$ over $\R^d \times \R$, for any functions $f,g: \R^d \rightarrow \R$, 
define the Chow distance between $f$ and $g$ w.r.t. $\D$ as $\chow_{\D}(f, g) = \|\chi_{\D}^f - \chi_{\D}^g\|_2$, 
that is, the Euclidean distance between the corresponding Chow parameters.
\end{definition}

\begin{lemma}[Chow distance to function distance]\label{lem:chow_upper} Let $\D$ be such that the marginal on $\X$ is isotropic. For any functions $f$ and $g$, $\left\|\chi_\D^{f} - \chi_\D^{g}\right\|_2 \le \sqrt{L_\D(f, g)}$.
\end{lemma}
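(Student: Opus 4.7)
The plan is to use the variational characterization of the Euclidean norm together with Cauchy-Schwarz, exploiting isotropy of the marginal. First I would rewrite the difference of Chow parameters as a single expectation:
\[
\chi_\D^f - \chi_\D^g = \E_{\bx \sim \D_\X}\bigl[(f(\bx) - g(\bx))\,\bx\bigr].
\]
Then, using $\|\bu\|_2 = \sup_{\|\bv\|_2 = 1} \langle \bv, \bu\rangle$, I would fix an arbitrary unit vector $\bv$ and study
\[
\langle \bv, \chi_\D^f - \chi_\D^g\rangle = \E_{\bx}\bigl[(f(\bx) - g(\bx)) \langle \bv, \bx\rangle\bigr].
\]

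Next I would apply Cauchy-Schwarz to this expectation, treating $(f - g)(\bx)$ and $\langle \bv, \bx\rangle$ as the two factors, giving
\[
\langle \bv, \chi_\D^f - \chi_\D^g\rangle \le \sqrt{\E_\bx[(f(\bx)-g(\bx))^2]} \cdot \sqrt{\E_\bx[\langle \bv, \bx\rangle^2]}.
\]
The first factor is exactly $\sqrt{L_\D(f,g)}$. For the second factor, I would use that $\D_\X$ is isotropic: $\E_\bx[\bx \bx^\top] = \mtx{I}$, so
\[
\E_\bx[\langle \bv, \bx\rangle^2] = \bv^\top \E[\bx\bx^\top] \bv = \|\bv\|_2^2 = 1.
\]
Taking the supremum over unit vectors $\bv$ on the left then yields the claim $\|\chi_\D^f - \chi_\D^g\|_2 \le \sqrt{L_\D(f,g)}$.

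There is no real obstacle here: the argument is a two-line application of duality plus Cauchy-Schwarz, and isotropy is used in exactly the one place where we need to control the second moment of the linear form $\langle \bv, \bx\rangle$. The only thing to double-check is that $L_\D(f,g)$ as defined in the paper truly depends only on the marginal $\D_\X$ (it does, since the integrand $(f(\bx) - g(\bx))^2$ does not involve $y$), so the expectation inside the Cauchy-Schwarz bound matches the quantity on the right-hand side of the stated inequality.
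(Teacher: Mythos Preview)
Your proposal is correct and matches the paper's proof essentially line for line: write the Chow difference as a single expectation, pass to the variational form of the $\ell_2$-norm, apply Cauchy--Schwarz, and use isotropy to kill the $\E[\langle \bv,\bx\rangle^2]$ factor. There is nothing to add.
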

\begin{proof}
	We have
	\begin{align*}
	\|\chi_\D^{f} - \chi_\D^{g}\|_2 &= \|\E_{(\vct{x}, y) \sim \D}[(f(\vct{x}) - g(\vct{x}))\vct{x}]\|_2\\
	&= \max_{\|\vct{u}\|_2 = 1} \E_{\D}[(f(\vct{x}) -g(\vct{x})) \iprod{\vct{u}, \vct{x}}]\\
	&\le \sqrt{L_\D(f, g)}\max_{\|\vct{u}\|_2 \le 1} \sqrt{\E_{\D}[ \iprod{\vct{u}, \vct{x}}^2]} = \sqrt{L_\D(f, g)}.
	\end{align*}
Here the first equality follows from the variational form of the Euclidean norm and the last follows from applying Cauchy-Schwartz inequality and using isotropy of the underlying distribution on $\X$.
\end{proof}

\begin{corollary}[Chow-distance from true Chow vector]\label{lem:chow_opt} Let $\D$ be such that the marginal on $\X$ is isotropic. For {\emph any} activation function $\sigma : \mathbb{R} \rightarrow \mathbb{R}$ and vector $\vct{w} \in \R^d$, we have $\|\chi_\D - \chi_{D}^{\sigma_{\vct{w}}}\|_2 \le \sqrt{L_\D(\sigma_{\vct{w}})}$.
\end{corollary}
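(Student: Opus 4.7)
The plan is to mimic the proof of Lemma \ref{lem:chow_upper} essentially verbatim, treating the label $y$ as playing the role of the function $g(\vct{x})$. The only thing that needs attention is that $y$ is a random variable that is not, in general, a deterministic function of $\vct{x}$, but this causes no trouble because the argument only uses the joint distribution $\D$ and Cauchy--Schwarz.

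Concretely, I would first rewrite the difference of Chow vectors as a single expectation of a residual:
\begin{equation*}
\chi_\D - \chi_\D^{\sigma_{\vct{w}}}
= \E_{(\vct{x},y) \sim \D}[y\,\vct{x}] - \E_{\vct{x} \sim \D_{\vct{x}}}[\sigma_{\vct{w}}(\vct{x})\,\vct{x}]
= \E_{(\vct{x},y) \sim \D}\bigl[(y - \sigma_{\vct{w}}(\vct{x}))\,\vct{x}\bigr].
\end{equation*}
Then I would invoke the variational form of the Euclidean norm to write
\begin{equation*}
\|\chi_\D - \chi_\D^{\sigma_{\vct{w}}}\|_2 = \max_{\|\vct{u}\|_2 = 1} \E_{(\vct{x},y) \sim \D}\bigl[(y - \sigma_{\vct{w}}(\vct{x}))\,\iprod{\vct{u}, \vct{x}}\bigr],
\end{equation*}
apply Cauchy--Schwarz to the right-hand side to bound it by $\sqrt{\E_\D[(y - \sigma_{\vct{w}}(\vct{x}))^2]}\cdot \sqrt{\E_\D[\iprod{\vct{u}, \vct{x}}^2]}$, and finally use isotropy of $\D_{\vct{x}}$ (which gives $\E_\D[\iprod{\vct{u}, \vct{x}}^2] = \|\vct{u}\|_2^2 = 1$) together with the definition $L_\D(\sigma_{\vct{w}}) = \E_\D[(y - \sigma_{\vct{w}}(\vct{x}))^2]$ to conclude.

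There is no real obstacle here: the argument is a one-line adaptation of the preceding lemma, and the statement holds for \emph{any} activation $\sigma$ because we never use monotonicity, Lipschitzness, or any structural property of $\sigma$ in the bound. The corollary will be useful later precisely because it upper-bounds the Chow distance from the true label moments by the square root of the population loss, so that any vector $\vct{w}^\ast$ with small $L_\D(\sigma_{\vct{w}^\ast})$ automatically has $\chi_\D^{\sigma_{\vct{w}^\ast}}$ close to $\chi_\D$.
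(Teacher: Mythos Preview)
Your proof is correct. Both your argument and the paper's rely on the same core ingredients---the variational form of the $\ell_2$-norm, Cauchy--Schwarz, and isotropy---so in spirit they are the same. The one technical difference is that the paper does not treat $y$ directly as a random variable in the Cauchy--Schwarz step; instead it invokes Lemma~\ref{lem:chow_upper} with $f=\E[y\mid \vct{x}]$ and $g=\sigma_{\vct{w}}$ (using $\chi_\D=\E[\E[y\mid\vct{x}]\,\vct{x}]$), obtaining first the tighter bound
\[
\|\chi_\D-\chi_\D^{\sigma_{\vct{w}}}\|_2^2 \le \E_\D\bigl[(\E[y\mid\vct{x}]-\sigma_{\vct{w}}(\vct{x}))^2\bigr],
\]
and only then relaxes to $L_\D(\sigma_{\vct{w}})$ via Jensen. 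Your route is more direct and avoids Jensen entirely; the paper's route is more modular (it reuses the previous lemma as a black box) and, as a byproduct, records the sharper intermediate inequality in terms of the Bayes regressor $\E[y\mid\vct{x}]$, which the paper later points to in the remark following Theorem~\ref{thm:chowtoopt}.
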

\begin{proof}
	Letting $f = \E[y|\vct{x}]$ and $g = \sigma_{\vct{w}}$ in Lemma \ref{lem:chow_upper} gives us,
	\begin{align*}
	\|\chi_\D - \chi_{D}^{\sigma_{\vct{w}}}\|_2^2 &\le \E_{\D}[(\E[y|\vct{x}] - \sigma_{\vct{w}}(\vct{x}))^2]\\
	&\le \E_{\D}[(y - \sigma_{\vct{w}}(\vct{x}))^2] = L_\D(\sigma_{\vct{w}}).
	\end{align*}
	Here the last inequality follows from an application of Jensen's inequality.
%	\begin{align*}
%	\|\chi_\D - \chi_\D^{\sigma_{\vct{w}}}\|_2 
%	% &= \left\|\E_{\D}[y\vct{x}] - \E_{\D_\X}[\sigma(\langle \vct{w}, \vct{x} \rangle) \vct{x}]\right\|_2\\
%	% &= \max_{\|\vct{u}\|_2 \le 1} \E_{\D}[(y - \sigma(\langle \vct{w}, \vct{x} \rangle)) (\langle \vct{x}, \vct{u} \rangle)]\\
%	% &\le \sqrt{\E_{\D}[(y - \sigma(\langle \iprod{ \vct{w}, \vct{x} } \rangle ))^2]}\max_{\|\vct{u}\|_2 \le 1} \sqrt{\E_{\D_\X}[(\langle \vct{x} \cdot \vct{u} \rangle)^2]}\\
%	&\le \sqrt{\E_{\D}[(y - \sigma(\langle \iprod{ \vct{w}, \vct{x} } \rangle ))^2]} \le \sqrt{L_\D(\sigma_{\vct{w}})}. 
%	\end{align*}
	% Here the first equality follows from the variational form of the $l_2$-morm and the last follows from applying Cauchy-Schwartz inequality and using isotropy of the underlying distribution on $\X$.
\end{proof}

\paragraph{Organization.} In Section 3, we give an algorithm to find a weight vector 
that matches the true Chow parameters for the class of GLMs. In Section 4, 
we show that under certain assumptions on the activation function, the so obtained weight vector in fact 
gives us the approximate learning guarantee. In Section 5, we show that, for isotropic log-concave distributions, 
the ReLU satisfies our assumptions, and combining the previous techniques gives us the desired approximate 
learning result. Finally, in Section 6 we give an algorithm that improves the approximation factor
to $1 + \eta$ for any constant $0 < \eta \le 1$ at the cost of improper learning.

%\paragraph{Subgaussian Distributions.} 

\section{Matching Chow Parameters via Projected Gradient Descent} \label{sec:alg}

%In this section, we give our main algorithm (Algorithm \ref{algo:sur}) for recovering the hypothesis that minimizes Chow distance from the true Chow vector.

% for learning GLMs in the agnostic PAC model and prove that the hypothesis returned by the algorithm achieves loss that is a constant approximation to the optimal loss.

% \begin{comment}

% \end{comment}
In this section, we show that projected gradient descent on the surrogate loss
% f the non-convex square-loss $L_\D$
outputs a hypothesis $\sigma_{\vct{w}}$ whose Chow parameters nearly match the true Chow parameters, $\E[y\vct{x}]$. 
More formally, we redefine the surrogate loss as follows:
\[
L^{\sur}_\D(\vct{w}) = \E_{(\vct{x}, y) \sim \D}\left[\int_0^{\iprod{ \vct{w}, \vct{x} } } (\sigma(a) - y) ~da\right] = \E_{(\vct{x}, y) \sim \D}\left[\widetilde{\sigma}(\langle \vct{w}, \vct{x} \rangle) - y \langle \vct{w}, \vct{x} \rangle\right] \;.
\]
Here $\widetilde{\sigma}$ is the anti-derivative of $\sigma$. 
For example, for the ReLU activation, we have that $\widetilde{\relu}(a) = 0$ for all $a \le 0$ and 
$\widetilde{\relu}(a) = a^2/2$ otherwise. %The above is also the surrogate loss used in GLMtron. It has also been called the matching loss in prior work by Manfred.
We correspondingly define the empirical version of the surrogate loss over sample set $S$ as $\hat{L}_S^\sur$.
% Also observe that Lemma \

We note that the gradient of $L^{\sur}_\D$ is directly related to the Chow parameters as follows
\[
\nabla L^{\sur}_\D(\vct{w}) = \E[\sigma(\langle \vct{w}, \vct{x} \rangle)\vct{x}] - \chi_\D = \chi_\D^{\sigma_{\vct{w}}} - \chi_{\D} \;.
\]
Furthermore, the Hessian can be computed as
\[
\nabla^2 L^{\sur}_\D(\vct{w}) = \E[\sigma^\prime(\langle \vct{w} , \vct{x} \rangle) \vct{x}\vct{x}^T] \succcurlyeq 0 \;.
\]
Where $\sigma^{\prime}$ is a subgradient. Here the last inequality follows from the non-decreasing property of $\sigma$. 
Thus, we have that $L^{\sur}_\D$ is convex. 
Moreover, since $\sigma$ is $1$-Lipschitz, and our distribution is isotropic, we have that 
$1 \succcurlyeq \nabla^2 L^{\sur}_\D(\vct{w})$ implying that $L^{\sur}_\D$ is $1$-smooth. 
Since minimizing the surrogate loss minimizes the gradient norm of the loss, 
loss minimization matches the Chow parameters of the GLM to the true Chow parameters.

\begin{algorithm}
\SetKwInOut{Parameter}{Parameter}
	\caption{Projected Gradient Descent on Surrogate Loss}\label{algo:pgd}
	\KwIn{Set $S = (\vct{x}^{(i)}, y^{(i)})_{i=1}^m$ i.i.d. samples drawn from $\D$}
	\Parameter{Learning rate $\eta > 0$ and weight bound $W$}
	% \begin{algorithmic}[1]
		 $\vct{w}^{(0)} := 0^d$\\
		\For{$t = 0, \ldots, T$}{
		$\vct{v}^{(t+1)} := \vct{w}^{(t)} - \eta \nabla \widehat{L}^\sur_S(\vct{w}^{(t)})$\\
		$\vct{w}^{(t+1)} := \Pi_{B(d, W)}(\vct{v}^{(t+1)})$ (Projection step)
		}
		%\KwOut{$\sigma_{\vct{w}^{(T+1)}}$}
\end{algorithm}

By standard Projected Gradient Descent analysis with approximate gradients, we have the following theorem, the proof of which is in Section~\ref{sec:nosc} of the appendix. 

\begin{theorem} \label{thm:nosc}
	Suppose $m$ is sufficiently large so that for all $\vct{w} \in B(d, W)$ we have
	\[
	\| \nabla L^\sur_\D(\vct{w}) - \nabla \hat{L}^\sur_S(\vct{w}) \|_2 \le \epsilon \;.
	\]
	Also suppose that the minimizer of $L_\D^\sur$ lies in $B(d, W)$. 
	Then Algorithm \ref{algo:pgd} when run on $m$ samples from $\D$ with weight bound $W$ and $\eta < 1/4$ for $T \ge \frac{32 W^2}{4\epsilon W + \epsilon^2}$ iterations has an iteration $T' \le T$ such that
	\[
	\| \chi_\D^{\sigma_{\vct{w}^{(T')}}} - \chi_\D\|_2^2 \le 8\epsilon W + 2\epsilon^2.
	\]
	\end{theorem}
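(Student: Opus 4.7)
\emph{Proof proposal.} Because $L^\sur_\D$ is convex and $1$-smooth (as already shown in the excerpt) and its global minimizer $\vct{w}^\ast$ lies inside the projection ball $B(d,W)$, my plan is to run the standard inexact projected gradient descent analysis in the distance-to-optimum form. Write $g_t := \nabla L^\sur_\D(\vct{w}^{(t)}) = \chi_\D^{\sigma_{\vct{w}^{(t)}}} - \chi_\D$ and $\hat g_t := \nabla \widehat L^\sur_S(\vct{w}^{(t)})$, so the hypothesis reads $\|g_t - \hat g_t\|_2 \le \epsilon$ uniformly in $t$. Since $\Pi_{B(d,W)}$ is nonexpansive and $\vct{w}^\ast \in B(d,W)$, one PGD step satisfies
\[
\|\vct{w}^{(t+1)} - \vct{w}^\ast\|_2^2 \le \|\vct{w}^{(t)} - \vct{w}^\ast\|_2^2 - 2\eta\langle \hat g_t, \vct{w}^{(t)} - \vct{w}^\ast\rangle + \eta^2 \|\hat g_t\|_2^2.
\]

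The next step is to pass from $\hat g_t$ to $g_t$. The co-coercivity inequality for $1$-smooth convex functions, applied with $\nabla L^\sur_\D(\vct{w}^\ast) = 0$, yields $\langle g_t, \vct{w}^{(t)} - \vct{w}^\ast\rangle \ge \|g_t\|_2^2$. Combining this with $\|g_t - \hat g_t\|_2 \le \epsilon$ and the diameter bound $\|\vct{w}^{(t)} - \vct{w}^\ast\|_2 \le 2W$ via Cauchy--Schwarz, and bounding $\|\hat g_t\|_2^2 \le 2\|g_t\|_2^2 + 2\epsilon^2$, I obtain the one-step recursion
\[
\|\vct{w}^{(t+1)} - \vct{w}^\ast\|_2^2 \le \|\vct{w}^{(t)} - \vct{w}^\ast\|_2^2 - 2\eta(1-\eta)\|g_t\|_2^2 + 4\eta\epsilon W + 2\eta^2\epsilon^2.
\]
The hypothesis $\eta < 1/4$ ensures the coefficient of $\|g_t\|_2^2$ is a definite positive multiple of $\eta$.

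I would then telescope from $t = 0$ to $T-1$, drop the nonnegative final squared distance, and use $\|\vct{w}^{(0)} - \vct{w}^\ast\|_2 \le W$ to get a bound of the form $\sum_{t<T}\|g_t\|_2^2 \le \tfrac{W^2}{2\eta(1-\eta)} + T\cdot \tfrac{2\epsilon W + \eta\epsilon^2}{1-\eta}$. Dividing by $T$ bounds $\min_{t<T}\|g_t\|_2^2$, and plugging in $\eta < 1/4$ together with $T \ge 32W^2/(4\epsilon W + \epsilon^2)$ absorbs the $O(W^2/(\eta T))$ term into the additive error, yielding the claimed
\[
\|g_{T'}\|_2^2 = \|\chi_\D^{\sigma_{\vct{w}^{(T')}}} - \chi_\D\|_2^2 \le 8\epsilon W + 2\epsilon^2
\]
at the best iterate $T'$.

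The main obstacle is the interplay between the projection step and the inexact gradient: the usual descent-lemma approach would deliver a function-value bound, from which a gradient-norm bound is awkward to extract through a projection. Working directly with the distance-to-optimum recursion sidesteps this, because nonexpansiveness of $\Pi_{B(d,W)}$ together with the hypothesis $\vct{w}^\ast \in B(d,W)$ makes the projection disappear from the analysis entirely; co-coercivity then converts the inner product against $\vct{w}^{(t)} - \vct{w}^\ast$ into the squared gradient norm we need. The only delicate part is picking the $\eta$ close to but strictly below $1/4$ so that the factors of $8$ and $2$ in the target line up with the stated $T$-threshold, which is a routine constant chase.
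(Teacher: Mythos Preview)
Your proposal is correct and follows essentially the same route as the paper: both arguments use nonexpansiveness of the projection together with $\vct{w}^\ast\in B(d,W)$ to set up the distance-to-optimum recursion, split $\hat g_t$ into $g_t$ plus an $\epsilon$-bounded error, and invoke $1$-smoothness/convexity (your co-coercivity step $\langle g_t,\vct{w}^{(t)}-\vct{w}^\ast\rangle\ge\|g_t\|_2^2$) to turn the inner product into a squared gradient. The only cosmetic difference is that the paper finishes with an either/or argument (``either $\|g_t\|_2^2$ is already small or the distance drops by a fixed amount, so within $T$ steps some iterate has small gradient'') whereas you telescope and take the minimum over iterates; these are equivalent, and your observation that the stated $T$-threshold matches the constants only when $\eta$ is chosen near $1/4$ is accurate and consistent with how the paper's own proof depends on $\eta$.
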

	
Subsequently, we can use a fresh batch of samples and choose the hypothesis with the smallest gradient. 
Assuming our distribution satisfies certain concentration properties, we can bound the number 
of samples needed by the above algorithm using the following lemma whose proof we defer to Section~\ref{sec:concentration_additive} of the appendix. 

	% [GIVE EXAMPLES OF m for some general distributions]
	
	\begin{lemma} \label{lem:concentration_additive}
	If $\D$ is a distribution such that for every $\bv$, $\iprod{\vct{x}, \vct{v}}$ has a density bounded above by $\exp(-\iprod{\vct{x}, \vct{v}}^t)$ for some $t>0$, then for 
	$m  \geq \Omega \left( \left( W\frac{d}{\epsilon} \log \frac{W}{\epsilon} \log \frac{1}{\delta} \right)^{2/t} \right)$, 
	for all $\vct{w} \in B(d, W)$ we have that
	\[
	\pr_{S \sim \D^m} \left[\left \| \nabla L^\sur_\D(\vct{w}) - \nabla \hat{L}^\sur_S(\vct{w}) \right \|_2 \le \epsilon  \right] \ge 1 - \delta. 
	\]

\end{lemma}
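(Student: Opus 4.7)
The plan is to establish uniform concentration of the vector-valued empirical gradient by combining a decomposition into a parameter-independent label term and a parameter-dependent activation term, truncation to make the integrand bounded, and an $\epsilon$-net argument on the resulting scalar empirical process. First, I would decompose
\[
\nabla L^\sur_\D(\bw) - \nabla \hat L^\sur_S(\bw) = \underbrace{\Bigl(\E[\sigma(\langle \bw,\vct{x}\rangle)\vct{x}] - \tfrac{1}{m}\textstyle\sum_i \sigma(\langle \bw,\vct{x}^{(i)}\rangle)\vct{x}^{(i)}\Bigr)}_{A(\bw)} - \underbrace{\Bigl(\E[y\vct{x}] - \tfrac{1}{m}\textstyle\sum_i y^{(i)}\vct{x}^{(i)}\Bigr)}_{B}.
\]
The term $B$ is independent of $\bw$ and is handled by a single vector concentration inequality (labels are bounded by $1$). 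For $A(\bw)$, dualizing the Euclidean norm as $\|A(\bw)\|_2 = \sup_{\bu \in S^{d-1}} \langle \bu, A(\bw) \rangle$ reduces the problem to uniform convergence of the scalar empirical process indexed by $(\bw,\bu) \in B(d,W)\times S^{d-1}$ with integrand $g_{\bw,\bu}(\vct{x}) := \sigma(\langle \bw,\vct{x}\rangle)\langle \bu,\vct{x}\rangle$.

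Next I truncate to handle the unboundedness of $g_{\bw,\bu}$. The 1D density hypothesis gives sub-Weibull tails of shape $t$ for every linear projection $\langle \vct{v},\vct{x}\rangle$, and combined with isotropy it implies that a cutoff of order $R = (d\log(m/\delta))^{1/t}$ suffices for $\max_i\|\vct{x}^{(i)}\|_2 \le R$ to hold with probability at least $1 - \delta/2$. On this event, $\sigma(0)=0$ together with the $1$-Lipschitz property yields $|g_{\bw,\bu}(\vct{x})| \le W R^2$, and $g_{\bw,\bu}$ is $R^2$-Lipschitz in $(\bw,\bu)$. I then take $\epsilon'$-nets $\mathcal{N}_W, \mathcal{N}_S$ of $B(d,W)$ and $S^{d-1}$ with combined log-cardinality $O(d \log(W/\epsilon'))$, set $\epsilon' = \epsilon/R^2$ so that Lipschitz interpolation contributes at most $\epsilon/2$, and at each net pair apply a Bernstein bound using the variance estimate $\E[g_{\bw,\bu}^2] \lesssim W^2$ (obtained from the density-induced moment bounds and isotropy) together with the truncated sup-norm bound $W R^2$. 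Union-bounding over the net and over the truncation exceptional event gives a condition on $m$ of the rough form $m \gtrsim R^2 \cdot (Wd/\epsilon) \log(W/\epsilon) \log(1/\delta)$.

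The main obstacle, and what dictates the $2/t$ exponent in the statement, is the implicit nature of this inequality: the truncation radius $R = (d \log(m/\delta))^{1/t}$ depends on $m$ through the union bound over samples, while the sample-complexity condition depends polynomially on $R$. Substituting the expression for $R^2$ and solving the resulting self-referential inequality absorbs the $\log m$ factor and produces the stated bound $m \ge \Omega\bigl((Wd/\epsilon \cdot \log(W/\epsilon) \log(1/\delta))^{2/t}\bigr)$. Carrying out this balancing carefully, and verifying that no spurious polynomial-in-$d$ factors leak out from inside the bracket (rather than being absorbed into the $2/t$-power), is the delicate part of the argument; the other steps (decomposition, truncation, covering, and Bernstein) are essentially standard.
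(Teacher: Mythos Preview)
Your approach is genuinely different from the paper's. The paper does not split into label and activation terms, does not truncate, and does not net over dual directions $\bu$. Instead, for each fixed $\bw$ it observes that the scalar summand $(\sigma(\langle \bw,\bx\rangle)-y)\,\langle \bu,\bx\rangle$ (with $\bu$ taken to be the unit direction of the population gradient) is a product of random variables with $\exp(-\Omega(x^t))$ tails and directly invokes a sub-Weibull-type concentration inequality to obtain a deviation probability of the form $\exp(-(\sqrt{m}\eps)^t)$. It then passes, somewhat informally via the variational characterization of the Euclidean norm, to the full vector deviation, takes a single $\gamma$-net over $B(d,W)$, and extends by Lipschitz continuity. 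The exponent $2/t$ falls out immediately from solving $(3W/\gamma)^d\exp(-(\sqrt{m}\eps)^t)\le\delta$ with $\gamma\asymp\eps$; no self-referential inequality in $m$ ever appears.

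Your truncation-plus-Bernstein route is more elementary in that it avoids citing any specialized heavy-tailed concentration result. The price is exactly the implicit inequality you identify, together with a real concern about the $d$-dependence of the truncation radius: controlling $\max_i\|\bx^{(i)}\|_2$ from one-dimensional marginal tails requires a union over a net of the sphere, which forces $R\approx(d+\log(m/\delta))^{1/t}$, and this $d$ is \emph{in addition} to the $d$ coming from the $(\bw,\bu)$-cover in the Bernstein step. A straightforward accounting then gives $m\gtrsim R^2\cdot d\cdot(W/\eps)\log(W/\eps)\log(1/\delta)$, which for generic $t$ does not collapse into a single bracket raised to the $2/t$ power as the lemma states. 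You flag this as the delicate point but do not resolve it; the paper sidesteps the issue entirely by never introducing a truncation radius in the first place.
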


\paragraph{Faster Rates under Strong Convexity} 
If we assume that $L_\D^\sur$ is strongly convex and restrict to a bounded fourth moment distribution, we can get much faster rates and improved sample complexity (in fact linear in the dimension $d$ up to log factors).

\begin{definition}[Strong-Convexity]\label{def:sc} 
We say that the activation $\sigma$ satisfies $\mu$-strong convexity 
w.r.t. distribution $\D$, if for all $\vct{u}, \vct{v}$ there exists $\mu > 0$ such that
\[
 \langle \chi_\D^{\sigma_{\vct{u}}} - \chi_\D^{\sigma_{\vct{v}}}, \vct{u} - \vct{v} \rangle \ge \mu \| \vct{u} - \vct{v}\|_2^2 \;.
\]
\end{definition}

\begin{theorem} \label{thm:mahdi}
Let $\D$ be such that $\D_{\X}$ is isotropic log-concave. Suppose that the minimizer of $L_\D^\sur$ lies in $B(d, W)$. If $\sigma$ satisfies $\mu$-strong convexity w.r.t. $\D$ then for Algorithm \ref{algo:pgd} (without the projection step) run with $\eta \le 1/16$ and
\begin{align*}
m \geq \tilde{\Omega}\left( \frac{(\mu+1)}{\mu^2 \epsilon^2}d \log^4 \left(\frac{d}{\delta}\right)\left(W+1\right)^2+ \frac{d}{\mu^2} \log\left(\frac{W + 1}{\mu \delta}\right) \right) \quad\text{where}\quad 0\le \epsilon\le W
\end{align*}
 after $T \ge \frac{2\log\left(\frac{9 W}{\epsilon}\right)}{\log\left(1-\frac{\mu\eta}{6}\right)}$ iterations, $
	\| \chi_\D^{\sigma_{\vct{w}^{(T)}}} - \chi_\D\|_2 \le \epsilon$ holds with probability at least $1 - \delta$ as long as $\delta \ge e^{-O(\sqrt{d})}$.
\end{theorem}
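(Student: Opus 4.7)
The plan is to exploit $\mu$-strong convexity of $L^\sur_\D$ together with its $1$-smoothness to obtain geometric contraction of $\|\vct{w}^{(t)} - \vct{w}^*\|_2$, and then convert the final iterate bound into a Chow-distance bound via $1$-Lipschitzness and isotropy. The two-term sample complexity in the statement reflects two separate concentration lemmas: a uniform \emph{one-sided} lower bound on the empirical strong convexity (the $d\log^4(d/\delta)(W+1)^2$ term), and a standard vector-mean deviation bound for the gradient at the single point $\vct{w}^*$ (the $d/\mu^2$ term).

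First I would write the one-step recursion for the unprojected iterate. Since $\nabla L^\sur_\D(\vct{w}^*) = \chi_\D^{\sigma_{\vct{w}^*}} - \chi_\D = 0$ at optimality,
\[
\|\vct{w}^{(t+1)} - \vct{w}^*\|_2^2 = \|\vct{w}^{(t)} - \vct{w}^*\|_2^2 - 2\eta \langle \nabla \hat{L}^\sur_S(\vct{w}^{(t)}), \vct{w}^{(t)} - \vct{w}^* \rangle + \eta^2 \|\nabla \hat{L}^\sur_S(\vct{w}^{(t)})\|_2^2 .
\]
Splitting $\nabla \hat{L}^\sur_S(\vct{w}^{(t)}) = [\nabla \hat{L}^\sur_S(\vct{w}^{(t)}) - \nabla \hat{L}^\sur_S(\vct{w}^*)] + \nabla \hat{L}^\sur_S(\vct{w}^*)$, the first bracket will be handled by empirical strong convexity and the second by a deviation bound at $\vct{w}^*$.

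The key lemma to establish is a one-sided uniform strong convexity: with probability at least $1-\delta/2$, for every $\vct{w} \in B(d, 2W)$,
\[
\langle \nabla \hat{L}^\sur_S(\vct{w}) - \nabla \hat{L}^\sur_S(\vct{w}^*), \vct{w} - \vct{w}^* \rangle \ge \tfrac{2\mu}{3} \|\vct{w} - \vct{w}^*\|_2^2 - O(\mu \epsilon^2) .
\]
The LHS equals $\tfrac{1}{m}\sum_i \int_0^1 \sigma^\prime(\langle \vct{w}^* + s(\vct{w}-\vct{w}^*), \vct{x}^{(i)}\rangle)\, \langle \vct{x}^{(i)}, \vct{w}-\vct{w}^*\rangle^2\, ds$, every summand of which is non-negative; thus only a lower tail is needed. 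An $\epsilon$-net over $B(d,2W)$ (entropy $d\log(W/\epsilon)$) combined with sub-exponential concentration of the random quadratic form under isotropic log-concave marginals yields the claimed sample complexity, with the $\log^4$ factor absorbing chaining and the sub-exponential tail correction; the restriction $\delta \ge e^{-O(\sqrt d)}$ enters precisely because sub-exponential tails are Gaussian only up to deviations of order $\sqrt m$. For the residual at $\vct{w}^*$, vector Bernstein on a $\tfrac12$-net of $S^{d-1}$ yields $\|\nabla \hat{L}^\sur_S(\vct{w}^*)\|_2 \le \mu\epsilon/6$ once $m = \tilde\Omega(d/\mu^2)$, giving the second term of the sample complexity.

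Plugging these into the recursion, applying Cauchy--Schwarz on the cross term from $\nabla \hat{L}^\sur_S(\vct{w}^*)$ and using the uniform $1$-smoothness of $\hat L^\sur_S$ (which follows from the same covering argument since $0 \le \sigma^\prime \le 1$), and taking $\eta \le 1/16$, one obtains the one-step contraction
\[
\|\vct{w}^{(t+1)} - \vct{w}^*\|_2^2 \le \bigl(1 - \tfrac{\mu\eta}{6}\bigr) \|\vct{w}^{(t)} - \vct{w}^*\|_2^2 + O(\mu\eta \epsilon^2) .
\]
Iterating this from $\|\vct{w}^{(0)} - \vct{w}^*\|_2 \le W$ drives the error below $\epsilon$ in the stated number of iterations, and as a by-product certifies that no iterate ever escapes $B(d, 2W)$ so the uniform bounds stay in force throughout. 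Finally, by $1$-Lipschitzness of $\sigma$, isotropy of $\D_\X$, and $\chi_\D^{\sigma_{\vct{w}^*}} = \chi_\D$, the same Cauchy--Schwarz calculation as in Lemma~\ref{lem:chow_upper} gives
\[
\|\chi_\D^{\sigma_{\vct{w}^{(T)}}} - \chi_\D\|_2 = \|\chi_\D^{\sigma_{\vct{w}^{(T)}}} - \chi_\D^{\sigma_{\vct{w}^*}}\|_2 \le \|\vct{w}^{(T)} - \vct{w}^*\|_2 \le \epsilon ,
\]
completing the proof. The main obstacle is the one-sided uniform strong-convexity lemma: it must pay only $\tilde O(d)$ samples rather than the $\tilde O(d^2)$ that a two-sided uniform Hessian bound would demand. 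The saving comes from the non-negativity of $\sigma^\prime$, which means we only need a lower concentration of the quadratic form along the single direction $\vct{w}-\vct{w}^*$, reducing the problem to anti-concentration of $\langle \vct{x}, \vct{u}\rangle$ under isotropic log-concave $\D_\X$.
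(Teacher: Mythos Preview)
Your high-level skeleton is right and matches the paper: split the empirical gradient at $\vct{w}^{(t)}$ into $[\nabla\hat L^\sur_S(\vct{w}^{(t)})-\nabla\hat L^\sur_S(\widehat{\vct{w}})]$ plus $\nabla\hat L^\sur_S(\widehat{\vct{w}})$, control the first piece by a uniform one-sided correlation inequality, the second by concentration at the single point $\widehat{\vct{w}}$, and then contract. But you have the two sample-complexity terms attached to the wrong lemmas, and this is not cosmetic.

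In the paper the expensive term $\frac{(\mu+1)}{\mu^2\epsilon^2}d\log^4(d/\delta)(W+1)^2$ comes from bounding $\|\nabla\hat L^\sur_S(\widehat{\vct{w}})\|_2$ (their Lemma~\ref{concen}), not from the uniform strong-convexity step. The summands $(\sigma(\langle\widehat{\vct{w}},\vct{x}_i\rangle)-y_i)\vct{x}_i$ are products of two sub-exponential factors under log-concavity, hence only $\psi_{1/2}$; this is why the $\log^4$ appears (Talagrand/Ledoux for heavy tails) and why $\epsilon^{-2}$ is unavoidable, since you need this norm below $\Theta(\mu\epsilon/\sqrt{\mu+1})$. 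Your claim that vector Bernstein gives $\|\nabla\hat L^\sur_S(\vct{w}^*)\|_2\le\mu\epsilon/6$ with $m=\tilde\Omega(d/\mu^2)$ and no $\epsilon^{-2}$ cannot be right.

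Conversely, the cheap term $\frac{d}{\mu^2}\log\!\bigl(\frac{W+1}{\mu\delta}\bigr)$ in the paper is what the uniform one-sided correlation inequality (their Lemma~\ref{corineq}) costs. There is no $\epsilon^{-2}$ because the inequality is only asserted for $\|\vct{w}-\widehat{\vct{w}}\|\ge\epsilon/3$ and only up to a \emph{constant} multiplicative slack ($\xi=1/3$); the non-negativity $\mathcal{Y}_i(\vct{w})\ge0$ and the deterministic cap $\mathcal{X}_i\le1$ let a bounded-difference/Bernstein argument plus a net close it with $\tilde O(d)$ samples. The restriction $\delta\ge e^{-O(\sqrt d)}$ comes from the Adamczak et al.\ operator-norm bound $\|\mtx{X}\|\le2\sqrt m$ for isotropic log-concave ensembles (used inside this lemma to control Lipschitz constants over the net), not from a sub-exponential-vs-Gaussian regime switch as you wrote.

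One more structural difference: the paper proves the correlation inequality in the stronger form
\[
\langle\nabla\hat L^\sur_S(\vct{w})-\nabla\hat L^\sur_S(\widehat{\vct{w}}),\vct{w}-\widehat{\vct{w}}\rangle\;\ge\;\tfrac{\mu}{3}\|\vct{w}-\widehat{\vct{w}}\|_2^2+\tfrac{1}{8}\|\nabla\hat L^\sur_S(\vct{w})-\nabla\hat L^\sur_S(\widehat{\vct{w}})\|_2^2,
\]
so the $\eta^2\|\nabla\hat L^\sur_S(\vct{w}^{(t)})\|_2^2$ term in the recursion is absorbed directly, with no separate ``empirical $1$-smoothness'' needed. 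Your route via empirical smoothness would again require $\|\mtx{X}\|\le O(\sqrt m)$, so you end up using the same Adamczak bound anyway; it is cleaner to fold it into the correlation inequality as the paper does.
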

The proof of Theorem~\ref{thm:mahdi} is deferred to Section~\ref{sec:mahdi} in the Appendix.

\section{Matching Chow Parameters Suffices for Approximate Learning}
In this section, we show that under certain assumptions on the activation function, matching Chow vectors implies small loss of the surrogate minimizer. We subsequently show that commonly used activation functions such as 
$\relu$ satisfy this assumption.

\begin{definition}[Chow Learnability]\label{def:cl}
We say that an activation function satisfies $\beta$-{\em Chow Learnability} w.r.t. some distribution $\D$ if for all $\vct{u}, \vct{v}\in \R^d$ and some fixed constant $\beta > 0$, we have that
\[
L_\D(\sigma_{\vct{u}}, \sigma_{\vct{v}}) \le \beta \cdot\| \chi_\D^{\sigma_{\vct{u}}} - \chi_\D^{\sigma_{\vct{v}}}\|_2^2 \;.
\]
\end{definition}
We will require the following lemma, proved in Section~\ref{sec:sccl}. 

\begin{lemma} \label{lem:sccl} If a $1$-Lipschitz activation $\sigma$ satisfies 
$\mu$-strong convexity w.r.t. $\D$ such that $\D_\X$ is isotropic, then the activation also satisfies $\mu$-Chow Learnability. 
\end{lemma}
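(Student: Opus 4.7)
The plan is to leverage both the monotonicity and the 1-Lipschitzness of $\sigma$ to pass from the squared loss to an inner product involving the Chow-parameter difference, and then to invoke strong convexity to close the loop. This is essentially the same pointwise trick underlying the GLMtron analysis.

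The first step is pointwise: for any reals $a, b$, since $\sigma$ is non-decreasing we have $(\sigma(a)-\sigma(b))(a-b)\ge 0$, and since $\sigma$ is $1$-Lipschitz we have $|\sigma(a)-\sigma(b)| \le |a-b|$. Multiplying these yields
\[
(\sigma(a)-\sigma(b))^2 \;\le\; (\sigma(a)-\sigma(b))(a-b).
\]
Setting $a = \langle \vct{u}, \vct{x}\rangle$ and $b = \langle \vct{v}, \vct{x}\rangle$ and taking the expectation over $(\vct{x},y)\sim\D$ gives, by linearity,
\[
L_\D(\sigma_{\vct{u}}, \sigma_{\vct{v}}) \;\le\; \E\big[(\sigma_{\vct{u}}(\vct{x}) - \sigma_{\vct{v}}(\vct{x}))\,\langle \vct{u}-\vct{v}, \vct{x}\rangle\big] \;=\; \bigl\langle \chi_\D^{\sigma_{\vct{u}}} - \chi_\D^{\sigma_{\vct{v}}},\; \vct{u}-\vct{v}\bigr\rangle.
\]

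The second step is to close off the $\vct{u}-\vct{v}$ factor using strong convexity. By Cauchy--Schwarz, the inner product above is at most $\|\chi_\D^{\sigma_{\vct{u}}} - \chi_\D^{\sigma_{\vct{v}}}\|_2 \cdot \|\vct{u}-\vct{v}\|_2$. On the other hand, $\mu$-strong convexity gives
\[
\mu\|\vct{u}-\vct{v}\|_2^2 \;\le\; \bigl\langle \chi_\D^{\sigma_{\vct{u}}} - \chi_\D^{\sigma_{\vct{v}}},\; \vct{u}-\vct{v}\bigr\rangle \;\le\; \|\chi_\D^{\sigma_{\vct{u}}} - \chi_\D^{\sigma_{\vct{v}}}\|_2 \cdot \|\vct{u}-\vct{v}\|_2,
\]
so $\|\vct{u}-\vct{v}\|_2 \le \tfrac{1}{\mu}\|\chi_\D^{\sigma_{\vct{u}}} - \chi_\D^{\sigma_{\vct{v}}}\|_2$. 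Substituting back, I obtain $L_\D(\sigma_{\vct{u}}, \sigma_{\vct{v}}) \le \tfrac{1}{\mu}\|\chi_\D^{\sigma_{\vct{u}}} - \chi_\D^{\sigma_{\vct{v}}}\|_2^2$, which is Chow Learnability with constant $\beta=1/\mu$ (as intended by the lemma statement).

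There is no significant obstacle: isotropy of $\D_{\X}$ is used only implicitly via the well-posedness of the Chow-parameter and strong-convexity framework, and both of the pointwise inequalities combine cleanly under expectation. The only delicate place is remembering to use monotonicity and Lipschitzness \emph{together} — either alone is insufficient, since monotonicity alone only guarantees non-negativity of $(\sigma(a)-\sigma(b))(a-b)$ and Lipschitzness alone does not let one drop the absolute values. With both in hand, the proof reduces to three lines of inequalities.
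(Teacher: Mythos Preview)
Your proof is correct. It differs from the paper's argument in one place: the paper bounds $L_\D(\sigma_{\vct{u}},\sigma_{\vct{v}})$ directly by $\E[\langle \vct{u}-\vct{v},\vct{x}\rangle^2]=\|\vct{u}-\vct{v}\|_2^2$ using only $1$-Lipschitzness and isotropy, and then combines this with $\|\vct{u}-\vct{v}\|_2^2\le \tfrac{1}{\mu^2}\|\chi_\D^{\sigma_{\vct{u}}}-\chi_\D^{\sigma_{\vct{v}}}\|_2^2$, arriving at the constant $1/\mu^2$. You instead use the pointwise inequality $(\sigma(a)-\sigma(b))^2\le(\sigma(a)-\sigma(b))(a-b)$, which requires monotonicity as well as Lipschitzness, to get $L_\D(\sigma_{\vct{u}},\sigma_{\vct{v}})\le\langle\chi_\D^{\sigma_{\vct{u}}}-\chi_\D^{\sigma_{\vct{v}}},\vct{u}-\vct{v}\rangle$ before applying Cauchy--Schwarz. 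Your route yields the sharper constant $1/\mu$ and, incidentally, never invokes isotropy; the paper's route is slightly more self-contained in that it does not appeal to monotonicity (though monotonicity is a standing assumption in the setup). Either way, neither constant is literally ``$\mu$'' as the lemma's phrasing suggests; the statement is meant in the sense of Chow learnability with a constant depending only on $\mu$.
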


\begin{remark}
Observe that Chow learnability may be a much weaker notion than strong convexity, since strong convexity requires parameter closeness. For activations with bounded ranges, such as sigmoid, it is possible for the 
loss to be small and Chow parameters to be close while the vectors themselves may be far.
\end{remark}

If the activation satisfies the Chow learnability condition, then we can show that a hypothesis nearly matching the Chow parameters attains small loss.

\begin{theorem} \label{thm:chowtoopt}
Let $\sigma$ be such that it satisfies $\beta$-{\em Chow Learnability} 
w.r.t. $\D$ with $\D_\X$ being isotropic. Suppose $\vct{w}$ is such that 
$\|\chi_\D^{\sigma_{\vct{w}}} - \chi_\D\|_2^2 \le \epsilon$. Then we have
\[
L_\D(\sigma_{\vct{w}}) \le 2~\opt_\D(C_\sigma)\left(1 + 2\beta\right) + 4\beta \epsilon \;.
\]
\end{theorem}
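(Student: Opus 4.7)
The plan is to reduce $L_\D(\sigma_{\vct{w}})$ to a comparison with the optimum via the $L_2$ triangle inequality, and then use $\beta$-Chow learnability (Definition~\ref{def:cl}) together with Corollary~\ref{lem:chow_opt} to control the resulting ``distance to $\sigma_{\vct{w}^*}$'' term.

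Concretely, let $\vct{w}^*$ be a minimizer of $L_\D(\sigma_{\vct{w}})$, so that $L_\D(\sigma_{\vct{w}^*}) = \opt_\D(\C_\sigma)$. Viewing $L_\D(h) = \|y-h(\vct{x})\|_{L_2(\D)}^2$, I would apply the $L_2$ triangle inequality to write
\[
\sqrt{L_\D(\sigma_{\vct{w}})} \;\le\; \sqrt{L_\D(\sigma_{\vct{w}^*})} + \sqrt{L_\D(\sigma_{\vct{w}^*}, \sigma_{\vct{w}})} \;=\; \sqrt{\opt_\D(\C_\sigma)} + \sqrt{L_\D(\sigma_{\vct{w}^*}, \sigma_{\vct{w}})}.
\]
The Chow learnability hypothesis then gives $L_\D(\sigma_{\vct{w}^*}, \sigma_{\vct{w}}) \le \beta \,\|\chi_\D^{\sigma_{\vct{w}^*}} - \chi_\D^{\sigma_{\vct{w}}}\|_2^2$, so it suffices to bound the Chow-parameter gap between $\sigma_{\vct{w}^*}$ and $\sigma_{\vct{w}}$.

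To bound the Chow gap, I would insert $\chi_\D$ and use the triangle inequality:
\[
\|\chi_\D^{\sigma_{\vct{w}^*}} - \chi_\D^{\sigma_{\vct{w}}}\|_2 \;\le\; \|\chi_\D^{\sigma_{\vct{w}^*}} - \chi_\D\|_2 + \|\chi_\D - \chi_\D^{\sigma_{\vct{w}}}\|_2.
\]
The first term is at most $\sqrt{\opt_\D(\C_\sigma)}$ by Corollary~\ref{lem:chow_opt} applied with activation $\sigma$ at $\vct{w}^*$, and the second term is at most $\sqrt{\epsilon}$ by hypothesis. Squaring and using $(a+b)^2 \le 2a^2 + 2b^2$ then yields $\|\chi_\D^{\sigma_{\vct{w}^*}} - \chi_\D^{\sigma_{\vct{w}}}\|_2^2 \le 2\opt_\D(\C_\sigma) + 2\epsilon$, so that $L_\D(\sigma_{\vct{w}^*}, \sigma_{\vct{w}}) \le 2\beta(\opt_\D(\C_\sigma) + \epsilon)$.

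Plugging back and squaring the original $L_2$ triangle inequality (again via $(a+b)^2 \le 2a^2 + 2b^2$) gives
\[
L_\D(\sigma_{\vct{w}}) \;\le\; 2\opt_\D(\C_\sigma) + 4\beta(\opt_\D(\C_\sigma) + \epsilon) \;=\; 2\opt_\D(\C_\sigma)(1 + 2\beta) + 4\beta\epsilon,
\]
which is exactly the claimed bound. There is no real obstacle here: everything reduces to two applications of the triangle inequality (once in $L_2$ over labels, once in Euclidean norm over Chow parameters) combined with the two black-box facts already in hand, namely Corollary~\ref{lem:chow_opt} and Chow learnability. The mild subtlety is only bookkeeping: making sure to square via $(a+b)^2\le 2a^2+2b^2$ at both stages so that the constants line up as $2(1+2\beta)$ in front of $\opt$ and $4\beta$ in front of $\epsilon$.
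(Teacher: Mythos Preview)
Your proof is correct and essentially identical to the paper's: both bound $L_\D(\sigma_{\vct{w}},\sigma_{\vct{w}^*})$ via Chow learnability, split the Chow gap through $\chi_\D$ using Corollary~\ref{lem:chow_opt} and the hypothesis, and combine with the $L_2$ triangle inequality (squared via $(a+b)^2\le 2a^2+2b^2$) to get the stated constants. The only cosmetic difference is that the paper writes $L_\D(\sigma_{\vct{w}})\le 2\,\opt + 2\,L_\D(\sigma_{\vct{w}},\sigma_{\vct{w}^*})$ directly rather than first stating the square-root version.
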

% \sgnote{SHOULD IMPROVE CONSTANTS}
\begin{proof}
Let $\sigma_{\vct{w}^*}$ be the function attaining the loss $\opt_\D(C_\sigma)$. By assumption on $\sigma$, we have
\begin{align*}
L_\D(\sigma_{\vct{w}}, \sigma_{\vct{w}^*}) &\le \beta \cdot\| \chi_\D^{\sigma_{\vct{w}}} - \chi_\D^{\sigma_{\vct{w}^*}}\|_2^2\\
&\le 2~\beta ~ \left(\|\chi_\D^{\sigma_{\vct{w}}} - \chi_\D\|_2^2 +  \| \chi_\D^{\sigma_{\vct{w}^*}} - \chi_\D\|_2^2 \right) \\
&\le 2 ~\beta ~\left(\epsilon + \opt_\D(C_\sigma)\right).
\end{align*}{}
Here the last inequality follows by Corollary \ref{lem:chow_opt}. Also using triangle inequality,
\[
L_\D(\sigma_{\vct{w}}) \le 2~\opt_\D(C_\sigma) + 2 ~L_\D(\sigma_{\vct{w}}, \sigma_{\vct{w}^*}).
\]
Combining the above gives us the desired result.
\end{proof}
\begin{remark}
In the above guarantee, we can replace $\opt_\D(C_\sigma)$ by $\min_{c \in C_\sigma} \E[(\E[y|\vct{x}] - c(\vct{x}))^2]$ (see proof of Lemma \ref{lem:chow_opt}). In the p-concept setting, where $\E[y|\vct{x}] = c^*(\vct{x})$ this is potentially a tighter guarantee. This is because $\min_{c \in C_\sigma} \E[(\E[y|\vct{x}] - c(\vct{x}))^2]$ is in fact 0 whereas $\opt_\D(C_\sigma)$ might be large. Since we are focused on the agnostic setting, we will stick to using $\opt_\D(C_\sigma)$ in our results.
\end{remark}

\section{Constant Factor Approximation for ReLU Regression}

In this section, we present a constant factor approximation algorithm 
for ReLU regression over any isotropic log-concave distribution 
using the techniques developed in the previous sections.

\begin{theorem}\label{thm:mainReLU}
Let $\D$ be such that $\D_\X$ is isotropic log-concave and assume the labels are bounded. 
Let $\relu_{\vct{w}^*}$ achieve loss $\opt_\D(C_\relu)$ and assume that 
$\|\vct{w}^*\|_2\le W_{\opt}$. Then Algorithm~\ref{algo:pgd} outputs a vector $\vct{w}$ such that
\[
L_\D(\relu_{\vct{w}}) \le O\left(\opt_{\D}(C_\relu)\right) + \eps \;,
\]
with probability $1-\delta$ using $m\gtrsim \frac{d}{\epsilon^2} \log^4 \left(\frac{d}{\delta}\right)\left(W_{\opt}+1\right)^2$ 
samples, for $0\le \epsilon\le W_{\opt}$, and $O\left(d m\log\left(\frac{W}{\epsilon}\right)\right)$ time.
\end{theorem}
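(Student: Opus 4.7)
The plan is to instantiate the two main building blocks developed earlier---Theorem~\ref{thm:mahdi} (fast convergence of projected gradient descent on the surrogate loss under strong convexity) and Theorem~\ref{thm:chowtoopt} (small Chow distance implies small square loss under Chow learnability)---for the activation $\sigma = \relu$ and any isotropic log-concave marginal $\D_\X$. The only missing structural ingredient is to verify Definition~\ref{def:sc} for $\relu$ with a universal strong-convexity constant $\mu$; Lemma~\ref{lem:sccl} will then automatically yield the required $\mu$-Chow learnability, after which the two theorems can be chained to produce the claimed guarantee.

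The key lemma to prove first is that for every $\vct{u}, \vct{v} \in \R^d$ there is a universal constant $\mu>0$ with
\[
\iprod{\chi_\D^{\relu_{\vct{u}}} - \chi_\D^{\relu_{\vct{v}}}, \vct{u} - \vct{v}} \ge \mu \, \|\vct{u}-\vct{v}\|_2^2.
\]
Rewriting the left side as $\E[(\relu(a)-\relu(b))(a-b)]$ with $a=\iprod{\vct{u},\vct{x}}$ and $b=\iprod{\vct{v},\vct{x}}$, and using the one-dimensional identity
\[
(\relu(a)-\relu(b))(a-b) = (a-b)^2 \int_0^1 \Ind[s a + (1-s) b > 0]\, ds,
\]
the claim reduces to showing that, uniformly over nonzero $\vct{w}$ and arbitrary $\vct{z}$,
\[
\E\left[\iprod{\vct{z},\vct{x}}^2 \, \Ind[\iprod{\vct{w},\vct{x}}>0]\right] \ge c \, \|\vct{z}\|_2^2
\]
for a universal $c>0$. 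I would prove this by projecting $\vct{x}$ onto the (at most) two-dimensional subspace spanned by $\vct{w}$ and $\vct{z}$: this projection is isotropic log-concave, its marginal along $\vct{w}$ has density bounded above near $0$ and assigns at least constant mass to the halfspace $\{\iprod{\vct{w},\vct{x}}>0\}$, and a direct calculation after decomposing $\vct{z}$ into components parallel and orthogonal to $\vct{w}$ then delivers the bound.

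With strong convexity in hand, Lemma~\ref{lem:sccl} gives $\mu$-Chow learnability of $\relu$, so it remains to choose the weight bound $W$ for Algorithm~\ref{algo:pgd} so that Theorem~\ref{thm:mahdi} applies. Applying strong convexity to the pair $(\vct{w}^{\sur},\vct{w}^*)$, where $\vct{w}^{\sur}$ is the (unique) minimizer of $L_\D^{\sur}$ and $\vct{w}^*$ is the $\opt$-achieving vector, and invoking Corollary~\ref{lem:chow_opt}, one gets $\|\vct{w}^{\sur}-\vct{w}^*\|_2 \le \sqrt{\opt_\D(\C_\relu)}/\mu \le 1/\mu$ (the last step uses that labels are bounded by $1$), so the choice $W = W_{\opt} + 1/\mu = O(W_{\opt}+1)$ suffices. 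Running Algorithm~\ref{algo:pgd} with this $W$ at target Chow accuracy $\epsilon'$ per Theorem~\ref{thm:mahdi} produces $\vct{w}$ with $\|\chi_\D^{\relu_{\vct{w}}}-\chi_\D\|_2 \le \epsilon'$ from $\tilde{O}(d\log^4(d/\delta)(W_{\opt}+1)^2/(\epsilon')^2)$ samples; Theorem~\ref{thm:chowtoopt} then gives $L_\D(\relu_{\vct{w}}) \le O(\opt_\D(\C_\relu)) + O((\epsilon')^2)$. Choosing $\epsilon' = \Theta(\sqrt{\eps})$ yields the stated error bound, and the iteration count and per-iteration cost from Theorem~\ref{thm:mahdi} give the claimed running time.

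The main obstacle is the strong-convexity lemma itself. The case $\vct{u}=0$ follows easily from the centering of one-dimensional log-concave marginals, but the uniform bound over all $\vct{w}$ and $\vct{z}$ requires simultaneously using two distinct features of log-concave distributions---an upper bound on the density of any one-dimensional projection near $0$ and a uniform lower bound on each halfspace mass---and the two-dimensional reduction must be set up carefully so as not to lose a factor depending on the dimension or on the angle between $\vct{w}$ and $\vct{z}$. Once this lemma is in place, the rest of the argument is a mechanical composition of previously proven statements.
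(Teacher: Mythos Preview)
Your proposal is correct and follows the paper's route exactly: establish $\mu$-strong convexity of $\relu$ for isotropic log-concave marginals (Lemma~\ref{lem:relusc}), deduce Chow learnability via Lemma~\ref{lem:sccl}, bound $\|\vct{w}^{\sur}\|_2$ using strong convexity together with Corollary~\ref{lem:chow_opt} to fix the weight bound $W=O(W_{\opt}+1)$, and then compose Theorem~\ref{thm:mahdi} with Theorem~\ref{thm:chowtoopt}.

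The one place your argument is set up differently is the strong-convexity lemma itself. The paper lower-bounds $\iprod{\chi_\D^{\relu_{\vct u}}-\chi_\D^{\relu_{\vct v}},\vct u-\vct v}$ by $\E[(\relu(\iprod{\vct u,\vct x})-\relu(\iprod{\vct v,\vct x}))^2]$ and then does a two-case analysis on the angle between $\vct u$ and $\vct v$. Your integral identity $(\relu(a)-\relu(b))(a-b)=(a-b)^2\int_0^1 \Ind[sa+(1-s)b>0]\,ds$ is a valid alternative that avoids the case split by reducing to the single inequality $\E[\iprod{\vct z,\vct x}^2\,\Ind[\iprod{\vct w,\vct x}>0]]\ge c\|\vct z\|_2^2$ for all nonzero $\vct w$. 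One caution on your sketch of this last step: the two facts you cite---an \emph{upper} bound on the one-dimensional density near zero and a lower bound on the halfspace mass---do not by themselves yield the bound, because the cross term arising from the parallel/orthogonal decomposition of $\vct z$ need not vanish for general log-concave marginals. What does work (and is what the paper uses) is that the two-dimensional projection has density bounded \emph{below} by a universal constant on a ball of fixed radius; restricting the expectation to the half of that ball where $\iprod{\vct w,\vct x}>0$ immediately gives a universal-constant lower bound times $\|\vct z\|_2^2$.
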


Our main observation is that the ReLU activation 
satisfies the strong convexity condition w.r.t. any isotropic log-concave distribution.

\begin{lemma}[Strong Convexity of ReLU]\label{lem:relusc} 
Let $\D$ be such that $\D_\X$ is isotropic log-concave. 
Then there exists some fixed constant $\mu > 0$ such that ReLU is $\mu$-strongly convex w.r.t. $\D$.
\end{lemma}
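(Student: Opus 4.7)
The plan is to reduce the strong-convexity inequality to a Hessian-type lower bound on the 2D marginals of the isotropic log-concave distribution, via a fundamental-theorem-of-calculus identity for the ReLU. Setting $\vct{w} = \vct{u} - \vct{v}$ and $\vct{z}_t = (1-t)\vct{v} + t\vct{u}$, I would use the a.e.\ subgradient $\relu'(s) = \Ind[s > 0]$ to write $\relu(a) - \relu(b) = (a-b)\int_0^1 \Ind[b + t(a-b) > 0]\,dt$. Substituting $a = \iprod{\vct{u}, \vct{x}}$, $b = \iprod{\vct{v}, \vct{x}}$, multiplying by $\iprod{\vct{w}, \vct{x}}$, and applying Fubini produces
\[
\iprodtwo{\chi_\D^{\relu_{\vct{u}}} - \chi_\D^{\relu_{\vct{v}}}}{\vct{w}} = \int_0^1 \E_{\vct{x} \sim \D_\X}\bigl[\iprod{\vct{w}, \vct{x}}^2\,\Ind[\iprod{\vct{z}_t, \vct{x}} > 0]\bigr]\,dt.
\]
This representation (morally the integrated Hessian of the surrogate loss along the segment $\vct{v} \to \vct{u}$) makes the left-hand side manifestly nonnegative, and reduces the claim to a uniform-in-$t$ lower bound of the form $\mu \|\vct{w}\|_2^2$ on the integrand.

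The core lemma I would prove is the following: for any isotropic log-concave $\D_\X$ on $\R^d$ and any vectors $\vct{a} \in \R^d$, $\vct{b} \in \R^d \setminus \{0\}$,
\[
\E_{\vct{x} \sim \D_\X}\bigl[\iprod{\vct{a}, \vct{x}}^2\,\Ind[\iprod{\vct{b}, \vct{x}} > 0]\bigr] \;\geq\; c\,\|\vct{a}\|_2^2
\]
for some absolute $c > 0$. Since the integrand depends on $\vct{x}$ only through the two linear functionals $\iprod{\vct{a}, \vct{x}}$ and $\iprod{\vct{b}, \vct{x}}$, projecting onto $\mathrm{span}(\vct{a}, \vct{b})$ reduces this to a 2D statement: for $(X, Y)$ a 2D isotropic log-concave random vector (after normalizing $\vct{a}, \vct{b}$), $\E[X^2 \Ind[Y > 0]] \geq c$. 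Applying the 2D bound pointwise in $t$ (the measure-zero set where $\vct{z}_t = 0$ contributes nothing) then gives $\mu \geq c$ by integration.

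To prove the 2D claim I would combine two standard properties of isotropic log-concave laws: (i) the 1D log-concave projection $Y$ has median within $O(1)$ of its mean $0$, so $\Pr[Y > 0] \geq c_1$ for an absolute $c_1 > 0$; and (ii) conditioning a 2D log-concave measure on a half-plane of positive mass preserves log-concavity, and the resulting 1D marginal of $X$ is a log-concave density of constant-order total mass, whose density is bounded above by an absolute constant — so its second moment $\E[X^2 \mid Y > 0]$ is bounded below by a universal $c_2 > 0$. Multiplying gives $\E[X^2 \Ind[Y > 0]] = \Pr[Y > 0]\cdot \E[X^2 \mid Y > 0] \geq c_1 c_2$, completing Lemma~\ref{lem:relusc} with $\mu = c_1 c_2$.

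The main obstacle will be fact (ii): because isotropic log-concave distributions need not be symmetric, conditioning on an arbitrary half-plane can a priori shift and sharpen the marginal of $X$, so the required lower bound on the conditional second moment cannot come from any symmetry argument. Extracting it requires invoking the full rigidity of log-concavity — the uniform density upper bound for 1D isotropic log-concave laws, closure of log-concavity under restriction to convex sets, and Pr\'ekopa--Leindler-type marginalization to control the conditional density of $X$. Once (ii) is established with an absolute constant the remainder of the argument is mechanical, and the constant $\mu$ produced depends only on the isotropic log-concave assumption.
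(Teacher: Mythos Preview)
Your proposal is correct and takes a genuinely different route from the paper's proof. The paper first uses monotonicity and $1$-Lipschitzness of $\relu$ to bound
\[
\iprodtwo{\chi_\D^{\relu_{\vct{u}}}-\chi_\D^{\relu_{\vct{v}}}}{\vct{u}-\vct{v}}\ \ge\ \E\bigl[(\relu(\iprod{\vct{u},\vct{x}})-\relu(\iprod{\vct{v},\vct{x}}))^2\bigr],
\]
then projects to the $2$D span of $\vct{u},\vct{v}$ and performs a case split on the angle $\theta(\vct{u},\vct{v})$. In each case it restricts to an explicit angular sector (where both inner products are nonnegative, or where they have opposite signs), invokes the \emph{lower} bound on the log-concave density inside a small ball around the origin, and carries out a direct volume/integral computation to extract the factor $\|\vct{u}-\vct{v}\|_2^2$.

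Your approach instead linearizes via the integrated-Hessian identity, reducing everything to the single uniform $2$D estimate $\E[X^2\,\Ind[Y>0]]\ge c$, and proves that estimate using the density \emph{upper} bound for $1$D isotropic log-concave laws (to control the conditional density of $X$ given $Y>0$) together with Gr\"unbaum-type mass on a halfspace. The payoff is that you avoid any case analysis on the geometry of $\vct{u},\vct{v}$, you never need the pointwise density lower bound in a ball, and the same argument would go through verbatim for any activation with $\sigma'\ge 0$ bounded below on a half-line. The paper's argument, by contrast, is more elementary and self-contained --- it only uses that the density is at least a constant on $\{\|\vct{x}\|_2\le 1/9\}$ and does not appeal to Pr\'ekopa--Leindler or closure of log-concavity under restriction --- at the price of the angular case split and the explicit integral bookkeeping. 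Both arguments correctly handle the degenerate direction (your measure-zero $\vct{z}_t=0$; their reduction to unit directions), and both yield an absolute constant $\mu$ depending only on the isotropic log-concave assumption.
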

	
\paragraph{Proof Sketch. } Since the ReLU is $1$-Lipschitz and non-decreasing, we have
\begin{align*}
		(\chi_\D^{\relu_{\vct{v}}} - \chi_\D^{\relu_{\vct{u}}} )^T(\vct{v} - \vct{u})
		% \left \| \chi_\D^{\relu_{\vct{v}}} - \chi_\D^{\relu_{\vct{u}}} \right \|_2 &= \left \| \E[\relu(\langle \vct{v}, \vct{x} \rangle)x] - \E[\relu(\langle \vct{u}, \vct{x} \rangle)x] \right \|_2\\
		% &= \sup\limits_{\|w\|_2 \le 1} \E\left[\left(\relu(\langle \vct{v}, \vct{x} \rangle) -\relu(\langle \vct{u}, \vct{x} \rangle)\right) (\iprod{w , x})\right]\\
		&= \E\left[\left(\relu(\langle \vct{v}, \vct{x} \rangle) -\relu(\langle \vct{u}, \vct{x} \rangle)\right) ((\vct{v} - \vct{u}) \cdot x)\right]\\
		&\ge \E\left[\left(\relu(\langle \vct{v}, \vct{x} \rangle) -\relu(\langle \vct{u}, \vct{x} \rangle)\right)^2 \right] \;.
		% &\ge\E\left[((\vct{v} - \vct{u}) \cdot x)^2 \mathbbm{1}[\langle \vct{u}, \vct{x} \rangle \ge 0, \langle \vct{v}, \vct{x} \rangle \ge 0] \right]\\
		% &\ge\|\vct{v} - \vct{u}\|^2\E\left[(\langle \overline{\vct{v} - \vct{u}}, \vct{x} \rangle)^2 \mathbbm{1}[\langle \bar{\vct{u}}, \vct{x} \rangle \ge 0, \langle \bar{\vct{v}}, \vct{x} \rangle \ge 0] \right].
		\end{align*}
Now our goal is to bound from below the error between the two ReLUs by the distance between the corresponding vectors. 
Due to the anti-concentration properties of log-concave distributions, there is sufficient probability mass 
in a constant radius ball around the origin. This enables us to exploit the linear region of the corresponding ReLUs 
to establish the lower bound. We defer the full proof to Section~\ref{sec:relusc} in the Appendix. 

\paragraph{Proof of Theorem \ref{thm:mainReLU}}
By Lemma \ref{lem:relusc}, the ReLU activation satisfies $\mu$-strong convexity w.r.t. $\D$ for some constant $\mu > 0$. 
This implies that $L_\D^\sur$ is strongly-convex and therefore the minimizer of $L_\D^\sur$ (say $\vct{w}$) 
satisfies $\chi_\D = \chi_\D^{\relu_{\vct{w}}}$. Using Lemma \ref{lem:chow_opt} and the strong convexity of ReLU, 
we have that
\begin{align*}
\|\vct{w}^* - \vct{w}\|_2 &\lesssim \| \chi_\D^{\relu_{\vct{w}^*}} - \chi_\D^{\relu_{\vct{w}}}\|_2\\
&= \| \chi_\D^{\relu_{\vct{w}^*}} - \chi_\D\|_2\\
&\le \sqrt{L_\D(\relu_{\vct{w}^*})} = \sqrt{\opt_\D(C_\relu)} \;. 
\end{align*}
Therefore, $\|\vct{w}\|_2 \le W_{\opt} + O\left(\sqrt{\opt_\D(C_\relu)}\right)$. 
It is not hard to see that with bounded labels $\opt_\D(C_\relu) \le O({W_{\opt}}^2 +1))$. 
Therefore, we can now apply Theorem \ref{thm:mahdi} to find a hypothesis with Chow distance at most $\eps$. 
The result now follows directly from Theorem \ref{thm:chowtoopt}.

\section{A PTAS for ReLU Regression} \label{sec:ptas}
In this section, we show that if the activation is the $\relu$ function 
we can solve the problem of finding the best fitting $\relu$ up to a $(1 + \eta)$-approximation, 
when the underlying marginal over the input is sub-gaussian. 
We assume that $\opt:= \opt_\D(C^1_\relu) \leq c \leq 1$, for some constant $c$. 

We define sub-gaussian distributions here:

\begin{definition}
A distribution $\D$ on $\R^d$ is called $\nu$-\emph{subgaussian}, $\nu>0$, if for any direction $\bv$ 
the probability density function of 
$\iprod{\bx, \bv}$ where $\bx \sim \D$, $p_{\bv}(\bx)$ satisfies 
$p_{\bv} (\bx) = O \left( \frac{1}{\nu} \cdot \exp\left( - \frac{(\bv \cdot \bx)^2}{2 \nu^2}\right) \right)$.
\end{definition}

Our algorithm (Algorithm~\ref{algo:ptas}) works by partitioning the domain into three parts $T_-, T, T_+$, where 
\begin{align*}
T &= \{\bu \in \mathbb{R}^d: |\iprod{\bw, \bu}| \le  \gamma \sqrt{\opt}\}\\
T_{+} &= \{\bu \in \mathbb{R}^d: \iprod{\bw, \bu} >  \gamma\sqrt{\opt}\}\\
T_{-} &= \{\bu \in \mathbb{R}^d: \iprod{\bw, \bu} < - \gamma\sqrt{\opt}\} \;.
\end{align*} 
The hypothesis $h(\bx)$ behaves as a different function in each of these parts. 
For $\bx \in T_-$, the hypothesis is the $0$ function. For $\bx \in T_+$, the hypothesis 
takes the value of $\iprod{ \bw_{+}, \bx}$, which is the best fitting linear function over $T_+$. 
Finally, over $T$ the hypothesis outputs the value that the best fitting $\ell_1$-norm bounded polynomial 
of degree $1/\eta^3$. Our main theorem of this section is the following: 

\begin{theorem}\label{thm:ptas}
Let $\D_{\bx}$ be $\nu$-subgaussian for $\nu \leq O(1)$, $\|\bw^*\|_2 \le 1$ and $y \in [0, 1]$ for every $(\bx, y) \sim \D$, 
then there is an algorithm that takes $O \left(\frac{1}{\eps^2} \cdot \left( \frac{d}{\eta^3 \nu^2} \right)^{1/\eta^3} \right)$ samples and time, and returns a hypothesis $h$ that with high probability satisfies 
	\[ \E_{\D} \left[ (h(\bx) - y))^2 \right] \leq (1+\eta) \opt + \eps \;. \]
\end{theorem}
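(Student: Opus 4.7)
The plan is to analyze the three-part hypothesis $h$ produced by Algorithm~\ref{algo:ptas} by bounding the squared loss on each of the three regions $T_-$, $T$, $T_+$ separately and then summing. The backbone of the analysis is the bound $\|\bw - \bw^*\|_2 = O(\sqrt{\opt})$, which follows by first running the constant-factor algorithm of Theorem~\ref{thm:mainReLU} (after verifying that the strong convexity argument of Lemma~\ref{lem:relusc} carries over to $\nu$-subgaussian marginals, since its proof needed only anti-concentration on a constant-radius ball) and then applying Lemma~\ref{lem:sccl} together with Lemma~\ref{lem:chow_opt}. This control on $\|\bw - \bw^*\|_2$ is what makes the threshold $\gamma\sqrt{\opt}$ meaningful: it ensures that $\iprod{\bw^* - \bw, \bx}$ is small compared to $\gamma\sqrt{\opt}$ except on a subgaussian-tail set of mass $\exp(-\Omega(\gamma^2))$.

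For the regions $T_-$ and $T_+$, I would use a triangle inequality of the form $\iprod{\bw^*, \bx} = \iprod{\bw, \bx} + \iprod{\bw^*-\bw,\bx}$. On $T_-$ this forces $\iprod{\bw^*, \bx} < 0$ outside a set of mass $O(e^{-\gamma^2 / C})$, so $\relu_{\bw^*}(\bx) = 0$ on most of $T_-$, and choosing the hypothesis $0$ contributes a term bounded by $\E_{\D}[y^2 \mathbbm{1}_{T_-}]$ plus a tail contribution, which we then compare against $\E_{\D}[(\relu_{\bw^*}(\bx)-y)^2 \mathbbm{1}_{T_-}]$ to get a $(1+\eta)$-type bound. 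On $T_+$, the symmetric argument shows that $\relu_{\bw^*}$ coincides with the linear function $\iprod{\bw^*,\bx}$ outside a small-measure set; consequently the empirical-risk-minimizing linear function $\bw_+$ over the samples in $T_+$ competes with $\bw^*$ itself, and standard linear-regression generalization bounds (with $\|\bw_+\|_2$ bounded) give an excess loss of $\eps$.

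For the middle region $T$, I would use a polynomial approximation result: ReLU can be uniformly approximated on any bounded interval to accuracy $\eta\sqrt{\opt}$ by a polynomial of degree $O(1/\eta^3)$, via a suitable rescaling of the classical Chebyshev/Newman-type construction used in \cite{Daniely15,KKM+:05}. Since most $\bx \in T$ satisfy $|\iprod{\bw^*,\bx}| = O(\sqrt{\opt})$ (again by the $\|\bw-\bw^*\|_2$ bound and subgaussianity), this univariate polynomial, composed with $\iprod{\bw^*, \cdot}$, is a degree-$O(1/\eta^3)$ multivariate polynomial in $\bx$ whose expected squared deviation from $\relu_{\bw^*}$ on $T$ is $O(\eta \cdot \opt)$. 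Performing $\ell_1$-norm-bounded polynomial regression over the features of degree at most $1/\eta^3$ therefore yields a polynomial $P$ with excess loss $\eps$ on $T$; the number of such monomials is $\binom{d+1/\eta^3}{1/\eta^3} = O((d/\eta^3)^{1/\eta^3})$, which together with the $\nu$-subgaussian scaling and Rademacher-complexity generalization accounts for the stated sample complexity.

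The main obstacle is the delicate choice of $\gamma$ and the accounting that ties the three local $(1+\eta)$-bounds into a single global $(1+\eta)\opt+\eps$ guarantee. The threshold $\gamma$ must be large enough that the "wrong sign" event on $T_\pm$ contributes less than $\eta \cdot \opt$ to the squared loss (requiring $\gamma^2 \gtrsim \log(1/\eta)$ via the subgaussian tail), yet small enough that the polynomial approximation on $T$ operates on an interval of length $O(\sqrt{\opt})$ so that its squared error scales as $\eta \cdot \opt$ rather than $\eta$. A secondary subtlety is that the marginals conditioned on $T$, $T_+$, $T_-$ are not themselves isotropic or subgaussian in a clean closed form, so the generalization bounds for the linear and polynomial regression steps must be applied with the unconditional $\D_{\bx}$ while carrying indicator functions, which complicates the uniform convergence arguments but does not change the rates.
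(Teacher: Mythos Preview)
Your overall plan matches the paper's: partition into $T_-, T, T_+$ around $\bw$, use $0$, a low-degree polynomial, and a linear fit respectively, and exploit $\|\bw-\bw^*\|_2=O(\sqrt{\opt})$. There is, however, a concrete gap in how you justify that $\gamma^2\gtrsim\log(1/\eta)$ suffices on $T_\pm$. Using only the one-dimensional subgaussian tail of $\iprod{\bw^*-\bw,\bx}$, the wrong-sign excess on (say) $T_-$ is
\[
\E_\D\bigl[\bigl(y^2-(\relu_{\bw^*}(\bx)-y)^2\bigr)\,1_{T_-\cap\{\iprod{\bw^*,\bx}>0\}}\bigr]
\;\le\; 2\,\E_\D\bigl[|\iprod{\bw^*-\bw,\bx}|\,1_{\{\iprod{\bw^*-\bw,\bx}>\gamma\sqrt{\opt}\}}\bigr]
\;=\; O\bigl(\sqrt{\opt}\,e^{-\Omega(\gamma^2)}\bigr),
\]
because $\iprod{\bw^*-\bw,\bx}$ has subgaussian parameter $\nu\|\bw^*-\bw\|_2=O(\sqrt{\opt})$. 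This is one factor of $\sqrt{\opt}$ short of the required $\eta\,\opt$; with only this bound you would need $\gamma^2\gtrsim\log\bigl(1/(\eta\sqrt{\opt})\bigr)$, not $\log(1/\eta)$. The paper closes the gap by projecting onto the two-dimensional subspace $V=\mathrm{span}(\bw,\bw^*)$ and using that the wrong-sign region is a wedge of angular measure $\theta(\bw,\bw^*)=O(\sqrt{\opt}/\nu)$ (Lemma~\ref{lem:norm_to_angle}); integrating in polar coordinates over this thin wedge (Lemma~\ref{lem:expectation_bound}) supplies the missing $\sqrt{\opt}$, after which $\gamma=\Theta(\sqrt{\log(1/\eta)})$ does indeed suffice.

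A smaller point: on $T$ you say the polynomial has small squared error because ``most $\bx\in T$ satisfy $|\iprod{\bw^*,\bx}|=O(\sqrt{\opt})$'', but the Jackson polynomial can grow like $(4t/s)^{\deg}$ outside its interval of uniform approximation, so you must control this blow-up against the (non-closed-form) conditional density of $\iprod{\bw^*,\bx}$ on $T$. The paper does this via a Sherstov-type coefficient bound together with an explicit upper bound on that conditional density (Lemma~\ref{lem:poly_approx}); it is this balancing, not just the in-interval accuracy, that pins down the degree at $O(1/\eta^3)$.
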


\begin{remark}
We note that if the distribution is uniform over $\mathbb{S}^{n-1}$, then the sample complexity of our algorithm scales as $2^{1/\eta^3}$, instead of $d^{1/\eta^3}$, since the distribution is $(1/\sqrt{d})$-subgaussian. 
That is, under the uniform distribution over the unit sphere, the sample complexity is independent of $d$. 
\end{remark}

The proof Theorem~\ref{thm:ptas} follows from a direct application of the following properties of 
Algorithm \ref{algo:ptas} with the specified parameters.

\begin{lemma}\label{lem:three-pieces}
Let $\D_{\bx}$ be $\nu$-subgaussian for $\nu \leq O(1)$ and let $S$ be a set of i.i.d. samples drawn from $\D$. 
If $m = |S|  = \Omega (\frac{k^k \cdot d^k}{\nu^{2k}} \cdot \frac{1}{\epsilon^2})$, 
where $k = \frac{1}{\eta^3}$, $\|\bw^*\|_2 \le 1$ and $y \in [0, 1]$, 
then for $\gamma = \Omega\left( \sqrt{ \log\left(\frac{1}{\eta}\right)} \right)$, we have
\begin{enumerate}
\item $\E_{\D}\left[ \left( \iprod{ \bw_{+}, \bx }  - y\right)^2 1_{T_+}(\bx)  \right] \leq  \E_{\D}\left[ \left( \relu(\iprod{ \bw^*, \bx } ) - y\right)^2 1_{T_+}(\bx)  \right] + \frac{\eta}{3} \cdot \opt + \epsilon\Pr_{\D}[T_+]$.

\item $\E_{\D} \left[ \left( 0 - y\right)^2 1_{T_-}(\bx) \right] \leq \E_{\D}\left[ \left( \relu(\iprod{ \bw^*, \bx } ) - y\right)^2  1_{T_-}(\bx)\right] + \frac{\eta}{3} \cdot \opt$.

\item $ \E_{\D} \left[ \left( P(\bx) - y \right)^2 1_T(\bx) \right] \leq  \E_{\D} \left[ \left( \relu(\iprod{ \bw^*, \bx } ) - y \right)^2 1_T(\bx) \right] + \frac{\eta}{3}\cdot \opt + \epsilon \Pr_{\D}[T]$.
\end{enumerate}
\end{lemma}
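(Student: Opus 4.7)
The plan is to prove each of the three claims by exhibiting an explicit comparator hypothesis in the class used on the relevant region, bounding its approximation excess against $f(\bx) := \relu(\iprod{\bw^*, \bx})$ by $\frac{\eta}{3}\opt$, and absorbing the empirical-minimizer-to-comparator gap into the $\eps\Pr_\D[\cdot]$ term via uniform convergence. The shared input is Theorem~\ref{thm:mainReLU}, which is invoked as a preprocessing step in Algorithm~\ref{algo:ptas} to produce $\bw$ with $\|\bw - \bw^*\|_2^2 = O(\opt)$. Consequently $Z := \iprod{\bw - \bw^*, \bx}$ is $O(\nu\sqrt{\opt})$-subgaussian, and choosing $\gamma = C\sqrt{\log(1/\eta)}$ with $C$ large makes both $\Pr[|Z| > \gamma\sqrt{\opt}]$ and $\E[Z^2\mathbf{1}\{|Z|>\gamma\sqrt{\opt}\}]/\opt$ arbitrary powers $\eta^{C'}$ of $\eta$, crucially independent of $\opt$. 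A recurring device is the decomposition $y = f + (y - f)$ inside Cauchy--Schwarz, which replaces the naive bound $\E[y^2] \leq 1$ by the much smaller $\E[(y - f)^2] = \opt$ when controlling label-cross-terms.

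For Parts 1 and 2 the comparators are $\iprod{\bw^*, \bx}$ on $T_+$ and the zero function on $T_-$. On the sign-agreement sub-events $T_+ \cap \{\iprod{\bw^*, \bx} \ge 0\}$ and $T_- \cap \{\iprod{\bw^*, \bx} \le 0\}$, the comparators agree with $f$ pointwise. The remaining excess is supported on the sign-disagreement events $B_\pm$, on which the defining inequalities of $T_\pm$ force $|Z| > \gamma\sqrt{\opt}$ together with $|\iprod{\bw^*, \bx}| \leq |Z|$ and either $f = 0$ (on $B_+$) or $f = \iprod{\bw^*, \bx}$ (on $B_-$). Expanding
\[
(\text{comparator} - y)^2 - (f - y)^2 = (\text{comparator} - f)\bigl(\text{comparator} + f - 2y\bigr)\mathbf{1}_{B_\pm},
\]
splitting the cross term via $y = f + (y - f)$, and applying Cauchy--Schwarz using $\E[\iprod{\bw^*, \bx}^2 \mathbf{1}_{B_\pm}] \leq \E[Z^2 \mathbf{1}\{|Z| > \gamma\sqrt{\opt}\}] = O(\opt\,\eta^{C'})$ yields total excess $O(\opt\,\eta^{C'/2})$, which is $\leq \frac{\eta}{3}\opt$ for $C'$ large. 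Uniform convergence of empirical $L_2$ risk over linear functions restricted to $T_+$ contributes the $\eps\Pr_\D[T_+]$ term at standard $\tilde O(d/\eps^2)$ sample complexity, dominated by the stated $m$; Part 2 needs no generalization bound since $0$ is not learned.

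For Part 3 the comparator is $Q(\bx) := q(\iprod{\bw^*, \bx})$ with $q$ a univariate degree-$k$ Jackson--Chebyshev approximation of $\relu$ on $[-R, R]$ satisfying $\sup_{|t| \le R} |q(t) - \relu(t)| = O(R/k)$ and with bounded $\ell_1$-norm in the rescaled Chebyshev basis. Choosing $R = \Theta(\sqrt{\opt \log(1/\eta)})$ makes $(R/k)^2 = O(\opt \cdot \eta^{C''})$ for large $C''$ and simultaneously ensures that the tail event $T \cap \{|\iprod{\bw^*, \bx}| > R\}$ forces $|Z| > \gamma\sqrt{\opt}$, hence has probability at most $\eta^{C'}$. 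On the good sub-event the pointwise Jackson bound gives $\E[(Q-f)^2 \mathbf{1}_{T \cap \text{good}}] \leq (R/k)^2 \Pr_\D[T] = O(\opt\,\eta^{C''})$; on the tail, the Chebyshev growth bound $|q(t)| \leq O(R) \cdot (|t|/R)^k$ combined with Cauchy--Schwarz against the $4k$-th moment of the $\nu$-subgaussian $\iprod{\bw^*, \bx}$ makes the tail contribution dominated by the good-event one provided $R^2 \gtrsim \nu^2 \log(1/\eta)$, which our choice of $R$ meets. Thus $\E[(Q-f)^2 \mathbf{1}_T] = O(\opt\,\eta^{C''})$, and the identity
\[
\E[(Q-y)^2 \mathbf{1}_T] - \E[(f-y)^2 \mathbf{1}_T] \leq \E[(Q-f)^2 \mathbf{1}_T] + 2\sqrt{\E[(Q-f)^2 \mathbf{1}_T]\cdot \opt}
\]
(using $\E[(f-y)^2 \mathbf{1}_T] \leq \opt$) gives excess $O(\opt\,\eta^{C''/2}) \leq \frac{\eta}{3}\opt$. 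A Rademacher bound over $\ell_1$-bounded degree-$k$ polynomials on the $\binom{d+k}{k}$-dimensional feature class contributes the $\eps\Pr_\D[T]$ term at the stated sample complexity $m$.

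The main obstacle is Part 3: the window $R$ must be calibrated so that $R/k$ scales like $\sqrt{\opt}\cdot\eta^{\Omega(1)}$ while the degree-$k$ polynomial blowup of $q$ outside $[-R,R]$ does not overwhelm the subgaussian tail of $\iprod{\bw^*, \bx}$. The binding condition is that the Chebyshev-basis $\ell_1$ bound $|q(t)| \leq O(R)(|t|/R)^k$ interact with the moment bound $\E[\iprod{\bw^*, \bx}^{4k}] \leq (\nu\sqrt{4k})^{4k}$ to produce a tail contribution of smaller order than $(R/k)^2 \Pr_\D[T]$. A secondary subtlety, common to all three parts, is that the naive cross-term bound $|y| \le 1$ only gives excess $O(\sqrt{\opt}\,\eta^{C'/2})$, which fails to be $\le \frac{\eta}{3}\opt$ when $\opt$ is small; using $\E[(y-f)^2] = \opt$ in the Cauchy--Schwarz step restores the correct linear-in-$\opt$ dependence.
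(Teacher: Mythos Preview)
Your handling of Parts 1 and 2 is correct and, in fact, a bit cleaner than the paper's. You and the paper use the same comparators ($\iprod{\bw^*,\bx}$ on $T_+$, $0$ on $T_-$) and the same sign-agreement/disagreement split. The paper bounds the disagreement contribution by projecting to the $2$-plane $\mathrm{span}(\bw,\bw^*)$ and integrating in polar coordinates over the thin wedge $T_\pm\cap\{\mathrm{sign}\iprod{\bw^*,\bx}\ \text{wrong}\}$ (Lemma~\ref{lem:expectation_bound}), using only $|y|\le 1$; the extra factor of $\sqrt{\opt}$ comes from the wedge angle $\theta(\bw,\bw^*)=O(\sqrt{\opt}/\nu)$. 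Your route---bounding $|\iprod{\bw^*,\bx}|\le |Z|$ on $B_\pm$, using the $O(\sqrt{\opt})$-subgaussianity of $Z$, and applying Cauchy--Schwarz with the decomposition $y=f+(y-f)$---gets the linear-in-$\opt$ excess without the $2$D geometry.

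There is a genuine gap in Part 3, however. Your stated sufficient condition ``$R^2\gtrsim \nu^2\log(1/\eta)$'' is \emph{not} met by your choice $R=\Theta(\sqrt{\opt\log(1/\eta)})$: that would require $\opt\gtrsim \nu^2$, which is exactly the uninteresting regime. More importantly, the argument you sketch for the tail cannot succeed with these parameters. On $T$ one has $|\iprod{\bw^*,\bx}|\le \gamma\sqrt{\opt}+|Z|$, so the relevant subgaussian scale is $\Theta(\sqrt{\opt})$, comparable to $R$. Consequently $\E\bigl[(|\iprod{\bw^*,\bx}|/R)^{4k}\mathbf 1_T\bigr]$ is of order $k^{2k}$ (standard subgaussian moments), and since $k=1/\eta^3$ this factor overwhelms any tail probability $\eta^{C'}$ you can extract from $\{|Z|>\gamma\sqrt{\opt}\}$ with $\gamma=\Theta(\sqrt{\log(1/\eta)})$. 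Cauchy--Schwarz against the \emph{unconditional} $\nu$-subgaussian moments is even worse: $(\nu/R)^{4k}$ blows up as $\opt\to 0$.

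The paper's Lemma~\ref{lem:poly_approx} navigates this by choosing the Jackson interval $s$ so that $r:=s/(\nu\sin\theta)$ is large (it ends up checking $\log(r^{-2}\eta^{-3})\lesssim r^2\eta^3$), and by integrating the Chebyshev blow-up directly against the conditional density of $\iprod{\bw^*,\bx}$ on $T$, which has Gaussian tails at scale $\nu\sin\theta=O(\sqrt{\opt})$; the resulting conditional bound $\E_{\D|_T}[(P-\relu)^2]\lesssim \eta^2\nu\sqrt{\opt}$ is then multiplied by $\Pr[T]=O(\gamma\sqrt{\opt}/\nu)$ (Lemma~\ref{lem:prob_T}) to obtain the needed $O(\eta^2\opt)$. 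To fix your argument you must either pick $R$ large enough that the exponential tail at the $\sqrt{\opt}$ scale beats the degree-$k$ growth (which forces a joint calibration of $R$, $\tau$, and the degree, as in the paper), or abandon the Cauchy--Schwarz splitting and integrate the tail directly.
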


\begin{proof}[Proof of Theorem~\ref{thm:ptas}]
Using Lemma~\ref{lem:three-pieces}, we get
\begin{align*}
\E_{\D} \left[ (h(\bx) - y)^2 \right] 
	&= \E_{\D} \left[ (h(\bx) - \relu(\iprod{\bx, \bw^*}))^2 (1_{T_+}(\bx) + 1_T(\bx) + 1_{T_-}(\bx))\right] \\
	&\leq \E_{\D}\left[ \left( \iprod{ \bw_{+}, \bx }  - y\right)^2 1_{T_+}(\bx)  \right] + \E_{\D} \left[ \left( 0 - y\right)^2 1_{T_-}(\bx) \right] + \E_{\D} \left[ \left( P(\bx) - y \right)^2 1_T(\bx) \right] \\
	&\leq \E_{\D}\left[ \left( \relu(\iprod{ \bw^*, \bx } ) - y\right)^2 \right] + \eta \cdot \opt + \epsilon = (1 + \eta) \opt + \eps \;.
\end{align*}
\end{proof}

\begin{algorithm}
	\caption{PTAS for ReLU regression}\label{algo:ptas}
	\hspace*{\algorithmicindent} \textbf{Input} $0 < \opt \le 1$ and access to i.i.d. samples from $\D$\\
	\hspace*{\algorithmicindent} \textbf{Parameters} $r \in \mathbb{N}$, $\epsilon, \gamma, \epsilon, W > 0$
	\begin{algorithmic}[1]
		\State Find $\bw$ using algorithm from previous section. 
		This takes $O(\frac{d}{\eps^2} \log \frac{d}{\delta})$ samples and 
		satisfies $\| \bw - \bw^* \|_2\leq O(1/\nu) \cdot \sqrt{\opt}$. 
		
		\State Let $T = \{\bu \in \mathbb{R}^d: |\iprod{\bw, \bu}| \le  \gamma \sqrt{\opt}\}$, $T_{+} = \{\bu \in \mathbb{R}^d: \iprod{\bw, \bu} >  \gamma\sqrt{\opt}\}$ and $T_{-} = \{\bu \in \mathbb{R}^d: \iprod{\bw, \bu} < - \gamma\sqrt{\opt}\}$.
		\State Find a degree $k = \frac{1}{\eta^3}$, $d$-variate polynomial, $P$, the $\ell_1$-norm of whose 
		coefficients is at most $\nu \cdot O(4^k)= O(4^k)$, using $L_2$-polynomial regression on 
		$m_{pol} = O(\frac{k^k \cdot d^k}{\nu^{2k}} \cdot \frac{1}{\epsilon^2})$ samples such that
		\[
		\E_{S|_T}[(P(\bx) - y)^2] \le \min_{P'\in POL_{r,d}}\E_{\D|_T}[(P'(\bx) - y)^2] + \epsilon \;,
		\]
		where $h_{\bw^*}$ is the optimal ReLU classifier w.r.t. $\D$.
		\State Find $\bw_+ \in B(d, 1)$ using least squares with $m_{ls} = O(1/\epsilon^2)$ to get
		\[
		\E_{S|_{T_+}}[(\iprod{\bw_+, \bx} - y)^2] \le \min_{\bw' \in B(d,W)} \E_{\D|_{T_+}}[(\iprod{\bw', \bx} - y)^2] + \epsilon \;.  
		\]
		\State Output the following classifier:
		\begin{align*}
		h(\bx) = 
		\begin{cases}
		\iprod{\bw_+, \bx}, & \bx \in T_{+}\\
		P(\bx), & \bx \in T\\
		0, & \bx \in T_{-}
		\end{cases}
		\end{align*}
	\end{algorithmic}
\end{algorithm}

\begin{figure}
	\includegraphics[scale=0.75]{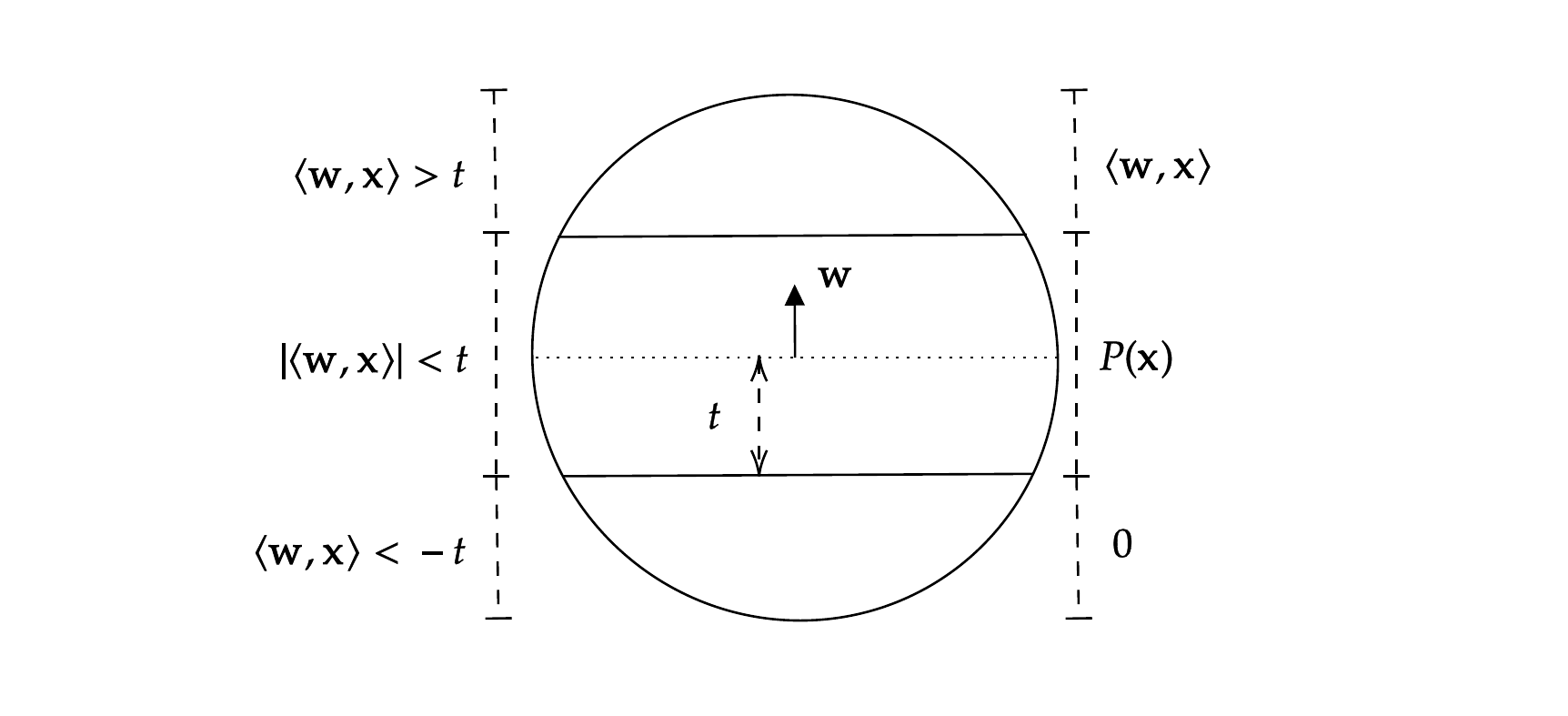}
	\centering
	\caption{We partition the space into three regions depending on $\iprod{\bw, \bx}$. Our hypothesis returns $0$, $\iprod{\bw,\bx}$ or the value of $P(\bx)$ depending on the region. We set $t = \gamma \sqrt{\opt}$.} \label{fig:ptas_informal}
\end{figure}

We now prove Lemma~\ref{lem:three-pieces}. 
\begin{proof}[Proof of Lemma \ref{lem:three-pieces}]
Let $\bw, \bw^*$ be as defined in Algorithm~\ref{algo:ptas}. 
We first project $\D_{\bx}$ down to two dimensions. Let $S = \{ \bx \mid \iprod{\bw^*, \bx} > 0 \}$. Let $V$ be the 2-dimensional space spanned by $\bw, \bw^*$ and let $P_V$ be the orthogonal projection onto $V$. If $\bx \in \overline{S} \land T_+$, then $\iprod{\bw, \bx} \geq \gamma \sqrt{\opt}$ and $\iprod{ \bw^*, \bx }  \leq 0$. Since $\bw$ is a constant factor approximation for a $\nu$-subgaussian distribution, with probability $1-\delta$ we have 
$\| \bw - \bw^*\|_2\leq \frac{c_{\chi}}{\nu} \cdot \sqrt{\opt}$, for some constant $c_{\chi}$. 
This is easy to check via the Cauchy-Schwartz inequality 
and using the structural lemmas from previous subsections. Additionally
	\begin{align}\label{eqn:w^*.x_bounds}
	-\iprod{ \bw^*, \bx }  &= -\iprod{ \bw^*,  P_V(\bx) }
	= - \iprod{ \bw^* - \bw, P_V(\bx)} - \iprod{ \bw, P_V(\bx)} \\
	&\leq - \iprod{ \bw^* - \bw, P_V(\bx)}
	\leq \| \bw^* - \bw \|_2\|P_V(\bx)\|_2\leq \frac{c_{\chi}}{\nu} \sqrt{\opt}\|P_V(\bx)\|_2 \;.
	\end{align}
	%	\SG{Last inequality does not hold, hence we get $-\iprod{\bw^*, \bx} \le c \sqrt{\opt} \|P_V(x)\|$}
A similar calculation for $\bx \in S \land T_-$ implies $\iprod{ \bw^*, \bx }  \le \frac{c_{\chi}}{\nu} \sqrt{\opt} \|P_V(\bx)\|_2$. 
We now bound above the error in the region $T_+$. 
Since $\iprod{\bw_{+}, \bx}$ is the best fitting linear function over $T_+$, 
the loss of $\iprod{\bw^*, \bx}$ is necessarily larger than that of $\iprod{\bw_{+}, \bx}$. 
An application of Lemma~\ref{lem:concentration2} in the first step implies
\begin{align*}
	\E_{S}\left[ \left( \iprod{ \bw_{+}, \bx }  - y\right)^2 1_{T_+}(\bx) \right] &=  \min_{\bw'\in B(d, W)}\E_{\D}[(\iprod{\bw',\bx} - y)^2 1_{T_+}(\bx)] + \epsilon \Pr_{\D}[T_+] \\
	&\leq \E_{\D}[(\iprod{ \bw^*, \bx }  - y)^2 1_{T_+}(\bx)] + \epsilon\Pr_{\D}[T_+] \;.
	\end{align*}
Observe that for $\bx \in S$, $\iprod{\bw^*, \bx} = \relu(\iprod{\bw^*, \bx})$. 
Since $1_{T_+}(\bx) = 1_{T_+ \land S}(\bx) + 1_{T_+ \land \overline{S}}(\bx)$, we get
\begin{align*}
\E_{\D}[(\iprod{ \bw^*, \bx }  - y)^2 1_{T_+}(\bx)]
&= \E_{\D}[(\iprod{ \bw^*, \bx }  - y)^2 1_{T_+ \land S}(\bx)] + \E_{\D}[(\iprod{ \bw^*, \bx }  - y)^2 1_{T_+ \land \overline{S}}(\bx)] \\ 
&= \E_{\D}[(\relu(\iprod{ \bw^*, \bx } ) - y)^2 1_{T_+ \land S}(\bx)] + \E_{\D}[y^2 1_{T_+ \land \overline{S}}(\bx)]	\\
&+ \E_{\D}[(\iprod{ \bw^*, \bx } )(\iprod{ \bw^*, \bx }  - 2y) 1_{T_+ \land \overline{S}}(\bx)] \;.
\end{align*}
It remains to show that the terms corresponding to $1_{T_+ \land S}$ contribute a small error overall. 
Note that $\E_{\D}[y^2 1_{T_+ \land \overline{S}}(\bx)] = \E_\D[(y-\relu(\iprod{\bw^*, \bx}))^2 1_{T_+ \land \overline{S}}(\bx)]$. 
This implies
\begin{align*}
\E_{\D}[(\iprod{ \bw^*, \bx }  - y)^2 1_{T_+}(\bx)]	&=  \E_{\D}[(\relu(\iprod{ \bw^*, \bx } ) - y)^21_{T_+}(\bx)] + \E_{\D}[(\iprod{ \bw^*, \bx } )(\iprod{ \bw^*, \bx }  - 2y) 1_{T_+ \land \overline{S}}(\bx)] \\
& \leq \E_{\D}[(\relu(\iprod{ \bw^*, \bx } ) - y)^21_{T_+}(\bx)] + \E_{\D}[|\iprod{ \bw^*, \bx } |(|\iprod{ \bw^*, \bx } | + 2) 1_{T_+ \land \overline{S}}(\bx)] \;.
\end{align*}
To bound above the second term, we use bounds from Equation~\eqref{eqn:w^*.x_bounds}, 
and an application of Lemma~\ref{lem:expectation_bound}.
\begin{align*}
\E_{\D}[|\iprod{ \bw^*, \bx } |(|\iprod{ \bw^*, \bx } | + 2) 1_{T_+ \land \overline{S}}(\bx)] & \leq \E_{\D}\left[\frac{c_{\chi}}{\nu}\sqrt{\opt}\|P_V(\bx)\|_2\left(\frac{c_{\chi}}{\nu}\sqrt{\opt}\|P_V(\bx)\|_2+ 2 \right) 1_{T_+ \land \overline{S}}(\bx)\right]\\
&= \frac{c_{\chi}}{\nu} \cdot \sqrt{\opt} \cdot  \E_{\D}\left[\left(\frac{c_{\chi}}{\nu}\sqrt{\opt}\|P_V(\bx)\|_2^2 + 2\|P_V(\bx)\|_2\right) 1_{T_+ \land \overline{S}}(\bx) \right] \\
&\leq \frac{\eta}{6} \cdot \opt \;.
\end{align*}
Overall, the three equation blocks above imply that the first condition of the lemma is true. 
The analysis of the error of $h$ in $T_-$ is done similarly. We can write:
\begin{align*}
&\E_{\D}\left[ y^2 1_{T_-}(\bx) \right] \\
&= \E_{\D}[y^2 1_{T_- \land S}(\bx)] + \E_{\D}[y^2 1_{T_- \land \overline{S}}(\bx)] \\
&= \E_{\D}[(y - \relu(\iprod{ \bw^*, \bx } ) + \relu(\iprod{ \bw^*, \bx } ))^2 1_{T_- \land S}(\bx)] + \E_{\D}[(y - \relu(\iprod{ \bw^*, \bx } ))^2 1_{T_- \land \overline{S}}(\bx)]\\
&=  \E_{\D}[(y - \relu(\iprod{ \bw^*, \bx } ))^21_{T_-}(\bx)] + \E_{\D}[(\iprod{ \bw^*, \bx } )(2y - (\iprod{ \bw^*, \bx } )) 1_{T_- \land S}(\bx)]\\
&\le  \E_{\D}[(y - \relu(\iprod{ \bw^*, \bx } ))^21_{T_-}(\bx)] + 2\E_{\D}[|\iprod{ \bw^*, \bx } |(2 - |\iprod{ \bw^*, \bx } |) 1_{T_- \land S}(\bx)]\\
	%	&\le  \E_{\D}[(y -\relu(\iprod{ \bw^*, \bx } ))^21_{T_-}(\bx)] + O \left( \max \{ \gamma^2, 1\} \exp(-(\gamma/ \sqrt{2}c_{\chi})^2) \right) \cdot \opt \\
&\le \E_{\D}[(y -\relu(\iprod{ \bw^*, \bx } ))^21_{T_-}(\bx)] + \frac{\eta}{6} \cdot \opt \;.
\end{align*}
Finally, we analyze the error of our hypothesis in the region $T$. 
\begin{align*}
&\E_{\D} \left[ \left( P(\bx) - y \right)^2 1_T(\bx) \right]\\
&=  \E_{\D} \left[(\relu(\iprod{\bw^*, \bx}) - y)^2 1_T(\bx) \right] + 2\E_{\D} \left[\left( P(\bx) - \relu(\iprod{\bw^*, \bx})\right) (\relu(\iprod{\bw^*, \bx}) - y ) 1_T(\bx) \right]\\
& + \E_\D\left[ (\relu(\iprod{\bw^*, \bx}) - P(\bx))^2 1_T(\bx)\right] \;.
\end{align*}
The final error term is bounded above via applications of Lemma~\ref{lem:concentration2} 
and Lemma~\ref{lem:poly_approx} by $\frac{\eta^2}{10} \cdot \opt$. 
To bound from above the cross term, we use the fact that $1_T(\bx)^2 = 1_T(\bx)$
\begin{align*}
&2\E_{\D} \left[\left( P(\bx) - \relu(\iprod{\bw^*, \bx})\right) 1_T(\bx) (\relu(\iprod{\bw^*, \bx}) - y) 1_T(\bx) \right] \\
& \leq \sqrt{ \E_{\D} \left[ \left( P(\bx) - \relu(\iprod{\bw^*, \bx})\right)^2 1_T(\bx) \right]}  
\sqrt{\E_{\D} \left[  (\relu(\iprod{\bw^*, \bx}) - y)^2 1_T(\bx) \right] } \\
& \leq \frac{\eta}{10} \cdot \opt \;.
\end{align*}
Putting these together gives us the desired result.	
\end{proof}

%\begin{theorem}
%	There is an algorithm that takes $m \geq O(\frac{k^k \cdot d^k}{\nu^{2k}} \cdot \frac{1}{\epsilon^2})$ where $k = \frac{1}{\eta^3}$ samples and runs in time $O \left( \frac{1}{\eps^2} \cdot \left( \frac{d}{\eta^3 \nu^2} \right)^{1/\eta^3} \right)$ and returns a hypothesis $h$ that satisfies 
%	\[ \E_{\D} \left[ (h(\bx) - \relu(\iprod{\bx, \bw^*}))^2 \right] \leq (1+\eta) \opt + \eps \]
%\end{theorem}
%\begin{proof} 
%	This follows from a direct application of Lemma~\ref{lem:three-pieces}. The total loss is just given by 
%	\begin{align*}
%	\E_{\D} \left[ (h(\bx) - \relu(\iprod{\bx, \bw^*}))^2 \right] &= \E_{\D} \left[ (h(\bx) - \relu(\iprod{\bx, \bw^*}))^2 (1_{T_+}(\bx) + 1_T(\bx) + 1_{T_-}(\bx))\right] \\
%	&\leq 
%	\end{align*}
%\end{proof} 

\section{Conclusions} \label{sec:conc}
In this work, we gave the first constant approximation scheme for ReLU regression under the assumption of log-concavity. We proved that optimizing a convex surrogate loss suffices for obtaining approximate guarantees. We further proposed a PTAS for ReLU regression under the assumption of sub-gaussianity, which refines the so obtained solution using ideas from localization and polynomial approximation. 

Our work here was focussed on the ReLU activation and we leave open the extensions to other activation functions. We believe that the Chow learnability condition is potentially satisfied under log-concavity for activations that approximate thresholds such as sigmoid.

The underlying surrogate loss approach seems powerful and exploring further applications is an interesting direction for future work. Further, designing approximation schemes for a linear combination of activations functions is an interesting open question.

\subsection*{Acknowledgements}
This work was done in part while the authors were visiting the Simons Institute for the Theory of Computing for the Summer 2019 program on the Foundations of Deep Learning. 
ID was supported by NSF Award CCF-1652862 (CAREER), a Sloan Research Fellowship, and 
a DARPA Learning with Less Labels (LwLL) grant. SG was supported by the JP Morgan AI Phd Fellowship. SK was supported by NSF award CNS 1414082 and ID's startup grant. 
AK was supported by NSF awards CCF 1909204 and CCF 1717896. 
MS was supported by the Packard Fellowship in Science and Engineering, a Sloan Research Fellowship in Mathematics, an NSF-CAREER under award \#1846369, the Air Force Office of Scientific Research Young Investigator Program (AFOSR-YIP) under award \#FA 9550-18-1-0078, DARPA Learning with Less Labels (LwLL) and FastNICs programs, an NSF-CIF award \#1813877, and a Google faculty research award. 

\bibliographystyle{apalike}
\bibliography{allrefs,Bibfiles}

\newpage

\appendix

% !TeX root    = arxiv.tex

\section{Useful Properties}
We use the following fact about sub-gaussian distributions.
\begin{fact}\label{lem:concentration}
	If $\D$ is $\nu$-subgaussian then if $P$ is an $d$-variate degree $k$ polynomial then taking an expectation over $m_0$ samples yeilds
	\[ 
	\Pr \left[ | \E_{S}[P(\bx)] - \E_{\D}[P(\bx)] | \geq \eps \right] \leq \exp \left(-\frac{m \eps^2}{\nu^2 {\text Var}[P(\bx)]} \right)^{1/k} \;.
	\]
\end{fact}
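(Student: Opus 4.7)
The plan is to prove the bound via an $L^p$-moment estimate followed by Markov's inequality. The key observation is that the empirical average
\[
\bar P(\bx_1,\ldots,\bx_m) \eqdef \E_{S}[P(\bx)] = \frac{1}{m}\sum_{i=1}^m P(\bx_i)
\]
is itself a polynomial of total degree $k$ in the concatenated vector $(\bx_1,\ldots,\bx_m)\in \R^{md}$, whose coordinates are independent $\nu$-subgaussian. Hence the task reduces to tail bounds for a single degree-$k$ polynomial in a subgaussian vector on $\R^{md}$.

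The workhorse will be the hypercontractive moment inequality for such polynomials: for any degree-$k$ polynomial $Q$ in independent $\nu$-subgaussian coordinates and any $p\ge 2$,
\[
\|Q - \E Q\|_{p} \;\le\; (C\, p\, \nu^2)^{k/2}\, \|Q - \E Q\|_{2}
\]
for an absolute constant $C$. In the Gaussian case this is classical (Gaussian chaos / Bonami--Beckner); in the subgaussian case it follows from decoupling together with Khintchine--Kahane-type estimates (Latala; Arcones--Gin\'e). Applying it to $Q = \bar P$ and using sample independence to compute
\[
\|\bar P - \E P\|_{2}^{2} \;=\; \var(\bar P) \;=\; \var(P)/m ,
\]
one obtains $\|\bar P - \E P\|_{p} \le (Cp\nu^2)^{k/2}\sqrt{\var(P)/m}$.

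Markov's inequality applied to the $p$-th power then gives
\[
\Pr\bigl[\,|\E_S P - \E_\D P| \ge \eps\,\bigr] \;\le\; \frac{\|\bar P - \E P\|_{p}^{p}}{\eps^{p}} \;\le\; \left(\frac{(Cp\nu^2)^{k/2}\sqrt{\var(P)/m}}{\eps}\right)^{p}.
\]
Choosing $p \asymp \bigl(m\eps^2/(\nu^2 \var(P))\bigr)^{1/k}$ balances the two factors (under the assumption that this choice satisfies $p\ge 2$; otherwise the stated bound is trivially at most $1$) and produces a tail of the form $\exp\!\bigl(-c\,(m\eps^2/(\nu^2\var(P)))^{1/k}\bigr)$ for some absolute constant $c>0$, matching the advertised inequality up to the constant inside the exponent.

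The only nontrivial step is the hypercontractive moment bound for subgaussian inputs: for Gaussian inputs it is classical, while in the general subgaussian case it requires either invoking an off-the-shelf polynomial-concentration theorem (Latala, Adamczak--Wolff) or re-deriving it from the scalar subgaussian moment estimate $\|\langle \bv,\bx\rangle\|_{p} \lesssim \nu\sqrt{p}$ together with a decoupling argument peeling off one variable at a time. Everything after that is routine algebra and optimization over $p$.
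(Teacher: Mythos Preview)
The paper does not actually prove this statement: it is recorded as a \textbf{Fact} in the appendix under ``Useful Properties'' and used as a black box, so there is no in-paper argument to compare against.

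Your route---treat the empirical mean $\bar P$ as a degree-$k$ polynomial in the concatenated sample, invoke the hypercontractive moment bound $\|Q-\E Q\|_p \lesssim (p\,\nu^2)^{k/2}\|Q-\E Q\|_2$, apply Markov at level $p$, and optimize $p$---is the standard derivation and yields exactly the advertised tail (up to absolute constants in the exponent). One point to watch: you assert that the $md$ coordinates of $(\bx_1,\ldots,\bx_m)$ are \emph{independent} $\nu$-subgaussian, but the paper's definition of $\nu$-subgaussian only bounds the density of one-dimensional projections $\iprod{\bv,\bx}$; it says nothing about independence of the coordinates within a single $\bx_i$. Hypercontractivity for degree-$k$ polynomials does not follow from marginal subgaussianity alone. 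Since the paper itself treats the inequality as a known fact and, in its sole application (bounding monomial averages), effectively assumes the needed moment growth anyway, this is more a looseness inherited from the paper than one introduced by you. Still, for a self-contained argument you should either add coordinate independence as a hypothesis, or bypass the $md$-dimensional hypercontractivity altogether by first arguing that each scalar $P(\bx_i)$ has $\psi_{2/k}$-type moment growth and then applying a heavy-tailed Bernstein inequality directly to the iid sum $\frac{1}{m}\sum_i P(\bx_i)$.
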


\section{Proof of Theorem \ref{thm:mahdi}}\label{sec:mahdi}
We begin by stating a few auxiliary lemmas that play a crucial role in our proof.

\begin{lemma}\label{concen} 
Consider the assumptions of Theorem \ref{thm:mahdi}. 
Also assume $(\vct{x}_i,y_i)_{i=1}^m$ are generated i.i.d.~with $\vct{x}_i$ having a log-concave 
marginal and $y_i$ obeying $\abs{y_i}\le 1$. Furthermore, assume $\twonorm{\widehat{\vct{w}}}\le W$ 
and $\sigma:\R\rightarrow\R$ is an activation obeying $\abs{\sigma(z)}\le B\abs{z}$. Then, as long as
\begin{align*}
m \geq \tilde{\Omega} \left( \frac{d}{\xi^2} \log^4 (d/\delta)\left(W+1\right)^2 \right) \;,
\end{align*}
we have that
\begin{align*}
\twonorm{\frac{1}{m}\sum_{i=1}^m\left(\sigma\left(\langle \widehat{\vct{w}},\vct{x}_i\rangle\right)-y_i\right)\vct{x}_i}\le \xi
\end{align*}
holds with probability at least $1-\delta$.
\end{lemma}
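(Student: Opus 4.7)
The plan is to prove this via a covering argument on the unit sphere, combined with Bernstein-type tail bounds that exploit the subexponential nature of linear functionals of a log-concave $\vct{x}_i$. Since the conclusion bounds $\|(1/m)\sum_i Z_i\|_2$ itself (and not a deviation from its mean), the natural reading is that $\widehat{\vct{w}}$ is the population minimizer of $L_\D^{\sur}$, so that $\E[Z_i] = \nabla L_\D^{\sur}(\widehat{\vct{w}}) = 0$; the extension to general $\widehat{\vct{w}} \in B(d,W)$ follows by subtracting $\nabla L_\D^{\sur}(\widehat{\vct{w}})$ from each summand and paying the corresponding bias term separately.

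First, I would write the norm as a supremum over the unit sphere,
\[
\Big\|\frac{1}{m}\sum_{i=1}^m Z_i\Big\|_2 \;=\; \sup_{\|\vct{u}\|_2 = 1}\ \frac{1}{m}\sum_{i=1}^m \langle \vct{u},\, Z_i\rangle, \qquad Z_i \;:=\; \bigl(\sigma(\langle \widehat{\vct{w}}, \vct{x}_i\rangle) - y_i\bigr)\vct{x}_i,
\]
and pass to a $1/2$-net $\mathcal{N}$ of the unit sphere with $|\mathcal{N}|\le 5^d$, so that the left-hand side is at most $2\sup_{\vct{u}\in\mathcal{N}} (1/m)\sum_i \langle \vct{u}, Z_i\rangle$.

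Next, for each fixed $\vct{u}\in\mathcal{N}$, the scalar random variable $\zeta_i(\vct{u}) := \langle \vct{u}, Z_i\rangle = (\sigma(\langle\widehat{\vct{w}},\vct{x}_i\rangle)-y_i)\langle \vct{u},\vct{x}_i\rangle$ factors as a product of $\sigma(\langle\widehat{\vct{w}},\vct{x}_i\rangle)-y_i$, bounded by $B|\langle\widehat{\vct{w}},\vct{x}_i\rangle|+1$, and $\langle \vct{u},\vct{x}_i\rangle$. Under isotropic log-concavity, $\langle \vct{v},\vct{x}\rangle$ is subexponential with $\psi_1$-norm $O(\|\vct{v}\|_2)$, so $\zeta_i(\vct{u})$ is sub-Weibull of order $1/2$ with scale $O(W+1)$. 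After truncating to the high-probability event $\|\vct{x}_i\|_2 \le R = O(\sqrt{d}\,\log(m/\delta))$ and absorbing the truncation tail by a separate union bound, I would apply a one-sided Bernstein inequality to the truncated summands, yielding
\[
\Pr\!\Big[\frac{1}{m}\sum_{i=1}^m \zeta_i(\vct{u}) > \xi/2\Big] \;\le\; \exp\!\left(-\,c\,\frac{m\,\xi^{2}}{(W+1)^{2}\,\mathrm{polylog}(d/\delta)}\right).
\]
A union bound over the $5^d$ net points then gives the claim as long as $m \gtrsim (W+1)^{2}(d+\log(1/\delta))\,\mathrm{polylog}(d/\delta)/\xi^{2}$, matching the stated complexity once all polylogarithmic contributions are combined into $\log^{4}(d/\delta)$.

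The main obstacle will be carefully tracking the $\log^{4}(d/\delta)$ factor: since $\zeta_i(\vct{u})$ is only sub-Weibull of order $1/2$ rather than subexponential, a naive Bernstein bound loses one $\log$ from the truncation radius, additional $\log$ factors from the heavy-tail Bernstein regime, and a further $\log$ from the union bound; these polylog contributions must be tracked carefully, but they do not change the leading-order dependence. The covering-argument route is what delivers the linear (rather than quadratic) dependence on $d$; a naive coordinate-wise concentration of $\E\|\sum_i Z_i\|_2^{2}$ followed by Markov would instead yield an $O(d^{2})$ sample complexity. Finally, because the lemma's conclusion is a one-sided upper bound on a norm, the one-sided Bernstein variant suffices and we do not need the full symmetric two-sided estimate at this stage.
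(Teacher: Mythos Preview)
Your approach is sound in spirit but differs from the paper's, and there is one step that does not quite work as written.

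\textbf{How the paper argues.} The paper does \emph{not} use a net on the sphere. It bounds the vector $\vct{z}=\frac{1}{m}\sum_i(\sigma(\langle\widehat{\vct{w}},\vct{x}_i\rangle)-y_i)\vct{x}_i$ \emph{coordinate-wise}: each coordinate $\vct{z}_j$ is a sum of i.i.d.\ terms of the form $(\sigma(\|\widehat{\vct{w}}\|Z_i)-y_i)X_i$ with $Z_i,X_i$ subexponential, hence each summand has $\psi_{1/2}$-norm $O(W+1)$. Talagrand's inequality for sums of $\psi_{1/2}$ variables then gives $\|\vct{z}_j\|_{\psi_{1/2}}\le c(W+1)\log m/\sqrt{m}$, so $\Pr[|\vct{z}_j|\ge ct(W+1)\log m/\sqrt{m}]\le Ce^{-\sqrt{t}}$. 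A union bound over the $d$ coordinates and the choice $t=\log^2(d/\delta)$ then yields $\|\vct{z}\|_2\le c\sqrt{d/m}\,(W+1)\log m\,\log^2(d/\delta)$ with probability $1-\delta$, which is the stated sample complexity. So your remark that ``coordinate-wise concentration'' necessarily costs $d^2$ is misplaced: the paper is coordinate-wise and still achieves linear $d$, because it controls each coordinate's tail rather than the second moment of the norm.

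\textbf{The issue with your truncation.} In your covering route you truncate at $\|\vct{x}_i\|_2\le R=O(\sqrt{d}\log(m/\delta))$. After this truncation the scalar $\zeta_i(\vct{u})$ is bounded by $M\asymp (W+1)R^2\asymp (W+1)d\log^2(m/\delta)$, and Bernstein's exponent $m\xi^2/(\sigma^2+M\xi)$ must beat $d\log 5$ from the union bound over $5^d$ net points. The $M\xi$ term then forces $m\gtrsim dM/\xi\asymp (W+1)d^2\log^2/\xi$, i.e.\ a $d^2$ dependence in the sub-Gaussian-to-subexponential crossover regime. The fix is simple: truncate only the factor that is \emph{independent of} $\vct{u}$, namely $|\sigma(\langle\widehat{\vct{w}},\vct{x}_i\rangle)-y_i|\le L=O((W+1)\log(m/\delta))$ (one subexponential direction, a single union bound over $m$ samples). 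Conditioned on this event, $\zeta_i(\vct{u})$ is $L$ times a subexponential of $\psi_1$-norm $O(1)$, its variance is still $O((W+1)^2)$, and Bernstein now gives the exponent $c\min\!\big(m\xi^2/(W+1)^2,\ m\xi/L\big)$, both of which beat $d$ once $m\gtrsim d(W+1)^2\mathrm{polylog}/\xi^2$. With this correction your covering argument goes through and matches the paper's bound.

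\textbf{Comparison.} The paper's route is shorter because the $\psi_{1/2}$ Talagrand bound handles the heavy tail in one stroke and the union bound is only over $d$ (not $5^d$) points; this is also why only $\log^4$ rather than a larger polylog appears. Your covering route is more elementary in that it avoids the $\psi_{1/2}$ machinery, but it requires the more careful truncation above to avoid losing a factor of $d$. Your observation about centering (that $\widehat{\vct{w}}$ is the population minimizer so $\E Z_i=0$) is correct and matches how the lemma is used in the paper.
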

Next we show that the gradient of the surrogate loss obeys a certain correlation inequality with the proof deferred to end of the section.
\begin{lemma}
\label{corineq}
As long as $\epsilon \le W$, $\delta \ge e^{-O(\sqrt{d})}$ and
\begin{align*}
m \geq \tilde{\Omega} \left(  \frac{\gamma^2}{\mu^2}d \log\left(\frac{W + 1}{\mu\delta}\right) \right) \;,
\end{align*}
we have
\begin{align}
\label{cond}
\langle \nabla \widehat{L}^\sur(\vct{w})-\nabla \widehat{L}^\sur(\widehat{\vct{w}}),\vct{w}-\widehat{\vct{w}}\rangle\ge \alpha \twonorm{\vct{w}-\widehat{\vct{w}}}^2+\beta\twonorm{\nabla \widehat{L}^\sur(\vct{w})-\nabla \widehat{L}^\sur(\widehat{\vct{w}})}^2
\end{align}
holds for all $\vct{w}\in\R^d$ obeying $\frac{\epsilon}{3}\le\twonorm{\vct{w}-\widehat{\vct{w}}}\le 2W$ with $\alpha=\frac{\mu}{3}$ and $\beta=\frac{1}{8}$ with probability at least $1- \delta$.
% e^{-m\frac{\mu^2}{99\gamma^2} }-e^{-c\sqrt{d}}$. 
\end{lemma}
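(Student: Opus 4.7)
The plan is to first establish a cleaner version of the coercivity inequality at the population level and then transfer it to the empirical level, absorbing the concentration error into deliberate slack in the constants.

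\textbf{Step 1 (population-level inequality).} Because $\sigma$ is non-decreasing and $1$-Lipschitz, the scalar inequality $(\sigma(a)-\sigma(b))(a-b) \ge (\sigma(a)-\sigma(b))^2$ holds for all real $a,b$. Substituting $a=\langle \vct{w},\vct{x}\rangle$, $b=\langle \widehat{\vct{w}},\vct{x}\rangle$ and taking expectations over $\vct{x}\sim \D_{\X}$ gives
\begin{align*}
\langle \nabla L^\sur(\vct{w})-\nabla L^\sur(\widehat{\vct{w}}),\vct{w}-\widehat{\vct{w}}\rangle \ge \E\!\left[(\sigma(\langle \vct{w},\vct{x}\rangle)-\sigma(\langle \widehat{\vct{w}},\vct{x}\rangle))^2\right].
\end{align*}
Dually, Cauchy--Schwarz with the variational form of the Euclidean norm and the isotropy of $\D_{\X}$ (exactly as in the proof of Lemma~\ref{lem:chow_upper}) gives the matching upper bound $\|\nabla L^\sur(\vct{w})-\nabla L^\sur(\widehat{\vct{w}})\|_2^2 \le \E[(\sigma(\langle \vct{w},\vct{x}\rangle)-\sigma(\langle \widehat{\vct{w}},\vct{x}\rangle))^2]$. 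Averaging this with the $\mu$-strong convexity hypothesis (Definition~\ref{def:sc}) yields the population analogue with the generous constants $\alpha_0 = \mu/2$ and $\beta_0 = 1/2$.

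\textbf{Step 2 (empirical perturbation).} Set $\Delta(\vct{w}) := \nabla \widehat{L}^\sur(\vct{w}) - \nabla L^\sur(\vct{w})$. Plugging the empirical gradients into the Step~1 inequality costs, on the left-hand side, an additive error $\langle \Delta(\vct{w})-\Delta(\widehat{\vct{w}}),\vct{w}-\widehat{\vct{w}}\rangle$ bounded in absolute value by $\xi \twonorm{\vct{w}-\widehat{\vct{w}}}$, where $\xi:=\sup_{\vct{w}\in B(d,2W)}\twonorm{\Delta(\vct{w})-\Delta(\widehat{\vct{w}})}$, and, on the right-hand side via the elementary inequality $\|A\|^2 \ge \tfrac12 \|B\|^2 - \|A-B\|^2$, an additive error $\xi^2$. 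Using AM--GM to split $\xi \twonorm{\vct{w}-\widehat{\vct{w}}} \le \frac{\mu}{12}\twonorm{\vct{w}-\widehat{\vct{w}}}^2 + \frac{3\xi^2}{\mu}$, the lower bound $\twonorm{\vct{w}-\widehat{\vct{w}}}\ge \epsilon/3$, and the fact that the inequality we are proving itself implies the empirical strong convexity bound $\twonorm{\nabla\widehat{L}^\sur(\vct{w}) - \nabla \widehat{L}^\sur(\widehat{\vct{w}})} \ge \tfrac{\mu}{3}\twonorm{\vct{w}-\widehat{\vct{w}}}$ (Cauchy--Schwarz), the error terms fit inside the leftover slack $(\tfrac\mu2 - \tfrac\mu3)\twonorm{\vct{w}-\widehat{\vct{w}}}^2 + (\tfrac12 - \tfrac18)\twonorm{\nabla \widehat{L}^\sur(\vct{w})-\nabla\widehat{L}^\sur(\widehat{\vct{w}})}^2$ provided $\xi$ is a sufficiently small absolute multiple of $\mu$.

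\textbf{Step 3 (uniform bound on $\xi$).} The residual $\Delta(\vct{w})-\Delta(\widehat{\vct{w}})$ is the centered empirical average of $(\sigma(\langle \vct{w},\vct{x}_i\rangle)-\sigma(\langle \widehat{\vct{w}},\vct{x}_i\rangle))\vct{x}_i$. We construct an $\rho$-net $\mathcal{N}$ of $B(d,2W)$ of size at most $(6W/\rho)^d$, apply the sub-exponential pointwise concentration afforded by Lemma~\ref{concen} (applied to the Lipschitz activation $a \mapsto \sigma(a)-\sigma(\langle \widehat{\vct{w}},\cdot\rangle)$) at each net point, and interpolate to all of $B(d,2W)$ using the Lipschitz estimate $\twonorm{\Delta(\vct{w})-\Delta(\vct{w}')}\le \twonorm{\vct{w}-\vct{w}'}\,(1+\|\tfrac{1}{m}\sum_i \vct{x}_i \vct{x}_i^{\top}\|_{\mathrm{op}})$, whose operator-norm factor is $O(1)$ w.h.p.\ once $m\gtrsim d$ for isotropic log-concave data. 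Choosing $\rho$ a fixed multiple of $\mu$ and taking logarithms yields the claimed sample complexity $m \gtrsim (d/\mu^2)\log((W+1)/(\mu\delta))$.

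\textbf{Main obstacle.} The delicate part is keeping the dimensional dependence linear in $d$ while simultaneously absorbing the perturbation into slack of order $\mu$ (rather than $\mu\epsilon$): a crude union bound over an $\epsilon$-dependent net would give a superfluous $1/\epsilon^2$ factor, so we rely on the bootstrap observation that the inequality we are proving already supplies empirical strong convexity, which then bounds the linear error $\xi \twonorm{\vct{w}-\widehat{\vct{w}}}$ by means of AM--GM using only $\xi = O(\mu)$. This self-referential absorption must be sequenced carefully with the concentration argument so as not to be circular, and the log-concavity of $\D_{\X}$ is used twice: once to obtain the operator-norm bound bridging net points and once to supply the sub-exponential pointwise concentration of the empirical gradients.
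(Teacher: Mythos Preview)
Your proof has a genuine gap in Step~3 that prevents you from achieving the stated linear-in-$d$ sample complexity. You propose to bound $\xi=\sup_{\vct{w}\in B(d,2W)}\twonorm{\Delta(\vct{w})-\Delta(\widehat{\vct{w}})}$ by $O(\mu)$ via Lemma~\ref{concen} at each point of a net of size $(6W/\rho)^d$ with $\rho=\Theta(\mu)$. But for log-concave $\vct{x}_i$ the summands $(\sigma(\langle\vct{w},\vct{x}_i\rangle)-\sigma(\langle\widehat{\vct{w}},\vct{x}_i\rangle))\,\vct{x}_{ij}$ are only $\psi_{1/2}$, so the pointwise tail in Lemma~\ref{concen} decays like $e^{-\sqrt{t}}$. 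Surviving a union bound over $e^{\Theta(d)}$ net points then forces $t\gtrsim d^2$, which pushes the sample requirement to a higher polynomial in $d$, not $\tilde O(d)$. This is exactly the obstacle flagged in Section~\ref{ssec:approach}: with this few samples the gradient is \emph{not} uniformly concentrated in all directions, so any route through a uniform bound on $\twonorm{\Delta}$ cannot deliver the lemma as stated.

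The paper's proof sidesteps this with two ideas you are missing. For the $\alpha$ term it never controls $\twonorm{\Delta}$; it works directly with the normalized scalar
\[
\mathcal{Y}_i(\vct{w})=\frac{\bigl(\sigma(\langle\vct{w},\vct{x}_i\rangle)-\sigma(\langle\widehat{\vct{w}},\vct{x}_i\rangle)\bigr)\,\langle\vct{x}_i,\vct{w}-\widehat{\vct{w}}\rangle}{\twonorm{\vct{w}-\widehat{\vct{w}}}^2},
\]
which is \emph{deterministically nonnegative} by monotonicity of $\sigma$. Hence $\mathcal{X}_i:=\E[\mathcal{Y}_i]-\mathcal{Y}_i\le 1$ one-sidedly, and a one-sided Bernstein inequality (with variance bounded via the fourth-moment constant $\gamma$) gives a sub-Gaussian tail $e^{-cm\mu^2/\gamma^2}$ that does survive a union bound over a net on $\mathbb{S}^{d-1}\times[\epsilon/3,2W]$ with $m=\tilde O(d\gamma^2/\mu^2)$. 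For the $\beta$ term the paper does not transfer from population to empirical at all: it uses the purely empirical inequalities
\[
\langle\nabla\widehat{L}^\sur(\vct{w})-\nabla\widehat{L}^\sur(\widehat{\vct{w}}),\vct{w}-\widehat{\vct{w}}\rangle\ \ge\ \tfrac{1}{m}\twonorm{\sigma(\mtx{X}\vct{w})-\sigma(\mtx{X}\widehat{\vct{w}})}^2
\]
and $\twonorm{\nabla\widehat{L}^\sur(\vct{w})-\nabla\widehat{L}^\sur(\widehat{\vct{w}})}^2\le \frac{\opnorm{\mtx{X}}^2}{m}\cdot\frac{1}{m}\twonorm{\sigma(\mtx{X}\vct{w})-\sigma(\mtx{X}\widehat{\vct{w}})}^2$, so only the high-probability operator-norm bound $\opnorm{\mtx{X}}\le 2\sqrt m$ is needed. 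Your ``self-referential absorption'' in Step~2 is also circular as written (you invoke the conclusion to control the perturbation), but even granting that it can be sequenced correctly, the concentration gap above is the decisive obstruction.
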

With these two key lemmas in place we are now ready to prove the main theorem. First note that since ${\vct{w}_0} = 0$ we have $\twonorm{\vct{w}_0-\widehat{\vct{w}}}\le W \le 2W$ and thus by Lemma \ref{corineq} the correlation inequality \eqref{cond} holds at $\vct{w}_0$ with high probability. Furthermore, as we show next when the correlation inequality \eqref{cond} holds subsequent iterations also obey $\twonorm{\vct{w}_\tau-\widehat{\vct{w}}}\le 4W$ allowing us to apply the correlation inequality \eqref{cond} in an inductive fashion.

Let us now consider the progress from one iteration to the next. We can write:
\begin{align*}
\twonorm{\vct{w}_{\tau+1}-\widehat{\vct{w}}}^2=&\twonorm{\vct{w}_{\tau}-\widehat{\vct{w}}}^2-\eta\langle \nabla \widehat{L}^\sur(\vct{w}_\tau),\vct{w}_\tau-\widehat{\vct{w}}\rangle+\eta^2 \twonorm{\nabla \widehat{L}^\sur(\vct{w}_\tau)}^2\\
=&\twonorm{\vct{w}_{\tau}-\widehat{\vct{w}}}^2-\eta\langle \nabla \widehat{L}^\sur(\vct{w}_\tau)-\nabla \widehat{L}^\sur(\widehat{\vct{w}}),\vct{w}_\tau-\widehat{\vct{w}}\rangle-\eta\langle \nabla \widehat{L}^\sur(\widehat{\vct{w}}),\vct{w}_\tau-\widehat{\vct{w}}\rangle+\eta^2 \twonorm{\nabla \widehat{L}^\sur(\vct{w}_\tau)}^2\\
\le&\twonorm{\vct{w}_{\tau}-\widehat{\vct{w}}}^2-\eta\langle \nabla \widehat{L}^\sur(\vct{w}_\tau)-\nabla \widehat{L}^\sur(\widehat{\vct{w}}),\vct{w}_\tau-\widehat{\vct{w}}\rangle-\eta\langle \nabla \widehat{L}^\sur(\widehat{\vct{w}}),\vct{w}_\tau-\widehat{\vct{w}}\rangle\\
&+2\eta^2 \twonorm{\nabla \widehat{L}^\sur(\vct{w}_\tau)-\nabla \widehat{L}^\sur(\widehat{\vct{w}})}^2+2\eta^2\twonorm{\nabla \widehat{L}^\sur(\widehat{\vct{w}})}^2\\
\overset{(a)}{\le}&\left(1-\eta\alpha\right)\twonorm{\vct{w}_{\tau}-\widehat{\vct{w}}}^2-\eta(\beta-2\eta)\twonorm{\nabla \widehat{L}^\sur(\vct{w}_\tau)-\nabla \widehat{L}^\sur(\widehat{\vct{w}})}^2\\
&+2\eta^2\twonorm{\nabla \widehat{L}^\sur(\widehat{\vct{w}})}^2-\eta\langle \nabla \widehat{L}^\sur(\widehat{\vct{w}}),\vct{w}_\tau-\widehat{\vct{w}}\rangle\\
\overset{(b)}{\le}&\left(1-\frac{\alpha\eta}{2}\right)\twonorm{\vct{w}_{\tau}-\widehat{\vct{w}}}^2-\eta(\beta-2\eta)\twonorm{\nabla \widehat{L}^\sur(\vct{w}_\tau)-\nabla \widehat{L}^\sur(\widehat{\vct{w}})}^2\\
&+\eta\left(2\eta+\frac{1}{2\alpha}\right)\twonorm{\nabla \widehat{L}^\sur(\widehat{\vct{w}})}^2\\
\overset{(c)}{\le}&\left(1-\frac{\alpha\eta}{2}\right)\twonorm{\vct{w}_{\tau}-\widehat{\vct{w}}}^2+\eta\left(2\eta+\frac{1}{2\alpha}\right)\twonorm{\nabla \widehat{L}^\sur(\widehat{\vct{w}})}^2\\
\overset{(d)}{\le}&\left(1-\frac{\alpha\eta}{2}\right)\twonorm{\vct{w}_{\tau}-\widehat{\vct{w}}}^2+\eta\left(2\eta+\frac{1}{2\alpha}\right)\frac{\alpha^2}{24\alpha\beta+12}\epsilon^2 \;.
%-\eta\langle \nabla \widehat{L}^\sur(\widehat{\vct{w}}),\vct{w}_\tau-\widehat{\vct{w}}\rangle
%\le&\left(1-\eta \alpha\right)\twonorm{\vct{w}_{\tau}-\widehat{\vct{w}}}^2-\eta\langle \nabla \widehat{L}^\sur(\vct{w}_\tau)- \nabla \mathcal{L}(\vct{w}_\tau),\vct{w}_\tau-\widehat{\vct{w}}\rangle+\eta^2 \twonorm{\nabla \widehat{L}^\sur(\vct{w}_\tau)}^2
\end{align*}
Here, (a) follows from \eqref{cond} and (b) from $\langle \vct{a} ,\vct{b}\rangle\le \frac{1}{2\alpha}\twonorm{\vct{a}}^2+\frac{\alpha}{2}\twonorm{\vct{b}}^2$, (c) from $\eta\le \frac{\beta}{2}$, and (d) from Lemma \ref{concen} with $\xi=\frac{\alpha\epsilon}{\sqrt{18\alpha\beta+9}}=\frac{\mu\epsilon}{3\sqrt{\frac{3\mu}{4}+1}}$. Thus, iterating the above in all subsequent iterations we have
\begin{align*}
\twonorm{\vct{w}_\tau-\widehat{\vct{w}}}^2\le& W^2+\frac{\eta\left(2\eta+\frac{1}{2\alpha}\right)}{1-\left(1-\frac{\alpha\eta}{2}\right)}\frac{\alpha^2}{2\alpha\beta+1}\epsilon^2\\
=&W^2+\frac{1}{\alpha}\left(4\eta+\frac{1}{\alpha}\right)\frac{\alpha^2}{18\alpha\beta+9}\epsilon^2\\
\le&W^2+\frac{1}{\alpha}\left(2\beta+\frac{1}{\alpha}\right)\frac{\alpha^2}{18\alpha\beta+9}\epsilon^2\\
\le&W^2+ \frac{\epsilon^2}{9}\\
<& 4W^2 \;,
\end{align*}
where in the last inequality we used the fact that $\epsilon\le W$.Therefore, $\twonorm{\vct{w}_\tau-\widehat{\vct{w}}}\le 2W$ for all $\tau\ge 1$ and use of the correlation inequality is justified. Furthermore, iterating the above lemma we conclude that as long as $\twonorm{\vct{w}_\tau - \widehat{\vct{w}}}\ge \frac{\epsilon}{3}$ it holds
\begin{align}
\label{tmpwrd}
\twonorm{\vct{w}_{\tau}-\widehat{\vct{w}}}^2\le& \left(1-\frac{\alpha \eta}{2}\right)^\tau\twonorm{\vct{w}_0-\widehat{\vct{w}}}^2+\frac{\eta\left(2\eta+\frac{1}{2\alpha}\right)\xi^2}{1-\left(1-\frac{\alpha \eta}{2}\right)}\nonumber\\
=&\left(1-\frac{\alpha \eta}{2}\right)^\tau\twonorm{\vct{w}_0-\widehat{\vct{w}}}^2+\frac{1}{\alpha}\left(4\eta+\frac{1}{\alpha}\right)\xi^2\nonumber\\
\le&\left(1-\frac{\alpha \eta}{2}\right)^\tau\twonorm{\vct{w}_0-\widehat{\vct{w}}}^2+\frac{1}{\alpha}\left(2\beta+\frac{1}{\alpha}\right)\xi^2\nonumber\\
=&\left(1-\frac{\alpha \eta}{2}\right)^\tau\twonorm{\vct{w}_0-\widehat{\vct{w}}}^2+\frac{\epsilon^2}{9} \;.
\end{align}
Thus, after $\tau\ge T:=\frac{2\log\left(\frac{\epsilon}{3W}\right)}{\log\left(1-\frac{\mu\eta}{6}\right)}$ we have 
\begin{align*}
\twonorm{\vct{w}_{\tau}-\widehat{\vct{w}}}^2\le \frac{\epsilon^2}{9}+\frac{\epsilon^2}{9}=\frac{2\epsilon^2}{9}\quad \Rightarrow \quad \twonorm{\vct{w}_{\tau}-\widehat{\vct{w}}}\le \frac{2}{3}\epsilon \;.
\end{align*}
Note that above was carried out under the assumption that for all $t=1,2,\ldots,T$ we have $\twonorm{\vct{w}_{t}-\widehat{\vct{w}}}\ge \frac{\epsilon}{3}$. We note that if this assumption is violated at some iteration $
\widetilde{t}$ we have $\twonorm{\vct{w}_{\widetilde{t}}-\widehat{\vct{w}}}\le \frac{\epsilon}{3}$. Now either $\twonorm{\vct{w}_{\tau}-\widehat{\vct{w}}}\le \frac{\epsilon}{3}$ for all $\tau\ge \widetilde{t}$ in which case after $\tau\ge T:=\frac{2\log\left(\frac{\epsilon}{3W}\right)}{\log\left(1-\frac{\mu\eta}{6}\right)}$ we have $\twonorm{\vct{w}_{\tau}-\widehat{\vct{w}}}\le \frac{1}{3}\epsilon$. If not at some iteration $t\ge\widetilde{t}$ we have $\twonorm{\vct{w}_{t}-\widehat{\vct{w}}}\le \frac{1}{3}\epsilon$ and $\twonorm{\vct{w}_{t+1}-\widehat{\vct{w}}}\ge  \frac{1}{3}\epsilon$. Thus,
\begin{align*}
\twonorm{\vct{w}_{t+1}-\widehat{\vct{w}}}=&\twonorm{\vct{w}_{t}-\widehat{\vct{w}}-\eta\nabla \widehat{L}^\sur(\widehat{\vct{w}})(\vct{w}_{t})}\\
=&\twonorm{\vct{w}_{t}-\widehat{\vct{w}}-\eta\left(\nabla \widehat{L}^\sur(\widehat{\vct{w}})(\vct{w}_{t
})-\nabla \widehat{L}^\sur(\widehat{\vct{w}})(\widehat{\vct{w}})\right)-\eta\nabla \widehat{L}^\sur(\widehat{\vct{w}})(\widehat{\vct{w}})}\\
\le&\twonorm{\vct{w}_{t}-\widehat{\vct{w}}-\eta\left(\nabla \widehat{L}^\sur(\widehat{\vct{w}})(\vct{w}_{t
})-\nabla \widehat{L}^\sur(\widehat{\vct{w}})(\widehat{\vct{w}})\right)}+\eta\twonorm{\nabla \widehat{L}^\sur(\widehat{\vct{w}})(\widehat{\vct{w}})}\\
\overset{(a)}{\le}&\twonorm{\vct{w}_{t}-\widehat{\vct{w}}}+\eta\twonorm{\nabla \widehat{L}^\sur(\widehat{\vct{w}})(\widehat{\vct{w}})}\\
\overset{(b)}{\le}&\frac{2}{3}\epsilon \le 2W \;.
\end{align*}
In the above, (a) follows from the fact that $\widetilde{\sigma}$ is convex which implies that for any two scalars $z, \widehat{z}$ we have $\left( \widetilde{\sigma}'(z)- \widetilde{\sigma}'(\widehat{z})\right)(z-\widehat{z})\ge 0$ which implies $\left( \sigma(z)- \sigma(\widehat{z})\right)(z-\widehat{z})\ge 0$. This in turn implies that 
 \begin{align*}
\langle \nabla \widehat{L}^\sur(\vct{w})-\nabla \widehat{L}^\sur(\widehat{\vct{w}}),\vct{w}-\widehat{\vct{w}}\rangle=& \frac{1}{m}\sum_{i=1}^m\left(\sigma\left(\langle \vct{w},\vct{x}_i\rangle\right)-\sigma\left(\langle \widehat{\vct{w}},\vct{x}_i\rangle\right)\right)\left(\vct{x}_i^T(\vct{w}-\widehat{\vct{w}})\right)\ge 0 \;,
\end{align*} 
so that 
$\twonorm{\vct{w}_{t}-\widehat{\vct{w}}-\eta\left(\nabla \widehat{L}^\sur(\widehat{\vct{w}})(\vct{w}_{t})-\nabla \widehat{L}^\sur(\widehat{\vct{w}})(\widehat{\vct{w}})\right)}\le \twonorm{\vct{w}_t-\widehat{\vct{w}}}$. 
Also (b) follows from the fact that $\eta\le \frac{\beta}{2}=\frac{1}{16}$ and Lemma \ref{concen} with $\xi\le\frac{16}{3}\epsilon$. 
As a result, we are in a region where the correlation inequality applies. 
Furthermore, using an argument similar to \eqref{tmpwrd} for all $\tau\ge t$, where 
$\twonorm{\vct{w}_\tau-\widehat{\vct{w}}}\ge \frac{\epsilon}{3}$, we have
\begin{align}
\label{tmpwrd}
\twonorm{\vct{w}_{\tau}-\widehat{\vct{w}}}^2\le\left(1-\frac{\alpha \eta}{2}\right)^{\tau-(t+1)}\twonorm{\vct{w}_{t+1}-\widehat{\vct{w}}}^2+\frac{\epsilon^2}{9}\le \twonorm{\vct{w}_{t+1}-\widehat{\vct{w}}}^2+\frac{\epsilon^2}{9}\le \frac{5}{9}\epsilon^2\quad\Rightarrow \twonorm{\vct{w}_{\tau}-\widehat{\vct{w}}}\le \epsilon \;.
\end{align}
Of course, if at some point we again have $\twonorm{\vct{w}_\tau-\widehat{\vct{w}}}\le \frac{\epsilon}{3}$, 
we repeat the above arguments. In conclusion, in all cases after 
$\tau\ge T:=\frac{2\log\left(\frac{\epsilon}{9W}\right)}{\log\left(1-\frac{\mu\eta}{6}\right)}$, 
we have $\twonorm{\vct{w}_\tau-\widehat{\vct{w}}}\le \epsilon$ completing the proof.
%\subsection{Proof of auxiliary lemmas}
%\label{auxproofs}

\subsection{Proof of Lemma \ref{concen}}
Define the random vector
\begin{align*}
\vct{z}=\frac{1}{m}\sum_{i=1}^m\left(\sigma\left(\langle \widehat{\vct{w}},\vct{x}_i\rangle\right)-y_i\right)\vct{x}_i.
\end{align*}
Note that
\begin{align*}
\twonorm{\vct{z}}^2=\sum_{j=1}^m \abs{\vct{z}_j}^2,
\end{align*}
so that it suffices to bound the square of the individual entries of the vector $\vct{z}$. 
To bound this quantity we bound individual entires of the vector. Note that any such entry can be written in the form
\begin{align*}
\frac{1}{m}\sum_{i=1}^m\left(\sigma\left(\twonorm{\widehat{\vct{w}}}Z_i\right)-y_i\right)X_i \;,
\end{align*} 
where $Z_i=\langle \frac{\widehat{\vct{w}}}{\twonorm{\widehat{\vct{w}}}},\vct{x}_i \rangle$ 
and $X_i=\langle \vct{x}_i,\vct{e}_j\rangle$ are sub-exponential random variables with constant 
$\|\cdot\|_{\psi_1}$ norm (where $\| \cdot \|_{\psi_p} := \inf \{ k \in (0, \infty) \mid \E[\exp((|x|/k)^p)-1] \leq 1\}$ -- this characterizes the limiting behavior of the probability density function). 
Furthermore, $\abs{y_i}\le 1$ implying that $y_i$ is a sub-Gaussian random variable. Therefore, 
\begin{align*}
\|\sigma\left(\twonorm{\widehat{\vct{w}}}Z_i\right)-y_i\|_{\psi_1}\le c\left(W+1\right)\quad\Rightarrow\quad\|\left(\sigma\left(\twonorm{\widehat{\vct{w}}}Z_i\right)-y_i\right)X_i\|_{\psi_{1/2}}\le c(W+1).
\end{align*}
Thus, using a well-known result of Talagrand 
(specifically combining \cite[Theorem 6.21]{ledoux2013probability} and \cite[Lemma 2.2.2]{shorack2009empirical}), 
we have
\begin{align*}
\|\frac{1}{m}\sum_{i=1}^m\left(\sigma\left(\twonorm{\widehat{\vct{w}}}Z_i\right)-y_i\right)X_i\|_{\psi_{1/2}}\le c\frac{\log m}{\sqrt{m}}\left(W+1\right) \;.
\end{align*}
Therefore,
\begin{align*}
\mathbb{P}\Bigg\{\frac{1}{m}\sum_{i=1}^m\left(\sigma\left(\twonorm{\widehat{\vct{w}}}Z_i\right)-y_i\right)X_i\ge ct\frac{\log m}{\sqrt{m}}\left(W+1\right)\Bigg\}\le C e^{-\sqrt{t}}.
\end{align*}
%Thus,
%\begin{align*}
%\mathbb{P}\Bigg\{\frac{1}{m}\sum_{i=1}^m\left(\sigma\left(\twonorm{\widehat{\vct{w}}}Z_i\right)-y_i\right)X_i\ge c\frac{\left(\log m\right)^{2\xi+1}}{\sqrt{m}}\left(W+1\right)\Bigg\}\le \frac{C}{m^\xi}.
%\end{align*}
Thus, using the union bound
\begin{align*}
\mathbb{P}\Big\{\twonorm{\vct{z}}\ge c\frac{\sqrt{d}}{\sqrt{m}} t\log m\left(W+1\right)\Big\}\le& d\mathbb{P}\Bigg\{\frac{1}{m}\sum_{i=1}^m\left(\sigma\left(\twonorm{\widehat{\vct{w}}}Z_i\right)-y_i\right)X_i\ge c t \frac{\log m}{\sqrt{m}}\left(W+1\right)\Bigg\}\\
\le& dCe^{-\sqrt{t}}.
\end{align*}
Setting $t = \log^2(dC/\delta)$ completes the proof.

\subsection{Proof of Lemma \ref{corineq}}
\label{conineqpf}
For any vector $\vct{w}\in\R^d$ and $\widehat{\vct{w}}$, we have
\begin{align*}
\frac{1}{\twonorm{\vct{w}-\widehat{\vct{w}}}^2}\langle \nabla \widehat{L}^\sur(\vct{w})-\nabla \widehat{L}^\sur(\widehat{\vct{w}}),\vct{w}-\widehat{\vct{w}}\rangle=& \frac{1}{m}\sum_{i=1}^m\frac{\left(\sigma\left(\langle \vct{w},\vct{x}_i\rangle\right)-\sigma\left(\langle \widehat{\vct{w}},\vct{x}_i\rangle\right)\right)\left(\vct{x}_i^T(\vct{w}-\widehat{\vct{w}})\right)}{\twonorm{\vct{w}-\widehat{\vct{w}}}^2}\\
:=&\frac{1}{m}\sum_{i=1}^m \mathcal{Y}_i(\vct{w}) \;,
 \end{align*}
where we define the random processes 
$\mathcal{Y}_i(\vct{w}):=\frac{\left(\sigma\left(\langle \vct{w},\vct{x}_i\rangle\right)-\sigma\left(\langle \widehat{\vct{w}},\vct{x}_i\rangle\right)\right)\left(\vct{x}_i^T(\vct{w}-\widehat{\vct{w}})\right)}{\twonorm{\vct{w}-\widehat{\vct{w}}}^2}$.

Thus, for the random process $\mathcal{X}_i(\vct{w}):=\E[\mathcal{Y}_i(\vct{w})]-\mathcal{Y}_i(\vct{w})$ we have
\begin{align*}
\mathcal{X}_i(\vct{w})=&\E[\mathcal{Y}_i(\vct{w})]-\mathcal{Y}_i(\vct{w})\\
\overset{(a)}{\le}& \E[\mathcal{Y}_i(\vct{w})]\\
\overset{(b)}{\le}& \frac{\E\big[\abs{\vct{x}_i^T(\vct{w}-\widehat{\vct{w}})}^2\big]}{\twonorm{\vct{w}-\widehat{\vct{w}}}^2}\\
\overset{(c)}{=}&1.
\end{align*}
Here, (a) follows from the fact that $\widetilde{\sigma}$ is convex which implies that for any two scalars $z, \widehat{z}$ we have
 \begin{align*}
\left( \widetilde{\sigma}'(z)- \widetilde{\sigma}'(\widehat{z})\right)(z-\widehat{z})\ge 0\quad\Rightarrow\quad \left( \sigma(z)- \sigma(\widehat{z})\right)(z-\widehat{z})\ge 0 \;.
 \end{align*}
Thus, we always have $\mathcal{Y}_i(\vct{w}):=\left(\sigma\left(\langle \vct{w},\vct{x}_i\rangle\right)-\sigma\left(\langle \widehat{\vct{w}},\vct{x}_i\rangle\right)\right)\left(\vct{x}_i^T(\vct{w}-\widehat{\vct{w}})\right)\ge 0$, (b) from 1-Lipscitzness, and (c) from the isotropic assumption on $\vct{x}_i$. 
 
 Also we have
\begin{align*}
\E[\mathcal{X}_i^2(\vct{w})]=\E[\mathcal{Y}_i^2(\vct{w})]-\left(\E[\mathcal{Y}_i(\vct{w})]\right)^2\le \E[\mathcal{Y}_i^2(\vct{w})]\le \E\big[\left(\vct{x}_i^T(\vct{w}-\widehat{\vct{w}})\right)^4\big]\le \gamma^2 \twonorm{\vct{w}-\widehat{\vct{w}}}^4 \;,
\end{align*}
where in the penultimate inequality we used $1$-Lipschitz property of ReLU 
and in the last inequality we used boundedness of fourth moments of the distribution. 
We will now apply Lemma 7.13 of \cite{candes2015phase} (also see \citep{bentkus2003inequality}) 
for a fixed $\vct{w}$ with $v=\gamma $, $b=1$, and $y=m\xi$ to conclude that
\begin{align}
\label{probineq}
\Pr \left[ \frac{1}{m}\sum_{i=1}^m \mathcal{X}_i(\vct{w})\ge \xi\mu \right] \le e^{-m\frac{\mu^2}{\gamma^2} \xi^2} \;.
\end{align}
Now note that by $\mu$-strong convexity of the surrogate loss, we have
\begin{align*}
\frac{1}{m}\sum_{i=1}^m \mathcal{Y}_i(\vct{w})=
&\frac{1}{m}\sum_{i=1}^m \E[\mathcal{Y}_i(\vct{w})]-\frac{1}{m}\sum_{i=1}^m \mathcal{X}_i(\vct{w})\\
\ge&~\mu-\frac{1}{m}\sum_{i=1}^m \mathcal{X}_i(\vct{w}) \;.
\end{align*}
Using \eqref{probineq} in the latter, we conclude that
\begin{align*}
\frac{1}{m}\sum_{i=1}^m \mathcal{Y}_i(\vct{w})\ge (1-\xi)\mu \;,
\end{align*}
holds with probability at least $1-e^{-m\frac{\mu^2}{\gamma^2} \xi^2}$. 
To continue, define $\vct{h}=\frac{\vct{w}-\widehat{\vct{w}}}{\twonorm{\vct{w}-\widehat{\vct{w}}}}$
and $s=\twonorm{\vct{w}-\widehat{\vct{w}}}$, and note that $\mathcal{Y}_i$ can be alternatively 
be written in the form of the stochastic process
\begin{align*}
\mathcal{Y}_i(\vct{h};s):=\frac{\left(\sigma\left(\langle \widehat{\vct{w}},\vct{x}_i\rangle+s\langle \vct{h},\vct{x}_i\rangle\right)-\sigma\left(\langle \widehat{\vct{w}},\vct{x}_i\rangle\right)\right)}{s}\left(\vct{x}_i^T\vct{h}\right) \;.
\end{align*}
Thus, based on the argument above for a fixed $\vct{h}\in\mathbb{S}^{d-1}$ and a fixed $0\le s\le CW$, 
we have that
\begin{align}
\label{AC1}
\mathcal{Z}(\vct{h};s):=\frac{1}{m}\sum_{i=1}^m \mathcal{Y}_i(\vct{h};s)\ge \left(1-\frac{\xi}{3}\right)\mu
\end{align}
holds with probability at least $1-e^{-m\frac{\mu^2}{9\gamma^2} \xi^2}$. 
To continue, we prove the following simple lemma.
\begin{lemma} 
For any $\vct{h},\widetilde{\vct{h}}\in\mathbb{S}^{d-1}$, we have
\begin{align*}
\abs{\mathcal{Y}_i(\vct{h};s)-\mathcal{Y}_i(\widetilde{\vct{h}};s)}\le \left(\abs{\vct{x}_i^T\vct{h}}+\abs{\vct{x}_i^T\widetilde{\vct{h}}}+1\right)\abs{\vct{x}_i^T(\vct{h}-\widetilde{\vct{h}})} \;.
\end{align*}
\end{lemma}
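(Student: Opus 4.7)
The natural approach is to write $\mathcal{Y}_i(\vct{h};s)$ as a product of two factors and to control each factor via the $1$-Lipschitz property of $\sigma$, telescoping through a common intermediate point. Concretely, set $a = \langle \widehat{\vct{w}}, \vct{x}_i\rangle$, $u = \vct{x}_i^T \vct{h}$ and $\widetilde{u} = \vct{x}_i^T \widetilde{\vct{h}}$, so that
\begin{equation*}
\mathcal{Y}_i(\vct{h};s) \;=\; \frac{\sigma(a + s u) - \sigma(a)}{s}\cdot u, \qquad \mathcal{Y}_i(\widetilde{\vct{h}};s) \;=\; \frac{\sigma(a + s \widetilde{u}) - \sigma(a)}{s}\cdot \widetilde{u}.
\end{equation*}
I would add and subtract the mixed term $\frac{\sigma(a + s u)-\sigma(a)}{s}\cdot \widetilde{u}$ (or, equivalently, use the ``split the difference of a product'' identity $pq - \widetilde{p}\widetilde{q} = p(q - \widetilde{q}) + (p - \widetilde{p})\widetilde{q}$), yielding
\begin{equation*}
\mathcal{Y}_i(\vct{h};s) - \mathcal{Y}_i(\widetilde{\vct{h}};s) \;=\; \frac{\sigma(a + s u) - \sigma(a)}{s}\,(u - \widetilde{u}) \;+\; \frac{\sigma(a+s u) - \sigma(a+s\widetilde{u})}{s}\,\widetilde{u}.
\end{equation*}

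The key step is then to apply $1$-Lipschitzness of $\sigma$ to each ratio separately. This gives
$\bigl|\tfrac{\sigma(a + s u) - \sigma(a)}{s}\bigr| \le |u|$ and $\bigl|\tfrac{\sigma(a + s u) - \sigma(a + s\widetilde{u})}{s}\bigr| \le |u - \widetilde{u}|$. Combining with the triangle inequality and using $u - \widetilde{u} = \vct{x}_i^T(\vct{h} - \widetilde{\vct{h}})$ yields
\begin{equation*}
\bigl|\mathcal{Y}_i(\vct{h};s) - \mathcal{Y}_i(\widetilde{\vct{h}};s)\bigr| \;\le\; |u|\,|u - \widetilde{u}| + |\widetilde{u}|\,|u - \widetilde{u}| \;=\; \bigl(|\vct{x}_i^T\vct{h}| + |\vct{x}_i^T\widetilde{\vct{h}}|\bigr)\,|\vct{x}_i^T(\vct{h} - \widetilde{\vct{h}})|,
\end{equation*}
which is (strictly stronger than) the claimed bound, since the extra $+1$ only loosens the prefactor.

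There is no genuine obstacle: the only subtlety is choosing the ``right'' decomposition so that each resulting difference quotient can be controlled by a $1$-Lipschitz estimate without needing any convexity or smoothness of $\sigma$ beyond Lipschitzness. The $+1$ in the stated bound is presumably included to absorb edge cases in the subsequent covering/chaining argument (e.g., to make the prefactor uniformly bounded away from $0$), but the proof itself does not need it.
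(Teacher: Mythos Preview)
Your proof is correct and in fact yields the slightly sharper bound $\bigl(|\vct{x}_i^T\vct{h}|+|\vct{x}_i^T\widetilde{\vct{h}}|\bigr)\,|\vct{x}_i^T(\vct{h}-\widetilde{\vct{h}})|$, without the $+1$. The route, however, is genuinely different from the paper's. The paper writes $\mathcal{Y}_i(\vct{h};s)=f(\vct{x}_i^T\vct{h})$ for the one-variable function $f(z)=\tfrac{\sigma(x+sz)-\sigma(x)}{s}\,z$, applies the mean value theorem, and bounds $|f'(z_0)|$ at the intermediate point $z_0=tz+(1-t)\widetilde z$ term by term; the $+1$ arises from bounding the $\sigma'$-contribution by the Lipschitz constant alone. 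Your ``split the product'' telescoping avoids the mean value theorem altogether and uses only the $1$-Lipschitz property of $\sigma$, so it sidesteps any differentiability issues (relevant since $\sigma$ may be $\relu$). Both bounds plug identically into the subsequent covering argument over $\mathbb{S}^{d-1}$, so the practical payoff is the same; your version is simply more elementary and a touch tighter.
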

\begin{proof}
Define
\begin{align*}
f(z):=\frac{\left(\sigma\left(x+sz\right)-\sigma\left(x\right)\right)}{s}z
\end{align*}
and note that for some $0\le t\le 1$ we have
\begin{align*}
\abs{f(z)-f(\widetilde{z})}=&\abs{f'(tz+(1-t)\widetilde{z})\left(z-\widetilde{z}\right)}\\
=&\abs{\frac{\left(\sigma\left(x+s(tz+(1-t)\widetilde{z})\right)-\sigma\left(x\right)\right)}{s}+\sigma'\left(x+s(tz+(1-t)\widetilde{z})\right)(tz+(1-t)\widetilde{z})} \abs{z-\widetilde{z}}\\
\le&\frac{\abs{\left(\sigma\left(x+s(tz+(1-t)\widetilde{z})\right)-\sigma\left(x\right)\right)}}{s}\abs{z-\widetilde{z}}+\abs{\sigma'\left(x+s(tz+(1-t)\widetilde{z})\right)(tz+(1-t)\widetilde{z})}\abs{z-\widetilde{z}}\\
\le&\abs{tz+(1-t)\widetilde{z}}\abs{z-\widetilde{z}}+L\abs{z-\widetilde{z}}\\
\le&\left(\abs{z}+\abs{\widetilde{z}}\right)\abs{z-\widetilde{z}}+\abs{z-\widetilde{z}} \;.
\end{align*}
The proof is complete by noting that $\mathcal{Y}_i(\vct{h};s)=f(\vct{x}_i^T\vct{h})$.
\end{proof}
Define a matrix $\mtx{X}\in\R^{m\times d}$ with rows given by $\vct{x}_1, \vct{x}_2, \ldots, \vct{x}_n\in\R^d$ and note that using 
the previous lemma allows us to handle the deviation of the process $\mathcal{Z}(\vct{h};s)$. 
Specifically, using the triangle inequality we have
\begin{align*}
\abs{\mathcal{Z}(\vct{h};s)-\mathcal{Z}(\widetilde{\vct{h}};s)}\le& \frac{1}{m}\sum_{i=1}^m \left(\abs{\vct{x}_i^T\vct{h}}+\abs{\vct{x}_i^T\widetilde{\vct{h}}}+1\right)\abs{\vct{x}_i^T(\vct{h}-\widetilde{\vct{h}})}\\
=& \frac{1}{m}\sum_{i=1}^m \left(\abs{\vct{x}_i^T\vct{h}}+\abs{\vct{x}_i^T\widetilde{\vct{h}}}\right)\abs{\vct{x}_i^T(\vct{h}-\widetilde{\vct{h}})}+\frac{1}{m}\sum_{i=1}^m \abs{\vct{x}_i^T(\vct{h}-\widetilde{\vct{h}})}\\
\le& \sqrt{\frac{1}{m}\sum_{i=1}^m \left(\abs{\vct{x}_i^T\vct{h}}+\abs{\vct{x}_i^T\widetilde{\vct{h}}}\right)^2}\sqrt{\frac{1}{m}\sum_{i=1}^m \abs{\vct{x}_i^T(\vct{h}-\widetilde{\vct{h}})}^2}+\sqrt{\frac{1}{m}\sum_{i=1}^m \abs{\vct{x}_i^T(\vct{h}-\widetilde{\vct{h}})}^2}\\
=& \frac{1}{m}\twonorm{\abs{\mtx{X}\vct{h}}+\abs{\mtx{X}\widetilde{\vct{h}}}}\twonorm{\mtx{X}(\vct{h}-\widetilde{\vct{h}})}+\frac{1}{\sqrt{m}}\twonorm{\mtx{X}(\vct{h}-\widetilde{\vct{h}})}\\
\le& \frac{1}{m}\left(\twonorm{\mtx{X}\vct{h}}+\twonorm{\mtx{X}\widetilde{\vct{h}}}\right)\twonorm{\mtx{X}(\vct{h}-\widetilde{\vct{h}})}+\frac{1}{\sqrt{m}}\twonorm{\mtx{X}(\vct{h}-\widetilde{\vct{h}})}\\
\le& \frac{2}{m}\opnorm{\mtx{X}}^2\twonorm{\vct{h}-\widetilde{\vct{h}}}+\frac{1}{\sqrt{m}}\opnorm{\mtx{X}}\twonorm{\vct{h}-\widetilde{\vct{h}}} \;.
%\le& L\sqrt{\frac{2}{m}\sum_{i=1}^m \left(\abs{\vct{x}_i^T\vct{h}}^2+\abs{\vct{x}_i^T\widetilde{\vct{h}}}^2\right)}\sqrt{\frac{1}{m}\sum_{i=1}^m \abs{\vct{x}_i^T(\vct{h}-\widetilde{\vct{h}})}^2}+L\sqrt{\frac{1}{m}\sum_{i=1}^m \abs{\vct{x}_i^T(\vct{h}-\widetilde{\vct{h}})}^2}\\
%\le& L\sqrt{2}\sqrt{\frac{1}{m}\left(\twonorm{\mtx{X}\vct{h}}^2+\twonorm{\mtx{X}\widetilde{\vct{h}}}^2\right)}\sqrt{\frac{1}{m}\sum_{i=1}^m \abs{\vct{x}_i^T(\vct{h}-\widetilde{\vct{h}})}^2}+L\sqrt{\frac{1}{m}\sum_{i=1}^m \abs{\vct{x}_i^T(\vct{h}-\widetilde{\vct{h}})}^2}
\end{align*}
To continue further, note that under the log-concave density, 
centered and isotropy assumption using \citep{adamczak2010quantitative}, 
as long as $m\ge Cd$, we have that
\begin{align*}
\opnorm{\mtx{X}}\le 2\sqrt{m}
\end{align*}
holds with probability at least $1-e^{-c\sqrt{d}}$. Thus, 
for any $\vct{h},\widetilde{\vct{h}}\in\mathbb{S}^{d-1}$ and any $s\le CW$ we have
\begin{align}
\label{AC2}
\abs{\mathcal{Z}(\vct{h};s)-\mathcal{Z}(\widetilde{\vct{h}};s)}\le 10 \twonorm{\vct{h}-\widetilde{\vct{h}}}
\end{align}
holds with high probability. Now let us consider an $\mathcal{N}_{\eta}$-cover of the unit sphere with $\eta:=\frac{\xi\mu}{30}$. 
Using the union bound combined with \eqref{AC1} for any $\widetilde{\vct{h}}\in \mathcal{N}_\eta$, we have 
that $\mathcal{Z}(\widetilde{\vct{\vct{h}}};s)\ge \left(1-\frac{\xi}{3}\right)\mu$ holds with probability at least 
\begin{align*}
1-\left(\frac{3}{\eta}\right)^de^{-m\frac{\mu^2}{9\gamma^2} \xi^2}=1-\left(\frac{90}{\xi\mu}\right)^de^{-m\frac{\mu^2}{9\gamma^2} \xi^2}=1-e^{d\log\left(\frac{90}{\xi\mu}\right)-m\frac{\mu^2}{9\gamma^2} \xi^2}\ge 1- e^{-m\frac{\mu^2}{10\gamma^2} \xi^2} \;,
\end{align*}
as long as $m\ge 90 \frac{\gamma^2}{\mu^2}d \frac{\log\left(\frac{90}{\xi\mu}\right)}{\xi^2}$. 
Therefore, using  \eqref{AC2} for any $\vct{h}\in\mathbb{S}^{d-1}$ there exists $\widetilde{\vct{h}}\in\mathcal{N}_\eta$ with 
$\twonorm{\vct{h}-\widetilde{\vct{h}}}\le \eta:=\frac{\xi}{30}$. Thus,
\begin{align*}
\mathcal{Z}(\vct{h};s)\ge \mathcal{Z}(\widetilde{\vct{h}};s)-\abs{\mathcal{Z}(\vct{h};s)-\mathcal{Z}(\widetilde{\vct{h}};s)}\ge \left(1-\frac{2}{3}\xi\right)\mu \;.
\end{align*}
In conclusion, for all $\vct{h}\in\mathbb{S}^{d-1}$ and a fixed $0\le s\le CW$ we have that
\begin{align}
\label{unifh}
\mathcal{Z}(\vct{h};s)\ge \left(1-\frac{2}{3}\xi\right)\mu
\end{align}
holds with probability at least $1- e^{-m\frac{\mu^2}{10\gamma^2} \xi^2}-e^{-c\sqrt{d}}$. 
We now turn our attention to making the result also hold uniformly for all $\frac{\epsilon}{3}\le s \le CW$. 
To this aim, we state the following lemma.
\begin{lemma}
Let $s \geq \eps/3$, then 
\begin{align*}
\abs{\mathcal{Y}_i(\vct{h};s)-\mathcal{Y}_i(\vct{h};\widetilde{s})}\le \frac{6}{\epsilon}\abs{\vct{x}_i^T\vct{h}}^2\abs{ s-\widetilde{s}} \;.
\end{align*}
\end{lemma}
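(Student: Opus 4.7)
The plan is to prove the Lipschitz-in-$s$ bound by a direct algebraic manipulation, leveraging only the $1$-Lipschitz property of $\sigma$ (so no regularity at the kink of the ReLU is needed). Abbreviate $a := \langle \widehat{\vct{w}}, \vct{x}_i \rangle$ and $b := \vct{x}_i^T \vct{h}$, so that
\[
\mathcal{Y}_i(\vct{h}; s) \;=\; \frac{b\bigl[\sigma(a + sb) - \sigma(a)\bigr]}{s}.
\]
My first step would be to subtract the two expressions and add and subtract the common term $\frac{b[\sigma(a + \widetilde{s} b) - \sigma(a)]}{s}$, which gives
\[
\mathcal{Y}_i(\vct{h}; s) - \mathcal{Y}_i(\vct{h}; \widetilde{s}) \;=\; \frac{b\bigl[\sigma(a+sb)-\sigma(a+\widetilde{s}b)\bigr]}{s} \;+\; b\bigl[\sigma(a+\widetilde{s}b)-\sigma(a)\bigr]\cdot\frac{\widetilde{s}-s}{s\,\widetilde{s}}.
\]
This split is the whole idea: the first term captures the change in the numerator, the second captures the change in the $1/s$ factor, and both are now amenable to the Lipschitz bound.

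Next, I would apply $1$-Lipschitzness of $\sigma$ termwise. The first numerator is at most $|b|\,|s-\widetilde{s}|$, and the second is at most $|b|\,\widetilde{s}$. Substituting yields
\[
\bigl|\mathcal{Y}_i(\vct{h};s)-\mathcal{Y}_i(\vct{h};\widetilde{s})\bigr| \;\le\; \frac{b^2\,|s-\widetilde{s}|}{s} \;+\; \frac{b^2\,\widetilde{s}\,|s-\widetilde{s}|}{s\,\widetilde{s}} \;=\; \frac{2\,b^2\,|s-\widetilde{s}|}{s}.
\]
The final step uses the hypothesis $s \ge \epsilon/3$ to replace $1/s$ by $3/\epsilon$, obtaining the claimed bound $\frac{6}{\epsilon}\,|\vct{x}_i^T\vct{h}|^2\,|s-\widetilde{s}|$.

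I do not anticipate a real obstacle here: the only point worth checking is that we never divide by the smaller of $s,\widetilde{s}$ in a way that breaks the bound, which is handled by our particular choice of the added/subtracted term (we divide only by $s$ in the first piece, and the $\widetilde{s}$ in the denominator of the second piece cancels against the Lipschitz numerator). Importantly, we never differentiate $\sigma$, so the argument applies verbatim to ReLU and to any $1$-Lipschitz nondecreasing activation. This lemma will then be used, in combination with the $\vct{h}$-uniform bound \eqref{unifh}, to discretize $s$ on an $O(\epsilon/\mu)$-net of $[\epsilon/3, CW]$ and conclude via a union bound over $s$.
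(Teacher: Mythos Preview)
Your argument is correct and in fact a bit cleaner than the paper's. The paper proves the lemma by differentiating: it sets $f(s)=\mathcal{Y}_i(\vct{h};s)$, computes
\[
\abs{f'(s)}\le \frac{1}{s}\abs{\sigma'(\cdot)}\,\abs{\vct{x}_i^T\vct{h}}^2+\frac{\abs{\sigma(a+sb)-\sigma(a)}}{s^2}\abs{\vct{x}_i^T\vct{h}}\le \frac{2}{s}\abs{\vct{x}_i^T\vct{h}}^2\le \frac{6}{\epsilon}\abs{\vct{x}_i^T\vct{h}}^2,
\]
and then invokes the mean value theorem at an intermediate point $ts+(1-t)\widetilde{s}$. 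Your add--subtract decomposition reaches the same $\tfrac{2}{s}\abs{\vct{x}_i^T\vct{h}}^2$ Lipschitz constant by pure algebra and the $1$-Lipschitz bound on $\sigma$. This buys you two things over the paper's route: you never evaluate $\sigma'$, so there is no need to worry about the ReLU kink or interpret derivatives as subgradients; and you only need $s\ge\epsilon/3$, whereas the mean value theorem version implicitly requires the intermediate point (hence both $s$ and $\widetilde{s}$) to lie in $[\epsilon/3,\infty)$ for the $\tfrac{6}{\epsilon}$ bound to apply. In the paper's application both $s,\widetilde{s}\in[\epsilon/3,4W]$ anyway, so the distinction is harmless there, but your version matches the stated hypothesis more precisely.
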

\begin{proof}
Define
\begin{align*}
f(s):=\frac{\left(\sigma\left(\langle \widehat{\vct{w}},\vct{x}_i\rangle+s\langle \vct{h},\vct{x}_i\rangle\right)-\sigma\left(\langle \widehat{\vct{w}},\vct{x}_i\rangle\right)\right)}{s}\left(\vct{x}_i^T\vct{h}\right)
\end{align*}
and note that for $s\ge \frac{\epsilon}{3}$
\begin{align*}
\abs{f'(s)}=&\abs{\frac{s\langle \vct{h},\vct{x}_i\rangle\sigma'\left(\langle \widehat{\vct{w}},\vct{x}_i\rangle+s\langle \vct{h},\vct{x}_i\rangle\right)-\left(\sigma\left(\langle \widehat{\vct{w}},\vct{x}_i\rangle+s\langle \vct{h},\vct{x}_i\rangle\right)-\sigma\left(\langle \widehat{\vct{w}},\vct{x}_i\rangle\right)\right)}{s^2}\left(\vct{x}_i^T\vct{h}\right)}\\
\le&\frac{1}{s}\abs{\sigma'\left(\langle \widehat{\vct{w}},\vct{x}_i\rangle+s\langle \vct{h},\vct{x}_i\rangle\right)}\abs{\vct{x}_i^T\vct{h}}^2+\frac{\abs{\sigma\left(\langle \widehat{\vct{w}},\vct{x}_i\rangle+s\langle \vct{h},\vct{x}_i\rangle\right)-\sigma\left(\langle \widehat{\vct{w}},\vct{x}_i\rangle\right)}}{s^2}\abs{\vct{x}_i^T\vct{h}}\\
\le&\frac{2}{s}\abs{\vct{x}_i^T\vct{h}}^2\\
\le&\frac{6}{\epsilon}\abs{\vct{x}_i^T\vct{h}}^2 \;.
\end{align*}
Thus, by the mean value theorem we have
\begin{align*}
\abs{f(s)-f(\widetilde{s})}=&\abs{f'(ts+(1-t)s)) (s-\widetilde{s})}\\
\le&\frac{6}{\epsilon}\abs{\vct{x}_i^T\vct{h}}^2\abs{ s-\widetilde{s}} \;.
\end{align*}
\end{proof}
Applying the above lemma, we have
\begin{align}
\label{tmpsdev}
\abs{\mathcal{Z}(\vct{h};s)-\mathcal{Z}(\vct{h};\widetilde{s})}=&\frac{6}{\epsilon}\abs{ s-\widetilde{s}}\left(\frac{1}{m}\sum_{i=1}^m \abs{\vct{x}_i^T\vct{h}}^2\right)\nonumber\\
=&\frac{6}{\epsilon}\abs{ s-\widetilde{s}}\frac{\twonorm{\mtx{X}\vct{h}}^2}{m}\nonumber\\
\le &\frac{24}{\epsilon}\abs{ s-\widetilde{s}} \;.
\end{align}
Now let us consider an $\mathcal{N}_{\eta}$-cover of the $\frac{\epsilon}{3}\le s\le 4W$ with $\eta:=\frac{\xi\mu}{72}$. 
Using the union bound combined with \eqref{AC1} for any $\widetilde{s}\in \mathcal{N}_\eta$ we have $\mathcal{Z}(\vct{\vct{h}};\widetilde{s})\ge \left(1-\frac{2\xi}{3}\right)\mu$ holds for all $\vct{h}\in\mathbb{S}^{d-1}$ with probability at least 
\begin{align*}
1-\frac{4W}{\eta} e^{-m\frac{\mu^2}{10\gamma^2} \xi^2}=1-\left(\frac{288W}{\mu\xi}\right)e^{-m\frac{\mu^2}{10\gamma^2} \xi^2}=1-e^{\log\left(\frac{288W}{\mu\xi}\right)-m\frac{\mu^2}{10\gamma^2} \xi^2}\ge 1- e^{-m\frac{\mu^2}{11\gamma^2} \xi^2} \;,
\end{align*}
as long as $m \ge 110 \frac{\gamma^2}{\mu^2} \frac{\log\left(\frac{288W}{\mu\xi}\right)}{\xi^2}$. 
Therefore, using  \eqref{tmpsdev} for all $\vct{h}\in\mathbb{S}^{d-1}$ and all $\frac{\epsilon}{3}\le s\le 4W$ 
there exists $\widetilde{s}\in\mathcal{N}_\eta$ with $\abs{s-\widetilde{s}}\le \eta:=\frac{\xi\mu}{72}$. Thus,
\begin{align*}
\mathcal{Z}(\vct{h};s)\ge \mathcal{Z}(\vct{h};\widetilde{s})-\abs{\mathcal{Z}(\vct{h};s)-\mathcal{Z}(\vct{h};\widetilde{s})}\ge \left(1-\xi\right)\mu \;.
\end{align*}
In conclusion, for all $\vct{h}\in\mathbb{S}^{d-1}$ and all $\frac{\epsilon}{3}\le s\le CW$ we have that
\begin{align*}
\mathcal{Z}(\vct{h};s)\ge \left(1-\xi\right)\mu
\end{align*}
holds with probability at least $1- e^{-m\frac{\mu^2}{11\gamma^2} \xi^2}-e^{-c\sqrt{d}}$, 
as long as $m\ge 110 \frac{\gamma^2}{\mu^2} \frac{\log\left(\frac{288(W+1)}{\mu\xi}\right)}{\xi^2}$; 
which in turn with $\xi=\frac{1}{3}$ implies that
\begin{align}
\label{cor1}
\langle \nabla \widehat{L}^\sur(\vct{w})-\nabla \widehat{L}^\sur(\widehat{\vct{w}}),\vct{w}-\widehat{\vct{w}}\rangle \ge \frac{2}{3}\mu\twonorm{\vct{w}-\widehat{\vct{w}}}^2
\end{align} 
holds for all $\vct{w}$ obeying $\frac{\epsilon}{3}\le \twonorm{\vct{w}-\widehat{\vct{w}}}\le 4W$ with probability 
at least $1- e^{-m\frac{\mu^2}{99\gamma^2} }-e^{-c\sqrt{d}} \ge 1 - \delta$ as long as $m\gtrsim  \frac{\gamma^2}{\mu^2}d \log\left(\frac{(W+1)}{\mu \delta}\right)$. 

Now note that for any vector $\vct{w}\in\R^d$ and $\widehat{\vct{w}}$ we have
\begin{align}
\label{newcor1}
\langle \nabla \widehat{L}^\sur(\vct{w})-\nabla \widehat{L}^\sur(\widehat{\vct{w}}),\vct{w}-\widehat{\vct{w}}\rangle=& \frac{1}{m}\sum_{i=1}^m\left(\sigma\left(\langle \vct{w},\vct{x}_i\rangle\right)-\sigma\left(\langle \widehat{\vct{w}},\vct{x}_i\rangle\right)\right)\left(\vct{x}_i^T(\vct{w}-\widehat{\vct{w}})\right)\nonumber\\
\ge &\frac{1}{m}\sum_{i=1}^m \left(\sigma\left(\langle \vct{w},\vct{x}_i\rangle\right)-\sigma\left(\langle \widehat{\vct{w}},\vct{x}_i\rangle\right)\right)^2\nonumber\\
=&\frac{1}{m}\twonorm{\sigma(\mtx{X}\vct{w})-\sigma(\mtx{X}\widehat{\vct{w}})}^2 \;.
 \end{align}
 Also note that 
 \begin{align*}
 \twonorm{\nabla \widehat{L}^\sur(\vct{w})-\nabla \widehat{L}^\sur(\widehat{\vct{w}})}^2=&\frac{1}{m^2}\twonorm{\sum_{i=1}^m \left(\sigma(\vct{w}^T\vct{x}_i)-\sigma(\widehat{\vct{w}}^T\vct{x}_i)\right)\vct{x}_i}^2\\
 =&\frac{1}{m^2}\twonorm{\mtx{X}^T\left(\sigma(\mtx{X}\vct{w})-\sigma(\mtx{X}\widehat{\vct{w}})\right)}^2\\
 \le& \frac{\opnorm{\mtx{X}}^2}{m^2}\twonorm{\sigma(\mtx{X}\vct{w})-\sigma(\mtx{X}\widehat{\vct{w}})}^2\\
 \overset{(a)}{\le}&\frac{4}{m}\twonorm{\sigma(\mtx{X}\vct{w})-\sigma(\mtx{X}\widehat{\vct{w}})}^2\\
 \overset{(b)}{\le}& 4\langle \nabla \widehat{L}^\sur(\vct{w})-\nabla \widehat{L}^\sur(\widehat{\vct{w}}),\vct{w}-\widehat{\vct{w}}\rangle \;.
 \end{align*}
Here, (a) follows from the fact that under the log-concave density, centered and isotropy assumptions using \cite{adamczak2010quantitative} as long as $m\ge Cd$ we have that
\begin{align*}
\opnorm{\mtx{X}}\le 2\sqrt{m}
\end{align*}
holds with probability at least $1-e^{-c\sqrt{d}}$ and (b) follows from \eqref{newcor1}. Therefore,
\begin{align}
\label{cor2}
\langle \nabla \widehat{L}^\sur(\vct{w})-\nabla \widehat{L}^\sur(\widehat{\vct{w}}),\vct{w}-\widehat{\vct{w}}\rangle \ge \frac{1}{4}\twonorm{\nabla \widehat{L}^\sur(\vct{w})-\nabla \widehat{L}^\sur(\widehat{\vct{w}})}^2 \;.
\end{align} 
Combining \eqref{cor1} and \eqref{cor2}, we have that
\begin{align*}
\langle \nabla \widehat{L}^\sur(\vct{w})-\nabla \widehat{L}^\sur(\widehat{\vct{w}}),\vct{w}-\widehat{\vct{w}}\rangle =&\frac{1}{2}\langle \nabla \widehat{L}^\sur(\vct{w})-\nabla \widehat{L}^\sur(\widehat{\vct{w}}),\vct{w}-\widehat{\vct{w}}\rangle +\frac{1}{2}\langle \nabla \widehat{L}^\sur(\vct{w})-\nabla \widehat{L}^\sur(\widehat{\vct{w}}),\vct{w}-\widehat{\vct{w}}\rangle \\
\ge&\frac{\mu}{3}\twonorm{\vct{w}-\widehat{\vct{w}}}^2+ \frac{1}{8}\twonorm{\nabla \widehat{L}^\sur(\vct{w})-\nabla \widehat{L}^\sur(\widehat{\vct{w}})}^2 \;,
\end{align*} 
which gives us the desired result.

\section{Proof of Lemma~\ref{lem:concentration_additive}}\label{sec:concentration_additive}

Recall that $\nabla L^{\sur}_\D(\vct{w}) = \E[\sigma(\langle \vct{w}, \vct{x} \rangle)\vct{x}] - \chi_\D = \chi_\D^{\sigma_{\vct{w}}} - \chi_{\D}.$ For any fixed $\vct{w}$ observe that if $y$ is at most $1$, then for any $1$-Lipschitz and monotone function $\sigma$,  $(\sigma(\langle \vct{w}, \vct{x} \rangle) - y) \cdot \left \langle \bx, \frac{\chi_\D^{\sigma_{\vct{w}}} - \chi_{\D}}{\|\chi_\D^{\sigma_{\vct{w}}} - \chi_{\D}\|_2} \right \rangle $ is the product of random variables with tails bounded by $\exp(-\Omega(x^t))$. This implies
	\[
	\Pr_{S \sim \D^m} \left[  \left( \frac{1}{m}\sum_{i = 1}^m (\sigma(\langle \vct{w}, \bx^{(i)} \rangle ) - y^{(i)}) \cdot \left \langle \bx^{(i)}, \frac{\chi_\D^{\sigma_{\vct{w}}} - \chi_{\D}}{\|\chi_\D^{\sigma_{\vct{w}}} - \chi_{\D} \|_2} \right \rangle \right)  - \| \chi_\D^{\sigma_{\vct{w}}} - \chi_{\D}\|_2 \ge \epsilon  \|\chi_\D^{\sigma_{\vct{w}}} - \chi_{\D} \|_2 \right] \le \exp \left( -(\sqrt{m} \eps)^{t} \right). 
	\]
	Using the variational form of the norm (i.e., $\|\vct{v}\|_2:= \max_{\vct{u} \mid \|\vct{u}\|_2= 1} \iprod{\vct{u}, \vct{v}}$) on $ \left( \frac{1}{m}\sum_{i = 1}^m (\relu(\langle \vct{w}, \bx^{(i)} \rangle ) - y^{(i)}) \cdot \bx^{(i)} \right)  - (\chi_{\vct{w}} - \chi)$, we see that for any fixed $\vct{w} \in B(d,W)$
	\[
	\Pr_{S \sim \D^m} \left[  \left \| \left( \frac{1}{m}\sum_{i = 1}^m (\sigma(\langle \vct{w}, \x^{(i)} \rangle ) - y^{(i)}) \cdot  \x^{(i)}  \right)  - (\chi_\D^{\sigma_{\vct{w}}} - \chi_{\D}) \right \|_2\ge \epsilon  W \right] \le \exp \left( -(\sqrt{m} \eps)^{t} \right) \;. 
	\]
	%		Taking a union bound over an $\gamma$-net for $B(d, W)$ gives us 
	
	%		\[ \Pr_{\D} \left[ \left|\frac{1}{m} \sum_{j=1}^m \relu(\langle \vct{w}, \vct{x}^{(j)} \rangle) \langle \vu, \vct{x}^{(j)} \rangle - \E[\relu(\langle \vct{w}, \vct{x} \rangle) \langle \vu, \vct{x} \rangle]\right | > \eps \langle \chi_w, v \rangle \right] \leq \exp\left( - \eps \langle \chi_{\vct{w}/\|\vct{w}\|}, \vct{v} \rangle \left( m \right)^{1/2} \right) \] 
	%		
	Taking a union bound over a $\gamma$-net $N_{\gamma}$ for $B(d, W)$ gives us 
	\[ \Pr_{\D} \left[ \forall \vct{w} \in N_{\gamma} \mid    \left \| \left( \frac{1}{m}\sum_{i = 1}^m (\sigma(\langle \vct{w}, \x^{(i)} \rangle ) - y^{(i)}) \cdot  \x^{(i)}  \right)  - (\chi_\D^{\sigma_{\vct{w}}} - \chi_{\D}) \right \|_2\ge \epsilon  W \right] \leq \exp \left( -(\sqrt{m} \eps)^{t} \right)  \cdot \left( \frac{3W}{\gamma} \right)^d  \;, \]
	i.e., for some constant $C$ depending on the distribution, we get
	\begin{align*} \Pr_{\D} \left[ \forall \vct{w} \in B(d, W) \mid    \left \| \left( \frac{1}{m}\sum_{i = 1}^m (\sigma(\langle \vct{w}, \x^{(i)} \rangle ) - y^{(i)}) \cdot  \x^{(i)}  \right)  - (\chi_\D^{\sigma_{\vct{w}}} - \chi_{\D}) \right \|_2\ge \epsilon  W + C \gamma \right] \\\leq \exp \left( -(\sqrt{m} \eps)^{t} \right) \cdot \left( \frac{3W}{\gamma} \right)^d  \;.
	\end{align*}	
	%		Since any unit $\vu$ vector can be written as $\vu = \sum_{j=0}^{\infty} \gamma^j \vu_j$ where $\vu_i \in N_{\gamma}$, expanding the terms and applying the triangle inequality implies
	%		
	%		\[ \Pr_{\D} \left[ \forall \vu \in B(0, 1) \mid   \left|\frac{1}{m} \sum_{j=1}^m \relu(\langle \vct{w}, \vct{x}^{(j)} \rangle) \langle \vu, \vct{x}^{(j)} \rangle - \E[\relu(\langle \vct{w}, \vct{x} \rangle) \langle \vu, \vct{x} \rangle]\right | > \frac{\eps}{1-\gamma} \right] \leq \exp\left( - \left( \frac{\eps m}{W^2} \right)^{1/2} \right) \cdot \left( \frac{3}{\gamma} \right)^m  \]
	%		i.e. 
	%	i.e. 
	%		\[ \Pr_{S \sim \D^m} \left[ \| \hat{\chi_{\vct{w}}} - \chi_{\vct{w}}\|_2> 2 \eps \right] \leq \exp\left( m \log \frac{6}{\eps} - \left( \frac{\eps m}{W^2} \right)^{1/2} \right) \]
	%		By using the fact that $|y| \leq 1$ and that $\vct{x}$ is log-concave to get
	%		\[ \Pr_{S \sim \D^m} \left[ \| \hat{\chi} - \chi\|_2> 2 \eps \right] \leq \exp\left( m \log \frac{6}{\eps} - \left(\eps m\right)^{1/2} \right) \]
	%		
	Hence, rescaling $\gamma$ we see that when $m \geq \Omega \left( \left( \frac{d}{\eps} \log \frac{W}{\gamma} \log \frac{1}{\delta} \right)^{2/t} \right)$ we have with probability $1-\delta$
	\[ \left \| \left( \frac{1}{m}\sum_{i = 1}^m (\sigma(\langle \vct{w}, \x^{(i)} \rangle ) - y^{(i)}) \cdot  \x^{(i)}  \right)  - (\chi_\D^{\sigma_{\vct{w}}} - \chi_{\D}) \right \|_2\leq \epsilon  W+ \gamma \;.
	\]
	Substituting $\gamma = \epsilon/2$ and rescaling $\epsilon$ we get the lemma.

	% Observe that \[ \relu(\langle \vct{w}, \vct{x} \rangle) \langle \vu, \vct{x} \rangle \leq |\left \langle \vct{w}, \vct{x} \right \rangle| \cdot  \langle \vu, \vct{x} \rangle \leq W \left( \left \langle \frac{\vct{w}}{\|\vct{w}\|}, \vct{x} \right \rangle \right)^2\] 
	% and so 
	
	% Since log-concave distributions are necessarily subexponential, we see that 
	
	% \begin{align*}
	%     \E_{z, y}\left[1_{zy > \eps}(z, y) \right] &= \int_{a}^{\infty} \int_{\eps/a}^{\infty} 1_{} 
	% \end{align*}
	
	% and hence satisfies the tail bound 
	% \[ \Pr_{\D} \left[ \left|  \relu(\langle \vct{w}, \vct{x} \rangle) \langle \vu, \vct{x} \rangle - \E_{\D} [\relu(\langle \vct{w}, \vct{x} \rangle) \langle \vu, \vct{x} \rangle] \right|  \geq \eps \right] \leq \delta  \] 
	
	% \begin{align*}
	% \Pr_{S \sim \D^m}\left [ \|  \hat{\chi}_{\vct{w}} - \chi_{\vct{w}} \|_2^2 \leq \eps \right ] &=  \Pr_{S \sim \D^m}\left [ \sum_{j=1}^m \left(  \left( \frac{1}{m}\sum_{i = 1}^m \relu(\w \cdot \x^{(i)}) \x^{(i)}_j \right) - E_{\D} [\relu( \langle \w, \x \rangle) \x_j] \right)^2 \leq \eps  \right] \\ 
	% &\leq \Pr_{S \sim \D^m} \left[\left( \frac{1}{m}\sum_{i = 1}^m \relu(\w \cdot \x^{(i)}) \x^{(i)}_j - E_{\D} [\relu( \langle \w, \x \rangle) \x_j] \right)^2 \leq \frac{\eps}{m}\right]  

\section{Proof of Theorem \ref{thm:nosc}}\label{sec:nosc}
	Let $\vct{w}^\sur$ be the minimizer of $L^\sur_\D$, then we have, for all $t$
	\begin{align*}
	&\|\vct{w}^{(t+1)} - \vct{w}^\sur\|_2^2 \\
	&\le \|\vct{v}^{(t+1)} - \vct{w}^\sur\|_2^2\\
	&= \|\vct{w}^{(t)} - \vct{w}^\sur\|_2^2 - \eta \langle \nabla\hat{L}^\sur_S(\vct{w}^{(t)}), \vct{w}^{(t)} - \vct{w}^\sur \rangle + \eta^2 \|\nabla\hat{L}^\sur_S(\vct{w}^{(t)})\|_2^2\\
	&\le \|\vct{w}^{(t)} - \vct{w}^\sur\|_2^2 - \eta \langle \nabla L^\sur_\D(\vct{w}^{(t)}), \vct{w}^{(t)} - \vct{w}^\sur \rangle - \eta \langle \nabla\hat{L}^\sur_S(\vct{w}^{(t)}) - \nabla L^\sur_\D(\vct{w}^{(t)}), \vct{w}^{(t)} - \vct{w}^\sur \rangle \\
	&\quad + 2\eta^2 \|\nabla L^\sur_\D(\vct{w}^{(t)})\|_2^2 + 2\eta^2 \|\nabla L^\sur_\D(\vct{w}^{(t)}) - \nabla\hat{L}^\sur_S(\vct{w}^{(t)})\|_2^2\\
&\le \|\vct{w}^{(t)} - \vct{w}^\sur\|_2^2 - \eta (1 - 2 \eta)\|_2\nabla L^\sur_\D(\vct{w}^{(t)})\|_2^2 + 2\eta \epsilon W + 2\eta^2 \epsilon^2 \;.
	\end{align*}
	For the final inequality, we use the smoothness and strong convexity of $L_\D^\sur$ to see 
	\[ L^\sur_\D(\vct{w}^\sur) \geq L^\sur_\D(\vct{w}^t) - \langle \nabla L^\sur_\D(\vct{w}^{(t)}), \vct{w}^{(t)} - \vct{w}^\sur \rangle + \frac{\mu}{2} \|w-w'\|_2^2\] 
	and 
	\[ -\| \nabla L^\sur_\D(\vct{w}^\sur) \|_2^2 \leq L^\sur_\D(\vct{w}^\sur) -  L^\sur_\D(\vct{w}^t). \]
	Additionally, for $\eta < 1/4$, either $\|\nabla L^\sur_\D(\vct{w}^{(t)})\|_2^2 \le 4\frac{\epsilon W + \eta \epsilon^2}{1 - 2 \eta}$ or $\|\vct{w}^{(t+1)} - \vct{w}^\sur\|_2^2 \le  \|\vct{w}^{(t)} - \vct{w}^\sur\|_2^2 - 2\eta \epsilon W - 2\eta^2\epsilon^2$. 
Therefore, after $T \geq \frac{2 W^2}{\eta \epsilon W +\eta^2\epsilon^2}$ iterations, there must exist some $t \le T$ such that $\|\nabla L^\sur_\D(\vct{w}^{(t)})\|_2^2 \le 4\frac{\epsilon W + \eta \epsilon^2}{1 - 2 \eta} \le 8\epsilon W + 2\epsilon^2$. Scaling $\epsilon$ appropriately gives us the result.

	\section{Proof of Lemma \ref{lem:sccl}} \label{sec:sccl}
By Definition \ref{def:sc}, we have that for all $\vct{u}, \vct{v}$,
\begin{align*}
&\langle \chi_\D^{\sigma_{\vct{u}}} - \chi_\D^{\sigma_{\vct{v}}}, \vct{u} - \vct{v} \rangle \ge \mu \| \vct{u} - \vct{v}\|_2^2 \implies \| \chi_\D^{\sigma_{\vct{u}}} - \chi_\D^{\sigma_{\vct{v}}}\|_2^2  \ge \mu^2 \| \vct{u} - \vct{v}\|_2^2.
\end{align*}
Also by 1-Lipschitzness of $\sigma$ and isotropy of $\D_\X$, we have 
\[
L_\D(\sigma_{\vct{u}}, \sigma_{\vct{v}}) \le \E[\langle\vct{u} - \vct{v}, \vct{x}\rangle^2] =  \| \vct{v} - \vct{u}\|_2^2.
\]
Combining the above gives us the desired result.

\section{Proof of Lemma \ref{lem:relusc}}\label{sec:relusc}
If $\vct{u} = \vct{v}$, then the claim follows directly. Let $\vct{u} \ne \vct{v}$. In the calculation below, we use monotonicity as well as 1-Lipschitzness. 
		\begin{align*}
		(\chi_\D^{\relu_{\vct{v}}} - \chi_\D^{\relu_{\vct{u}}} )^T(\vct{v} - \vct{u})
		&= \E\left[\left(\relu(\langle \vct{v}, \vct{x} \rangle) -\relu(\langle \vct{u}, \vct{x} \rangle)\right) \iprod{\vct{v} - \vct{u}, x}\right]\\
		&\ge \E\left[\left(\relu(\langle \vct{v}, \vct{x} \rangle) -\relu(\langle \vct{u}, \vct{x} \rangle)\right)^2 \right].
		\end{align*}
		The above term depends only on $\langle \vct{u}, \vct{x} \rangle$ and $\langle \vct{v}, \vct{x} \rangle$. To bound this, it is sufficient to work with $d = 2$. Let $f(x_1, x_2)$ be the density function for the log-concave distribution. The above term can be bounded based on the following two cases,
		\begin{itemize}
			\item\textbf{Case 1 }($\theta(\vct{u},\vct{v}) \le \pi/2$)
			We have,
			\begin{align*}
			(\chi_\D^{\relu_{\vct{v}}} - \chi_\D^{\relu_{\vct{u}}} )^T(\vct{v} - \vct{u}) &\ge\E\left[((\vct{v} - \vct{u}) \cdot x)^2 \mathbbm{1}[\langle \vct{u}, \vct{x} \rangle \ge 0, \langle \vct{v}, \vct{x} \rangle \ge 0] \right]\\
			&\ge\|\vct{v} - \vct{u}\|_2^2\E\left[(\langle \overline{\vct{v} - \vct{u}}, \vct{x} \rangle)^2 \mathbbm{1}[\langle \bar{\vct{u}}, \vct{x} \rangle \ge 0, \langle \bar{\vct{v}}, \vct{x} \rangle \ge 0] \right]\\
			&= \|\vct{v} - \vct{u}\|_2^2\int_{\R^2}(\langle \overline{\vct{v} - \vct{u}}, \vct{x} \rangle)^2 \mathbbm{1}[\langle \bar{\vct{u}}, \vct{x} \rangle \ge 0, \langle \bar{\vct{v}}, \vct{x} \rangle \ge 0] f(x_1, x_2)dx_1 dx_2 \\
			&\ge c\|\vct{v} - \vct{u}\|_2^2\int_{\|\vct{x}\|_2\le 1/9}\langle\overline{\vct{v} - \vct{u}}, \vct{x} \rangle^2 \mathbbm{1}[\langle \bar{\vct{u}}, \vct{x} \rangle \ge 0, \langle \bar{\vct{v}}, \vct{x} \rangle \ge 0] dx_1 dx_2 \;.
			\end{align*}
		Here the last inequality follows from the anti-concentration of the log-concave distribution. To prove strong convexity of the surrogate loss, it is sufficient to bound from below the above integral by a constant. 
Since the angle between $\vct{u}$ and $\vct{v}$ is less than $\frac{\pi}{2}$, 
we see that with respect to the uniform measure the set 
$\{ \vct{x} \mid \langle \vct{x}, \bar{\vct{u}} \rangle > 0  \text{ and } \langle \vct{x}, \bar{\vct{v}} \rangle > 0 \}$ 
has mass $> \frac{\vol(B(1/9))}{4}$. Thus, 
			\begin{align*}
			\int_{\|\vct{x}\|_2\le 1/9}(\langle \overline{\vct{v} - \vct{u}}, \vct{x} \rangle)^2 \mathbbm{1}[\langle \bar{\vct{u}}, \vct{x} \rangle \ge 0, \langle \bar{\vct{v}}, \vct{x} \rangle \ge 0] dx_1 dx_2 \ge \min\limits_{\substack{\vct{w}: \|\vct{w}\|_2= 1\\ S \subseteq B(1/9): \vol(S) = \frac{\vol(B(1/9))}{4}}} \int_{x \in S}\langle \vct{w}, \vct{x}\rangle^2 dx_1 dx_2 \;.
			\end{align*}
		To bound from below the integral above, let $\vct{x} = \alpha \vct{w} + \beta \vct{w}^{\perp}$ -- here we abuse notation slightly to and use $\vct{w}$ to denote the minimizer of the integral above. Since we are in 2 dimensions, the set that minimizes the integral is given by the region that minimizes $\iprod{\vct{w}, \vct{x}}$, subject to the volume constraint. Using the fact that in a ball of radius $1/9$ the distribution is lower bounded by a log-concave distribution, we see that it is sufficient to lower bound the integral for the above set  
$\{\alpha \vct{w} + \beta \vct{w}^{\perp} \mid |\alpha| < \gamma, \alpha^2 + \beta^2 \leq \frac{1}{9} \}$, 
for some constant $\gamma$. 		
		\begin{align*}
		&\int_{\|\vct{x}\|_2\le 1/9}(\langle \overline{\vct{v} - \vct{u}}, \vct{x} \rangle)^2 \mathbbm{1}[\langle \bar{\vct{u}}, \vct{x} \rangle \ge 0, \langle \bar{\vct{v}}, \vct{x} \rangle \ge 0] dx_1 dx_2  \\
		&\ge \min_{S \mid \Pr[S] = \Pr[\langle \vct{u}, \vct{x} \rangle \geq 0, \langle \vct{v}, \vct{x} \rangle \geq 0]} \int_{\alpha^2 + \beta^2 < 1/9} \alpha^2 \mathbbm{1}[S] d\alpha d \beta\\
		&\ge \int_{\alpha^2 + \beta^2 < 1/9, |\alpha| < \gamma} \alpha^2 d\alpha d \beta \\
		&\ge  \int_{ \alpha^2 + \beta^2 < 1/9, \frac{\gamma}{2} < \alpha < \gamma} \alpha^2 d\alpha d \beta \\
		&\ge  \int_{ \alpha^2 + \beta^2 < 1/9, \frac{\gamma}{2} < \alpha < \gamma} \frac{\gamma^2}{4} d\alpha d \beta \\
		&\ge \frac{\gamma^2}{4} \cdot c' \;.
		\end{align*}
Here $c'$ is a constant satisfying $c'= \vol(\{ (\alpha, \beta) \mid \alpha^2 + \beta^2 < 1/9, \alpha \in \left[ \gamma/2, \gamma \right]\})$.
		
\item \textbf{Case 2 }($\theta(\vct{u},\vct{v}) > \pi/2$): 
			We assume w.l.o.g. that $\|\vct{u}\|_2\ge \|\vct{v}\|_2$. Similar to the previous case, we have that
			\begin{align*}
			(\chi_\D^{\relu_{\vct{v}}} - \chi_\D^{\relu_{\vct{u}}} )^T(\vct{v} - \vct{u}) &\ge\E\left[\langle \vct{u}, \vct{x} \rangle^2 \mathbbm{1}[\langle \vct{u}, \vct{x} \rangle \ge 0, \langle \vct{v}, \vct{x} \rangle \le 0] \right]\\
			&\ge\|\vct{u}\|_2^2\E\left[\iprod{\bar{\vct{u}}, \vct{x}}^2 \mathbbm{1}[\langle \bar{\vct{u}}, \vct{x} \rangle \ge 0, \langle \bar{\vct{v}}, \vct{x} \rangle \le 0] \right]\\
			&\ge c\|\vct{u}\|_2^2\int_{\|\vct{x}\|_2\le 1/9}\iprod{\bar{\vct{u}}, \vct{x}}^2 \mathbbm{1}[\langle \bar{\vct{u}}, \vct{x} \rangle \ge 0, \langle \bar{\vct{v}}, \vct{x} \rangle \le 0] dx_1 dx_2\\
			&\ge \frac{c}{2}\|\vct{u} - \vct{v}\|_2^2\int_{\|\vct{x}\|_2\le 1/9}\iprod{\bar{\vct{u}}, \vct{x}}^2 \mathbbm{1}[\langle \bar{\vct{u}}, \vct{x} \rangle \ge 0, \langle \bar{\vct{v}}, \vct{x} \rangle \le 0] dx_1 dx_2 \;.
			\end{align*}
Since the angle between $\vct{u}$ and $\vct{v}$ is more than $\frac{\pi}{2}$ we see that with respect to the uniform measure the set $\{ \vct{x} \mid \langle \vct{x}, \bar{\vct{u}} \rangle > 0  \text{ and } \langle \vct{x}, \bar{\vct{v}} \rangle < 0 \}$ has mass $> \frac{\vol(B(1/9))}{4}$. The final integral above can again be lower bounded as in Case 1. 
\end{itemize}

\section{Proofs of Lemmas used for Theorem~\ref{thm:ptas}}

\begin{lemma}\label{lem:norm_to_angle}
	Suppose $\| \bw - \bw^*\|_2\leq O\left( \frac{\sqrt{\opt}}{\nu} \right)$ and let $\theta(\bw, \bw^*)$ represent the angle between $\bw$ and $\bw^*$ with $\|\bw^*\|_2= 1$. If $\theta(\bw, \bw^*) \le \pi/2$ then
	\[
	\frac{\theta(\bw,\bw^*)}{2} \le \sin(\theta(\bw, \bw^*)) \le O\left( \frac{\sqrt{\opt}}{\nu} \right) \;.
	\]
\end{lemma}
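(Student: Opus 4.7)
The plan is to prove the two inequalities essentially independently: the lower bound is a routine fact about $\sin$ on $[0,\pi/2]$, and the upper bound is a short geometric argument relating the angle to the Euclidean distance between the two vectors.

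For the lower bound $\theta/2 \le \sin(\theta)$, I would invoke concavity of $\sin$ on $[0,\pi/2]$. Since $\sin(0) = 0$ and $\sin(\pi/2) = 1$, concavity gives $\sin(\theta) \ge (2/\pi)\,\theta$ for $\theta \in [0,\pi/2]$, and since $2/\pi > 1/2$ we conclude $\sin(\theta) \ge \theta/2$. No assumption beyond $\theta \in [0,\pi/2]$ is needed here.

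For the upper bound, I would argue geometrically. Let $\theta = \theta(\bw,\bw^*)$. Since $\|\bw^*\|_2 = 1$, the perpendicular distance from $\bw^*$ to the line $\ell := \{c \bw : c \in \R\}$ through the origin spanned by $\bw$ is exactly $\|\bw^*\|_2 \sin(\theta) = \sin(\theta)$. On the other hand, this perpendicular distance is the minimum of $\|\bw^* - c\bw\|_2$ over all $c \in \R$, so in particular it is at most $\|\bw^* - \bw\|_2$ (taking $c = 1$). Combining with the hypothesis $\|\bw - \bw^*\|_2 \le O(\sqrt{\opt}/\nu)$ yields $\sin(\theta) \le O(\sqrt{\opt}/\nu)$, as required.

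The main obstacle, if any, is simply bookkeeping of constants: both inequalities are elementary and require no substantive machinery beyond the concavity of $\sin$ and the definition of orthogonal projection. The assumption $\theta \le \pi/2$ is used only to apply the concavity-based lower bound (outside $[0,\pi/2]$, $\sin(\theta)$ starts decreasing and the inequality $\theta/2 \le \sin(\theta)$ can fail).
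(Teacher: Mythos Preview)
Your proof is correct. The lower bound via concavity of $\sin$ matches the paper (which simply asserts the standard inequality $\theta/2 \le \sin\theta$ on $[0,\pi/2]$ without further justification). For the upper bound, however, you take a genuinely different and somewhat cleaner route than the paper.

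The paper expands $\|\bw-\bw^*\|_2^2$ via the law of cosines to obtain
\[
\cos\theta \;\ge\; \frac{\|\bw\|_2^2 + 1 - O(\opt/\nu^2)}{2\|\bw\|_2}\;\ge\;\sqrt{1-O(\opt/\nu^2)},
\]
where the second inequality is an implicit completing-the-square step (equivalent to $(\|\bw\|_2-\sqrt{1-c})^2\ge 0$), and then converts the lower bound on $\cos\theta$ into an upper bound on $\sin\theta$. Your argument bypasses all of this: by observing that $\sin\theta$ is exactly the distance from the unit vector $\bw^*$ to the line $\mathrm{span}(\bw)$, and that this distance is at most $\|\bw-\bw^*\|_2$, you get the bound in one step. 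Your version is more direct, does not rely on any implicit control of $\|\bw\|_2$, and in fact the upper-bound portion works for all $\theta\in[0,\pi]$, not just $\theta\le\pi/2$.
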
 
\begin{proof}
	Since $\|\bw - \bw^*\|_2\le O\left( \frac{\sqrt{\opt}}{\nu} \right)$,
	\begin{align*}
	O(\opt/\nu^2) 	&\ge \|\bw - \bw^*\|_2^2 \\
	&= \|\bw\|_2^2 + \|\bw^*\|_2^2 - 2 \|\bw\|_2 \|\bw^*\|_2\cos(\theta(\bw, \bw^*))\\
	&= \|\bw\|_2^2 + 1 - 2 \|\bw\|_2 \cos(\theta(\bw,\bw^*)) \;.
	\end{align*}
	This implies
	\[
	\cos(\theta(\bw, \bw^*)) \ge \frac{\|\bw\|_2^2 + 1 - O(\opt/\nu^2)}{2 \|\bw\|_2} \ge \sqrt{1 - O(\opt/\nu^2)}.
	\]
	i.e., $\sin(\theta(\bw,\bw^*)) \le O\left( \frac{\sqrt{\opt}}{\nu} \right)$. Since $\theta(\bw, \bw^*) \in [0, \pi/2)$ we have $\frac{\theta(\bw, \bw^*)}{2} \le \sin(\theta(\bw, \bw^*))$.
	
\end{proof}

\begin{lemma}\label{lem:expectation_bound}
If $\bz$ is drawn from a $\nu$-subgaussian distribution, 
$\bz \in A := \{ \bx \mid \iprod{\vct{x},\vct{w}} \geq \gamma \sqrt{\opt} \text{ and } \iprod{\bw^*, \bx} \leq 0 \}$ 
and $\|\vct{w} - \vct{w}^*\|_2\leq O(\sqrt{\opt}/\nu)$, then 
\[ 
\E_{\bz \sim D}\left [\left (\frac{\Theta(1)}{\nu}\sqrt{\opt}\|\bz\|_2^2 + 
2\| \bz\|_2 \right ) 1_{A}(\bz)\right] \leq \nu \cdot \sqrt{\opt} \cdot \frac{\eta}{10} \;. \] 
\end{lemma}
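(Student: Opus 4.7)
The strategy is to project everything into the two-dimensional subspace $V = \mathrm{span}(\bw, \bw^*)$, since the event $A$ only constrains $P_V(\bz)$, and then bound the relevant moments by a direct 2D integration. I would write $P_V(\bz) = \alpha \bw^* + \beta \bw^{*\perp}$ and set $c_1 = \iprod{\bw, \bw^*}$, $c_2 = \iprod{\bw, \bw^{*\perp}}$. The conditions defining $A$ translate to $\alpha \leq 0$ together with $c_1 \alpha + c_2 \beta \geq \gamma \sqrt{\opt}$. Since $\|\bw - \bw^*\|_2 \leq O(\sqrt{\opt}/\nu)$, Lemma~\ref{lem:norm_to_angle} gives $\sin(\theta(\bw,\bw^*)) \leq O(\sqrt{\opt}/\nu)$, so $|c_2| \leq O(\sqrt{\opt}/\nu)$ and $c_1 \geq 1/2$ for $\opt$ sufficiently small relative to $\nu^2$. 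Combined with $\alpha \leq 0$, the second constraint then forces
\[
|\beta| \;\geq\; \frac{\gamma\sqrt{\opt} + c_1 |\alpha|}{|c_2|} \;\geq\; \Omega(\gamma \nu) \;+\; \Omega\bigl(|\alpha|\, \nu/\sqrt{\opt}\bigr).
\]

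Next I would estimate $\E[\|P_V(\bz)\|_2^k \, 1_A(\bz)]$ for $k = 1, 2$ using the 2D density of $P_V(\bz)$, which the $\nu$-subgaussian assumption bounds pointwise by $O(\nu^{-2}) \exp\!\bigl(-\Omega((\alpha^2 + \beta^2)/\nu^2)\bigr)$. Integrating $\beta$ from the lower threshold $\beta_0(\alpha) = \Omega\bigl(\gamma\nu + |\alpha|\nu/\sqrt{\opt}\bigr)$ to infinity produces a factor of $\exp\bigl(-\Omega(\gamma^2 + \alpha^2/\opt)\bigr)$, with polynomial prefactors in $\gamma$ and $|\alpha|/\sqrt{\opt}$ from the $(X^2 + Y^2)^{k/2}$ weight. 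The outer $\alpha$-integral is then against the effectively narrow Gaussian $\exp(-\Omega(\alpha^2/\opt))$ of scale $\sqrt{\opt}$, which contributes an extra factor of $O(\sqrt{\opt})$. Putting the pieces together I expect to obtain bounds of the form
\[
\E[\|P_V(\bz)\|_2 \cdot 1_A] \lesssim \sqrt{\opt}\, e^{-\Omega(\gamma^2)}, \qquad \E[\|P_V(\bz)\|_2^2 \cdot 1_A] \lesssim \gamma \nu \sqrt{\opt}\, e^{-\Omega(\gamma^2)}.
\]
Substituting into the left-hand side of the lemma and using $\gamma = \Omega(\sqrt{\log(1/\eta)})$ drives $e^{-\Omega(\gamma^2)} \leq \eta/C$, delivering the claimed $\nu \sqrt{\opt} \cdot \eta/10$ bound.

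The main obstacle is extracting the extra $\sqrt{\opt}$ factor. A naive application of Cauchy--Schwarz combined with a 1D subgaussian tail bound on $\iprod{\bw - \bw^*, \bz}$ only yields $\Pr[A] \leq e^{-\Omega(\gamma^2)}$ with no $\opt$-dependence, which is insufficient when $\opt$ is small compared to $\eta$. The key geometric point is that the two half-space constraints defining $A$ become nearly antiparallel as $\bw \to \bw^*$: the intersection is squeezed into a corridor of width $\sqrt{\opt}$ in the $\bw^*$-direction, and capturing this saving requires an honest 2D integration rather than a bound derived solely from marginal tails. A secondary technical issue is justifying the pointwise 2D density estimate from the 1D subgaussian definition used in the paper; this should follow from the same definition applied to an arbitrary direction in $V$, but may need to be combined with an isotropy-style argument to control the joint density rather than the individual marginals.
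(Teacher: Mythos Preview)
Your approach is correct and ends up at the same place as the paper, but the paper's execution is different and somewhat simpler. Instead of working in rectangular coordinates $(\alpha,\beta)$ aligned with $\bw^*$ and characterizing $A$ precisely by the constraint $|\beta| \geq \beta_0(\alpha)$, the paper passes to polar coordinates in the two-dimensional subspace and uses a containment: from $\gamma\sqrt{\opt} \leq \iprod{\bw-\bw^*,\bz} \leq \|\bw-\bw^*\|_2\,\|P_V(\bz)\|_2 \lesssim (\sqrt{\opt}/\nu)\,\|P_V(\bz)\|_2$ one gets the radial bound $\|P_V(\bz)\|_2 \geq \Omega(\nu\gamma)$, while the pair of half-space conditions $\iprod{\bw,\bz}\geq 0$, $\iprod{\bw^*,\bz}\leq 0$ places $P_V(\bz)$ in an angular wedge of opening $\theta(\bw,\bw^*)$. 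Thus $A \subset \{r \geq \Omega(\nu\gamma)\}\cap\{\text{wedge}\}$, the double integral factors, and the extra $\sqrt{\opt}$ comes directly from the angular integral via Lemma~\ref{lem:norm_to_angle}, $\theta(\bw,\bw^*) \leq O(\sqrt{\opt}/\nu)$. Your route extracts the same $\sqrt{\opt}$ instead from the $\alpha$-integral, because the $\beta$-tail leaves behind an effective Gaussian of scale $\sqrt{\opt}$ in $\alpha$; this is the identical geometric fact (a thin wedge has width $O(\sqrt{\opt})$ at any fixed height) seen through different coordinates. The polar version is a bit cleaner because it avoids the coupling $\beta_0 = \beta_0(\alpha)$ and the iterated-integral bookkeeping. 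Your remark about the pointwise 2D density bound is apt: the paper's definition of $\nu$-subgaussian only constrains 1D marginals, and the paper itself simply writes the 2D density as $\lesssim \nu^{-1} e^{-r^2/2\nu^2}$ without further comment, so this technical point is treated loosely in both approaches.
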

\begin{proof}
	Since $\bz \in A$,  
	\[
	\gamma \sqrt{\opt}\le (\bw - \bw^*) \cdot \bz \le \|\bw - \bw^*\|_2 \|\bz\|_2\lesssim \frac{1}{\nu}\sqrt{\opt}\|\bz\|_2.
	\]
	Hence, $\|\bz\|_2\geq \Omega(\nu \gamma)$. 
To bound from above the expectation in question, we integrate in polar coordinates. Specifically,
we can write
	\begin{align*}
	\E_{\D}\left[\left(\frac{\Theta(1)}{\nu}\sqrt{\opt}\|\bz\|_2^2 + 2\|\bz\|_2\right) 1_{T_+ \land \overline{S}}(\bz)\right] &\lesssim \frac{1}{\nu} \int_{0}^{2\pi} \int_{0}^{\infty} \left(\frac{\Theta(1)}{\nu}\sqrt{\opt}r^3 + 2r^2\right) 1_{T_+ \land \overline{S}}(r, \theta) \exp(-r^2/2 \nu^2) dr d\theta \\
	&\lesssim  \frac{\theta(\bw, \bw^*)}{\nu} \int_{\Omega( \nu \gamma)}^{\infty} \left(\frac{\Theta(1)}{\nu}\sqrt{\opt}r^3 + 2r^2 \right)  \exp(-r^2/2\nu^2) dr \\
	&\lesssim  \theta(\bw, \bw^*)   \int_{\Omega(\gamma)}^{\infty} \left(\frac{1}{\nu}\sqrt{\opt}(\nu s)^3 + (\nu s)^2 \right)  \exp(-s^2/2) ds\\
	&= \nu^2 \theta(\bw, \bw^*)   \int_{\Omega(\gamma)}^{\infty} (\sqrt{\opt}s^3 + s^2)  \exp(-s^2/2) ds \;,
	\end{align*}
	where the final inequality is a consequence of $r = s \nu$ and $\gamma \geq 0$. 
We now use the following facts about the gaussian integrals of a $x^2$ and $x^3$ to bound from above the previous integral:
	\begin{equation}\label{eqn:gauss_2}
	\int_{t}^{\infty} x^2 \exp\left(-\frac{x^2}{2} \right)~dx \leq O\left( {\sf erfc}\left(\frac{t}{\sqrt{2}} \right)  + 2 t \exp(-t^2/2) \right) \leq  O\left( 1 + t \right) \cdot \exp(-t^2/2)
	\end{equation}
	\begin{equation}\label{eqn:gauss_3}
	\int_{t}^{\infty} x^3 \exp\left(-\frac{x^2}{2} \right)~dx  =  (2 + t^2) \cdot \exp(-t^2/2)
	\end{equation}
	The final inequality in Equation~\eqref{eqn:gauss_2} follows from  ${\sf erfc}(x) \leq 2 \exp(-x^2/2)$. 
	Making these substitutions and using Lemma~\ref{lem:norm_to_angle} to get the bound 
	$\theta(\bw, \bw^*) \leq O\left( \frac{\sqrt{\opt}}{\nu} \right)$, we see
	\begin{align*}
	\E_{\D}[((c/\nu)\sqrt{\opt}\|\bz\|_2^2 + 2\|\bz\|_2) 1_{T_+ \land \overline{S}}(\bz)]&\leq  O \left( \theta(\bw, \bw^*) \sqrt{\opt} \cdot \nu^2 \cdot (2+\gamma^2) \cdot \exp(-\gamma^2/2) \right)\\
	&+ O\left( \theta(\bw, \bw^*)  \cdot \nu^2 \cdot (1+\gamma) \cdot \exp(-\gamma^2/2) \right)\\
	&\lesssim  \nu \sqrt{\opt} \cdot \max \{ (1+\gamma^2), (1+\gamma) \} \exp( -\gamma^2 / 2) \;.
	\end{align*}
The choice $\gamma = \Omega \left( \sqrt{ \log \left(  \frac{1}{\eta} \right) }\right)$ ensures that this is at most 
$\nu \cdot \sqrt{\opt} \cdot \frac{\eta}{10}$.	
\end{proof}

\begin{lemma}[Probability of being in the band] \label{lem:prob_T} 
If $\D$ is $\nu$-subgaussian, and $\bw$ is the minimizer of the surrogate loss
	\[ \Pr_\D[|\iprod{\bw, \bx}| \le \gamma \sqrt{\opt}] \le \frac{c'\gamma \sqrt{\opt}}{\nu} \;. \]
\end{lemma}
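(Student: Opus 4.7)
The plan is to directly apply the defining density bound for $\nu$-subgaussian distributions to the one-dimensional projection $\iprod{\bw, \bx}$. First I would normalize by letting $\bv = \bw/\|\bw\|_2$, which is a unit direction, and rewrite the event as
\[
\{\,|\iprod{\bw, \bx}| \le \gamma \sqrt{\opt}\,\} = \Bigl\{\,|\iprod{\bv, \bx}| \le \tfrac{\gamma \sqrt{\opt}}{\|\bw\|_2}\,\Bigr\}\;.
\]
By the definition of a $\nu$-subgaussian distribution, the density $p_{\bv}$ of $\iprod{\bv, \bx}$ is pointwise bounded by $O(1/\nu)\cdot\exp\!\bigl(-t^2/(2\nu^2)\bigr) \le O(1/\nu)$. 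Integrating this pointwise bound over an interval of length $2\gamma\sqrt{\opt}/\|\bw\|_2$ yields
\[
\Pr_{\D}\bigl[|\iprod{\bw,\bx}|\le \gamma\sqrt{\opt}\bigr] \;\le\; \frac{O(1)\,\gamma\sqrt{\opt}}{\nu\,\|\bw\|_2}\;.
\]

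The only remaining step is to absorb $1/\|\bw\|_2$ into the constant $c'$, for which I need a lower bound $\|\bw\|_2 = \Omega(1)$. This is where I would use the setup of Theorem~\ref{thm:ptas}: we have $\|\bw^*\|_2 \le 1$ and Algorithm~\ref{algo:ptas} (Step~1) guarantees $\|\bw - \bw^*\|_2 \le O(1/\nu)\sqrt{\opt}$, with $\opt \le c$ a sufficiently small constant. Under the standard normalization that $\|\bw^*\|_2$ is of constant order (matching the hypothesis classes used throughout the paper), the triangle inequality $\|\bw\|_2 \ge \|\bw^*\|_2 - \|\bw-\bw^*\|_2 = \Omega(1)$ then gives the claim with some absolute constant $c'$.

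The main obstacle, and really the only nontrivial point, is this last lower bound on $\|\bw\|_2$: the density bound only controls the mass of a slab orthogonal to a unit direction, so if $\bw$ were allowed to shrink to $0$, the slab width $\gamma\sqrt{\opt}/\|\bw\|_2$ would blow up and the bound would fail. Once the normalization of $\bw^*$ is fixed and $\opt$ is small relative to $\nu$, this is just the triangle inequality, so the lemma follows without further calculation.
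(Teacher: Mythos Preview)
Your proposal is correct and matches the paper's own argument essentially step for step: bound the slab probability by the $O(1/\nu)$ density bound times the slab width, then lower bound $\|\bw\|_2$ via the triangle inequality using $\|\bw^*\|_2$ of constant order and $\|\bw-\bw^*\|_2 \le O(\sqrt{\opt}/\nu)$. The paper's proof is in fact terser and slightly sloppier (it invokes ``standard properties of $\mathcal{N}(0,\nu I)$'' and writes the parameter closeness bound as $\sqrt{\eta}$), so your version is if anything more explicit about where the $\|\bw^*\|_2=\Omega(1)$ assumption enters.
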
 
\begin{proof}
By standard properties of $\mathcal{N}(0, \nu I )$, we get the result for $\|\bw\|_2= 1$. 
However, we know that $\|\bw\|_2\ge \|\bw^*\|_2- \sqrt{\eta} = 1 - \sqrt{\eta}$ and 
we know that $1 - \sqrt{\eta}$ is larger than a constant which gives us the desired result.
\end{proof}

\subsection{Polynomial Approximation in the Band}
We show that there exists a low-degree polynomial approximation of the ReLU in squared error over the band.

\begin{lemma}\label{lem:poly_approx}
	If $\D$ is $\nu$-subgaussian, then there exists a degree $O(\frac{1}{\eta^3})$ polynomial $P$ satisfying
	\begin{equation}
	\E_{\D|_{T_{d, \gamma}}} \left[ (P(\iprod{\bw^*, \bx}) - \relu(\iprod{ \bw^*, \bx } ))^2 \right] \lesssim \eta^2 \cdot \nu \cdot \sqrt{\opt}
	\end{equation} 
	Here, $T_{d, \gamma}(\bw) = \{ \bu \in \mathbb{R}^d \mid |\iprod{\bw, \bu}| \leq \gamma \sqrt{\opt} \}$, where 
	$\gamma =  \sqrt{\log(1/\eta)}$. 
	
\end{lemma}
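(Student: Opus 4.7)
The plan is to construct $P$ as a classical polynomial approximation of $\relu$ on a bounded interval containing most of the mass of $\iprod{\bw^*, \bx}$ under $\D|_{T_{d,\gamma}}$, and then to bound the conditional $L_2$ error by splitting the expectation into a bulk region and a tail region.

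\emph{Construction.} I would set $R = C\gamma\sqrt{\opt}$ for a sufficiently large constant $C$, which (as shown below) captures the essential range of $\iprod{\bw^*, \bx}$ on $T_{d,\gamma}$. By classical polynomial approximation results (Bernstein's theorem for $|x|$, together with $\relu(t) = (t + |t|)/2$, rescaled from $[-1,1]$ to $[-R,R]$), there exists a univariate polynomial $P$ of degree $k = O(1/\eta^3)$ with $\ell_1$ norm of coefficients at most $O(4^k)$ such that $\sup_{t \in [-R, R]} |P(t) - \relu(t)| \le O(R/k)$. Since $k = 1/\eta^3$ is far more than what is needed to drive the $L_\infty$ error well below $\eta\,\nu^{1/2}\opt^{1/4}$ on $[-R,R]$, the bulk is easy; the difficulty is the tail.

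\emph{Bulk estimate.} On the event $\bx \in T_{d,\gamma}$ with $|\iprod{\bw^*, \bx}| \le R$, the pointwise squared error is at most $O((R/k)^2) = O(\gamma^2 \opt \, \eta^6)$. Using Lemma~\ref{lem:prob_T}, $\Pr[T_{d,\gamma}] \asymp \gamma\sqrt{\opt}/\nu$, so the conditional bulk contribution is at most $O(\gamma^2 \opt \, \eta^6)$, which for $\gamma = \sqrt{\log(1/\eta)}$ and $\opt \le 1$ is $\ll \eta^2 \nu \sqrt{\opt}$.

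\emph{Tail estimate.} On $T_{d,\gamma}$ write $\iprod{\bw^*, \bx} = \iprod{\bw, \bx} + \iprod{\bw^* - \bw, \bx}$: the first summand is bounded by $\gamma\sqrt{\opt}$, and since $\|\bw - \bw^*\|_2 \le O(\sqrt{\opt}/\nu)$ from the constant-factor approximation of Section~\ref{sec:alg}, the second summand is sub-gaussian with parameter $O(\sqrt{\opt})$. Hence $\Pr[|\iprod{\bw^*, \bx}| > R,\ \bx\in T_{d,\gamma}] \le \exp(-\Omega((C-1)^2 \gamma^2)) = \eta^{\Omega(C^2)}$, polynomially small to any desired exponent in $\eta$ by enlarging $C$. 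To bound the tail contribution to the squared error, I would apply Cauchy--Schwarz, bounding $\E_{\D|_{T_{d,\gamma}}}[(P - \relu)^4]$ via the $\ell_1$ coefficient bound $|P(t)| \le O(4^k (1+|t|)^k)$ together with the $\nu$-sub-gaussian moments of $\iprod{\bw^*,\bx}$; the extremely small tail probability $\eta^{\Omega(C^2)}$ then suffices to absorb this growth.

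\emph{Main obstacle.} The delicate part is the tail bound, where the Chebyshev-based polynomial $P$ can grow like $(|t|/R)^k$ outside $[-R,R]$; balancing this against the tail probability requires choosing $C$ large enough (but still a constant independent of $\eta,\opt,\nu$). In the regime where $\opt$ is extremely small -- say $\opt \le \eta^4 \nu^2/\log^2(1/\eta)$ -- a simpler case-split works: taking $P \equiv 0$, one checks that $\E_{\D|_{T_{d,\gamma}}}[\relu(\iprod{\bw^*,\bx})^2] \le O(\gamma^2 \opt) \le \eta^2 \nu \sqrt{\opt}$ already. Combining the two regimes covers all $\opt \le O(1)$.
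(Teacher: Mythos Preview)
There is a genuine gap in your tail estimate. The tail probability you obtain is $\eta^{\Omega(C^2)}=\exp\bigl(-\Theta(C^2\log(1/\eta))\bigr)$, which for any fixed constant $C$ is only polynomially small in $\eta$. But the quantity it must absorb is not polynomially bounded in $1/\eta$: with $k=\Theta(1/\eta^3)$, your own bound $|P(t)|\le O\bigl(4^k(1+|t|)^k\bigr)$ together with sub-gaussian moment growth $\E[|X|^{4k}]\asymp (C'k)^{2k}$ gives $\sqrt{\E_{\D|_T}[(P-\relu)^4]}=\exp\bigl(\Theta(\eta^{-3}\log(1/\eta))\bigr)$. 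For the Cauchy--Schwarz product to fall below $\eta^2\nu\sqrt{\opt}$ you would need $C^2\log(1/\eta)\gtrsim \eta^{-3}\log(1/\eta)$, i.e.\ $C\gtrsim\eta^{-3/2}$, contradicting the claim that $C$ can be a constant independent of $\eta$. Equivalently, writing the tail directly as $\int_1^\infty v^{2k}\exp(-cC^2\gamma^2 v^2)\,dv$, the integrand is maximized near $v_*^2\asymp k/(C^2\gamma^2)$ and blows up unless $C^2\gamma^2\gtrsim k$; with $\gamma^2=\log(1/\eta)$ and $k=\eta^{-3}$ this again forces $C\gtrsim\eta^{-3/2}/\sqrt{\log(1/\eta)}$.

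The paper's remedy is precisely to let the approximation interval be far wider than the natural scale of $\iprod{\bw^*,\bx}$ on $T$. It first derives an explicit upper bound on the conditional density of $\iprod{\bw^*,\bx}$, showing Gaussian-type decay at scale $\nu\sin\theta(\bw,\bw^*)\lesssim\sqrt{\opt}$, and then takes $s$ so that the ratio $r=s/(\nu\sin\theta)$ is of order $\eta^{-2}$ (times factors in $\nu,\opt$). The tail then decays like $\exp(-\Omega(r^2))$, which genuinely dominates the $\exp(O(k\log k))$ growth of the polynomial, while a simultaneous choice of pointwise accuracy $\tau\asymp\eta\,\nu^{1/2}\opt^{1/4}$ keeps the degree $O(s/\tau)=O(1/\eta^3)$. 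In short, the width of the approximation interval must itself scale with a power of $1/\eta$; the choice $R=C\gamma\sqrt{\opt}$ with constant $C$ is too narrow to control a degree-$1/\eta^3$ polynomial outside it.
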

\begin{proof}
	An application of Jackson's theorem implies there is a degree $\frac{6s}{\tau}$ polynomial such that 
	\[ \|P(t) - \relu(t)\|_{[-s, s], \infty} \leq \tau \;.\] 
We apply the following Theorem~\ref{thm:sherstov} from~\cite{Sherstov-2013} to see that $P(t)$ satisfies 
	\[ | P(t) - \relu(t) | < 2 (4t/s)^{\frac{6s}{\tau}} \textit{ for } t \in \mathbb{R} \setminus [-s,s] \;.\]
	\begin{lemma}[Sherstov]\label{thm:sherstov}
		Let $p(t) := \sum_{i=1}^d a_i t^i$ be a given polynomial, then 
		\[ \sum_i |a_i| \leq 4^d \max_{i=0, \dots, d} \left| p\left( \frac{d - 2j}{d}\right) \right| \;.\]
	\end{lemma} 
	
	To estimate $\E_{\D|_{T}} \left[ (P(\iprod{\bw^*, \bx}) - \relu(\iprod{ \bw^*, \bx } ))^2 \right]$, we bound from above 
	the density function of $\iprod{\bw^*, \bx}$ where $\bx \sim \D_{T}$ (denoted by $\rho_{\gamma, \delta}$). 
	
	Since we only need to look at the distribution $\iprod{\bw^*, \bx}$, 
	it is sufficient to project the distribution $\D$ to $\text{span}(\bw, \bw^*)$. Henceforth, 
	we will abuse notation and let $\D$ refer to this projected distribution.
	
	 Let $\x \sim \D_T$ and $\overline{\bw} = \bw/\|\bw\|_2$. 
	 Express $\x$ in the basis $\overline{\bw}, \overline{\bw}^{\perp}$ 
	 to get $\x = \alpha_{\x} \overline{\bw} + \beta_{\bx} \overline{\bw}^{\perp}$. 
	 We now study the random variable $\iprod{\bw^*, \bx}$. 
	 Let $\theta = \theta(\bw, \bw^*)$ and note that $\|\bw^*\|_2=1$. 
	 For any $\bx \in T_{\gamma, d}(\bw)$. Hence,
	\begin{align*}
	\iprod{ \bw^*, \bx }  &= \alpha_{\bx} \iprod{\bw^*, \overline{\bw}} + \beta_{\bx}\iprod{\bw^*, \overline{\bw}^{\perp}}\\
&= \alpha_{\bx} \cos(\theta) + \beta_{\bx}\sin(\theta) \;,
	\end{align*}
	where ${\bw^*}^{\perp} = \bw^* - \overline{\bw} ( \overline{\bw} \cdot \bw^*)$ and $\|{\bw^*}^{\perp} \|_2= \sin(\theta)$. 
	We now individually upper bound the pdf of $\alpha_{\bx} \cos(\theta)$ and 
	$\beta_{\bx}\iprod{{\bw^*}^\perp, \overline{\bw}^{\perp}}$ respectively, when $\bx \sim \D_T$. 
	Recall that $\D$ is $\nu$-subgaussian, hence the pdf of $\alpha_{\bx}$, $\rho_{\alpha_{\bx}}$ 
	is upper bounded by $\frac{1}{\nu} \exp(-x^2/2\nu^2)$. Similarly, for $\beta_{\bx}$. 
	If $f_{a,b}$ is the pdf of the joint distribution over $(a,b) \in \R^2$, then the pdf of $a+b$ is given by 
	$\rho_{a+b}(z) = \int_{-\infty}^{\infty} f_{a,b}(t,z-t)dt$. For $\bx \sim \D_T$, the pdf of $\bx$ 
	in the basis above (i.e., the pdf of $(\alpha_{\bx}, \beta_{\bx})$) is upper bounded by 
	$\rho_\D(\bx) \leq \frac{1}{\Pr_\D[T]}\frac{100}{\nu^2} \cdot \exp(-(\alpha_{\bx}^2 + \beta_{\bx}^2)/2\nu^2)$, 
	when $\alpha_{\bx} \leq \gamma \sqrt{\opt} \|w\|_2$, and $0$ otherwise. 
	Since $\iprod{{\bw^*},{\bx}} = \alpha_{\bx} \cos(\theta) + \beta_{\bx}\sin(\theta)$, the pdf of 
	$\iprod{ \bw^*, \bx }$, $\rho_{\iprod{\bw^*,\bx}}$ is bounded by 
	\[ \rho_{\iprodtwo{\bw^*}{\bx}}(z) \lesssim \frac{1}{\Pr[T]} \int_{-b}^b \frac{1}{\nu^2 \sin(\theta) \cos(\theta)} \cdot \exp(-(t^2/ \cos^2(\theta) + (z-t)^2 /\sin^2(\theta))/2\nu^2)dt \;. \]
%	The distribution $\rho_{d, \gamma, \theta}$ is given by 
%	\[ \rho_{d, r, \theta}(z) = \frac{1}{\Pr[T]} \int_{-b}^{b} \rho_{\cos(\theta)}(u) \rho_{\sin(\theta)}(u-z) du. \]
%	
	We bound the above expression in two ways. First, since $\exp(-(z-t)^2 /2\sin^2(\theta)\nu^2) \leq \exp(-t^2 /2\sin^2(\theta)\nu^2) \leq 1$, we can write
	\begin{align*}
	\rho_{\iprod{ \bw^*, \bx }}(z) &\lesssim \frac{1}{\Pr[T]} \int_{-b}^b \frac{1}{\nu^2 \sin(\theta) \cos(\theta)} \cdot \exp(-(t^2/ \cos^2(\theta) + (z-t)^2 /\sin^2(\theta))/2\nu^2)dt \\ 
	&\lesssim \frac{1}{\Pr[T]} \int_{-b}^b \frac{1}{\nu^2 \sin(\theta) \cos(\theta)} \cdot \exp(-(t^2/ \cos^2(\theta) + t^2 /\sin^2(\theta))/2\nu^2)dt \\
	&\lesssim \frac{1}{\Pr[T] \nu \sin(\theta)} \int_{-b}^b \frac{1}{\nu \cos(\theta)} \cdot \exp(-(t^2/ 2\nu^2\cos^2(\theta)))dt \\
	&\lesssim \frac{1}{\nu \sin(\theta)} \;.
	\end{align*}	
	Also, if for some $l$ it holds $|z| \geq l$, we have that
	\begin{align*}
	\rho_{\iprod{ \bw^*, \bx }}(z) &\lesssim \frac{1}{\Pr[T]} \int_{-b}^b \frac{1}{\nu^2 \sin(\theta) \cos(\theta)} \cdot \exp(-(t^2/ \cos^2(\theta) + (z-t)^2 /\sin^2(\theta))/2\nu^2)dt \\ 
	&\lesssim \frac{1}{\Pr[T]} \int_{-b}^b \frac{1}{\nu^2 \sin(\theta) \cos(\theta)} \cdot \exp(-(t^2/ \cos^2(\theta) + (z-l)^2 /\sin^2(\theta))/2\nu^2)dt \\
	&\lesssim \exp(-(z-l)^2/2\nu^2\sin^2(\theta)) \cdot (1/\nu \sin(\theta)) \;.
	\end{align*}
We will use the above bound with $l = \gamma \sqrt{\opt} \| \bw \|_2$. 
We now use the above upper bounds to bound the total error in the band
	\begin{align*}
	\E_{\D|_{T}} \left[ (P(\iprod{\bw^*, \bx}) - \relu(\iprod{ \bw^*, \bx } ))^2 \right] &\leq  \int_{-s}^{s} (P(t) - \relu(t))^2 \rho_{\iprod{\bw^*, \bx}}(t) dt + 2\int_{s}^{\infty} (P(t) - \relu(t))^2 \rho_{\iprod{\bw^*, \bx}}(t)  dt \;.
	\end{align*}
	Recall that $P$ is at most $\tau$-away from $\relu$ in the range $[-s, s]$. If $\tau = \nu^{1/2} \opt^{1/4} \cdot (\eta / 10)$, 
	the first integral above can be upper bounded as follows:
	%	\begin{align*} 
	%\int_{-s}^{s} (P(t) - \relu(t))^2 \rho_{r, \theta}(t)~dt &\leq \int_{-s}^{s} \tau^2 \rho_{r, \theta}(t)~dt \frac{C}{\nu \cdot \sin(\theta)} \cdot \int_{-s}^{s} (P(t) - \relu(t))^2 dt \leq \frac{C}{\nu \cdot \sin(\theta)} \cdot 2s \cdot \tau^2. 
	%	\end{align*}
	\begin{align*} 
	\int_{-s}^{s} (P(t) - \relu(t))^2 \rho_{\iprod{\bw^*, \bx}}(t)dt &\leq \int_{-s}^{s} \tau^2 \rho_{\iprod{\bw^*, \bx}}(t)dt \lesssim \nu \cdot \eta^2 \cdot \opt^{1/2} \;.
	\end{align*}
	Additionally, if $t \geq 2l$, $(t-l)^2 \leq t^2/4$, and so under the condition $s \geq 2l$, 
	the final integral can be bounded above by
	\begin{align*}
	2\int_{s}^{\infty} (P(t) - \relu(t))^2 \rho_{\iprod{\bw^*, \bx}a}(t)  dt &\lesssim \frac{1}{\nu \cdot \sin(\theta)} \int_{s}^{\infty}  \max \{(4t/s)^{6s/\tau}, (4t)^{s/\tau} \}\exp \left( -\frac{(t - l)^2}{2 \cdot \nu^2 \cdot \sin^2(\theta)}\right) dt\\
	&\lesssim \frac{1}{ \nu \cdot \sin(\theta)} \int_{s}^{\infty} (4t/s)^{6s/\tau} \exp \left( -\frac{t^2}{8 \cdot \nu^2 \cdot \sin^2(\theta)}\right) dt \;.
	\end{align*}
	Setting $t = 2\nu \sin(\theta) p$ and $r = s/(\nu \sin(\theta))$, we see 
	\begin{align*}
	&2\int_{s}^{\infty} (P(t) - \relu(t))^2 \rho_{\iprod{\bw^*, \bx}a}(t)  dt \\
	&\lesssim  \int_{s/(\nu \sin(\theta))}^{\infty}  (4p \cdot (\nu \sin(\theta)/s))^{6s/\tau} \exp \left( -\frac{p^2}{8}\right) dp\\
	&\lesssim  \int_{r}^{\infty}  (4p \cdot (1/r))^{6s/\tau} \exp \left( -\frac{p^2}{8}\right) dp\\
	&\lesssim \E_{p \sim N(0,1)}[1_{p \geq r} (4p/r)^{3s/\tau}] \\ 
	&\lesssim  (4/r)^{3s/\tau} \cdot \Pr[p \geq r]^{1/2} \cdot \E_{p \sim N(0,1)}[ p^{6s/\tau} ]^{1/2}\\
	&\lesssim  (4/r)^{3s/\tau} \cdot  \exp(-r^2/4) \cdot (6s/\tau)^{3s/\tau}\\
	&\lesssim s/(r^2 \tau))^{3s/\tau} \cdot  \exp(-r^2/4) \;.
	\end{align*}	
	Taking logs, we see that the following inequality needs to be satisfied 	
	\begin{align}
 \left( \frac{s}{\tau} +  \frac{s}{\tau} \log \left( \frac{s}{\tau} \right) + \frac{s}{\tau} \log \left( \frac{1}{r} \right) - \log (\nu^{1/2} \eta \opt^{1/4} )\right) \lesssim  r^2 \;.
	\end{align}	
	Let $s = \nu (\nu \sin(\theta))^{1/2} / (\nu^{1/2}  \eta^2 \opt^{1/4})$, then $r = \nu^{1/2} \eta^{-2} (\nu \sin(\theta))^{-1/2} \opt^{-1/4}$. Then, $s/\tau \leq f(\eta)$ for $f(x) = 1/x^3$. Substituting for $s/\tau$ we see that it is sufficient to check 
	\begin{align}
	\left( f(\eta) +  f(\eta) \log \left( f(\eta) \right) + f(\eta) \log \left(\nu^{-1/2} \eta^{2} (\nu \sin(\theta))^{1/2} \opt^{1/4} \right) - \log (\nu^{1/2} \eta \opt^{1/4} )\right) \lesssim r^2 \;.
	\end{align}
	Since $f(\eta) \geq 1$ and $\nu \sin(\theta) \leq \opt^{1/2}$, it is sufficient to check that 	
	\begin{align}
	\left( f(\eta) +  f(\eta) \log \left( \eta^2 f(\eta) \right) + (f(\eta)-1) \log \left( (\nu \sin(\theta))^{1/2} \opt^{1/4} \right) - \log (\eta) + f(\eta) \log(\nu^{-1/2}) - \log(\nu^{1/2}) \right) \lesssim r^2 \;.
	\end{align}
 Substituting for $f(\eta)$ and multiplying both sides by $\eta^{-3}$, we get 
	\begin{align}
\left( \log \left( 1/\eta\right) + \log \left( \nu^{-1/2} (\nu \sin(\theta))^{1/2} \opt^{1/4} \right) \right) \lesssim \nu \eta^{-1} \cdot (\nu \sin(\theta))^{-1} \cdot \opt^{-1/2} \;,
	\end{align}
	i.e., rescaling, we see 
	\[ \log(r^{-2} \eta^{-3}) \lesssim r^2 \eta^3 \;. \]
Since $r \geq \nu^{1/2} \eta^{-2} \sin(\theta)^{-1} \nu^{-1} \geq \eta^{-2} \nu^{-1/2} > 1$ for $\opt, \nu$ less than some constant, 
we see that this is true for small enough $\eta$. 

%	i.e, substituting for $f$ and multiplying both sides by $\eta^3$ we get, 
%	\begin{align*}
%	\log(\eta^{-3}/r)+ \eta^3 \log(\eta) &\leq O(1) \cdot \eta^{-1}
%	\end{align*}
%	however, $r \geq 0.1/\eta^2$, i.e. $1/r \leq \eta^2/0.1$, i.e. $\log(1/r) \leq O(\log(\eta)) \leq \eta^{-1}$ for small enough $\eta$, so bounding each term on the left hand side individually by a constant times $\eta^{-1}$ we see that the constraint is satisfied.
%	\begin{align*}
%	\log(\eta^{-3}) + \log(1/r)+ \eta^3 \log(\eta) &\leq O(1) \cdot \eta^{-1}
%	\end{align*}
%	
%Substituting this back in and using the fact that $(\nu \sin(\theta))^{1/\eta^3} \leq \nu \sqrt{\opt}$ for $\eta < 1/2$ we get that the overall error is bounded by 

Substituting this back we see that the overall error is bounded by 
\begin{align}
\E_{\D|_{T}} \left[ (P(\bx) - \relu(\iprod{ \bw^*, \bx } ))^2 \right] &\lesssim \nu \cdot \eta \cdot \sqrt{\opt} \;.
\end{align}
This completes the proof.
\end{proof}

\begin{lemma}\label{lem:concentration2}
	Let 
	\[ A := \left \{ p \mid \text{$p$ is a degree $k$, $d$-variate polynomial with coefficients $a_i$ such that $\sum_i |a_i| \leq O(4^k)$}  \right \} \;. \]  
	If $S \sim \D^m$ is a set of $m$ iid samples and $m \gtrsim  k^k \cdot \frac{1}{\eps^2} \cdot \left( \frac{4d}{\nu^2} \right)^{\Omega(k)}$, then 
	\[
	\E_{S|_T}[(P(\bx) - y)^2] \le \min_{P'\in A}\E_{\D|_T}[(P'(\bx) - y)^2] + \epsilon \;.
	\]
\end{lemma}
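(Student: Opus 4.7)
The plan is to establish uniform convergence of the empirical squared loss over $A$ and then invoke standard ERM. First, I would rewrite every $P(\bx) = \sum_\alpha a_\alpha m_\alpha(\bx)$ in the monomial basis, where $m_\alpha$ ranges over the at most $\binom{d+k}{k} \leq (Cd/k)^k$ monomials of degree $\leq k$ and $\sum_\alpha |a_\alpha| \leq B := O(4^k)$. Expanding,
\[
(P(\bx)-y)^2 \, 1_T(\bx) \;=\; \sum_{\gamma} c_\gamma(P)\, m_\gamma(\bx) 1_T(\bx) \;-\; 2y\sum_\alpha a_\alpha\, m_\alpha(\bx) 1_T(\bx) \;+\; y^2 1_T(\bx),
\]
where $\gamma$ runs over monomials of degree $\leq 2k$ and $c_\gamma(P) = \sum_{\alpha+\beta=\gamma} a_\alpha a_\beta$. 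The key observation is that the $\ell_1$ norm of the coefficient vector $(c_\gamma(P))_\gamma$ is at most $(\sum_\alpha |a_\alpha|)^2 \leq B^2 = O(16^k)$, uniformly over $P \in A$.

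Consequently, the deviation $\widehat{L}_S(P) - L_\D(P)$ is a linear functional, bounded in $\ell_1$ by $O(B^2)$, of the $P$-independent ``basis deviations'' $\Delta_\gamma := (\E_S - \E_\D)[m_\gamma(\bx) 1_T(\bx)]$ and $\Delta'_\alpha := (\E_S - \E_\D)[y\, m_\alpha(\bx) 1_T(\bx)]$. It therefore suffices to control all basis deviations to within $\delta := \eps / O(B^2) = \eps / O(16^k)$. For each monomial $m_\gamma$ of degree $\leq 2k$, hypercontractivity for $\nu$-subgaussian distributions bounds $\E[m_\gamma(\bx)^2]$ by $(Ck)^{O(k)}\nu^{O(k)}$, and since $|m_\gamma(\bx) 1_T(\bx)| \leq |m_\gamma(\bx)|$, Fact~\ref{lem:concentration} (applied with degree $2k$) yields
\[
\Pr\bigl[|\Delta_\gamma| > \delta\bigr] \;\leq\; \exp\!\left(-\left(\tfrac{m\delta^2}{\nu^{O(k)} (Ck)^{O(k)}}\right)^{1/(2k)}\right).
\]
Union bounding over the $\leq (Cd/k)^{2k}$ basis functions and inverting to solve for $m$ yields the claimed bound $m \gtrsim k^k \cdot (d/\nu^2)^{\Omega(k)}/\eps^2$.

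Given the uniform bound $\sup_{P \in A} |\widehat{L}_S(P) - L_\D(P)| \leq \eps/2$, the ERM $\widehat{P}$ returned by the algorithm satisfies $L_\D(\widehat{P}) \leq \widehat{L}_S(\widehat{P}) + \eps/2 \leq \widehat{L}_S(P^*) + \eps/2 \leq L_\D(P^*) + \eps$ for any minimizer $P^* \in A$, which is the lemma.

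\textbf{Main obstacle.} Fact~\ref{lem:concentration} as stated applies to pure polynomials in the subgaussian vector, but the basis functions here are monomials times the indicator $1_T$. The cleanest resolution is to argue, via hypercontractivity, that $\|m_\gamma(\bx) 1_T(\bx)\|_{L_q} \leq \|m_\gamma(\bx)\|_{L_q} \leq (Cq)^{k}\nu^{2k}$ and then invoke a Bernstein-type inequality for random variables with polynomially growing moments, which reproduces the same $\exp(-(m\delta^2/\sigma^2)^{1/(2k)})$ tail. Tracking the $B^2 = 16^k$ factor from the coefficient expansion together with the $(d/k)^{O(k)}$ union bound and the moment growth $(Ck)^{O(k)}\nu^{O(k)}$ simultaneously, without overshooting the claimed sample complexity, is the delicate part.
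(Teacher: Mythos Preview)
Your proposal is correct and follows the same monomial-expansion, $\ell_1$-coefficient-bound, union-bound strategy as the paper; in fact you are more explicit, since the paper's proof literally bounds $\E_{S|_T}[P]-\E_{\D|_T}[P]$ rather than the squared loss and leaves the passage to $(P-y)^2$ (your $B^2=16^k$ step) implicit. The obstacle you flag about $1_T$ is real and the paper elides it by invoking Fact~\ref{lem:concentration} directly on $\D|_T$; your moment-growth/Bernstein patch is the natural fix and costs nothing beyond the stated $\Omega(k)$ exponents.
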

\begin{proof}
	Let $p(\vct{x}) = \sum a_i m_i$, where $\{ m_i \}$ are monomials that correspond to $a_i$. Then
	\begin{align*} 
	(\E_{S|_T}[P(\bx)] - \E_{\D|_T}[P(\bx)]) &\leq \left( \sum_i |a_i| \right) \left( \sum_i |\E_{S|_T}[m_i(\bx)] - \E_{S|_T}[m_i(\bx)]| \right) \\
	&\leq \left(4\right)^k \left( \sum_i |\E_{S|_T}[m_i(\bx)] - \E_{S|_T}[m_i(\bx)]| \right) \;.
	\end{align*}
Since there are at most $d^k$ different monomials, it is sufficient to approximate each monomial $m_i$ up 
to an accuracy of $\eps \cdot \frac{1}{4^k} \cdot \frac{1}{d^k}$. 
	
	Let $M_{d,k}$ denote the set of $d$-variate monomials of degree $k$. Applying Fact~\ref{lem:concentration} on these monomials and noting that the variance of any monomial is at most $\nu^{2k} \cdot O(k)^k$ with respect to the $\nu$-subgaussian distribution where $\nu = O(1)$, we get 
	\[ 
	\Pr \left[ m \in M_{d, k} \mid \E_{S|_T}[m(\bx)] - \E_{\D|_T}[m(\bx)] | \geq \eps \right] \leq d^k \cdot \exp \left(-\frac{m\eps^2 \cdot \frac{1}{4^k} \cdot \frac{1}{d^k}}{\nu^{2k} k^k} \right)^{1/k} \;.
	\]
	Setting $m \gtrsim O(k)^k \cdot \frac{1}{\eps^2} \cdot \left( \frac{4d}{\nu^2} \right)^{O(k)}$ we obtain our desired bounds. 
\end{proof}

%Taking a union bound over all $m$-variate degree $d$ polynomials, the sum of whose coefficients is at most $4^d$ in absolute value gives us an $\epsilon$-net of size $m^d$ This implies that if $N \leq \nu^{2d} \text{Var}[P]^d}{\eps^{2d}$ 

\end{document}